\DeclareMathOperator*{\argmin}{arg\,min}
\DeclareMathOperator*{\argmax}{arg\,max}
\newtheorem{thm}{Theorem}[section]
\newtheorem{lemma}[thm]{Lemma}
\newtheorem{prop}[thm]{Proposition}
\newtheorem{cor}[thm]{Corollary}
\newtheorem{defn}[thm]{Definition}
\newtheorem{assumption}{Assumption}
\newcommand{\cellbest}{\bf}
\newcommand*\samethanks[1][\value{footnote}]{\footnotemark[#1]}
\newcommand{\figcaption}[1]{\def\@captype{figure}\caption{#1}}
\newcommand{\tblcaption}[1]{\def\@captype{table}\caption{#1}}
\title{When can we formulate the Out-of-Distribution Generalization problem as an invariance problem ?}
\author{%
  Masanori Koyama \thanks{Equal contribution} \\
  Preferred Networks \\
  Tokyo, Japan \\
  \texttt{masomatics@preferred.jp} \\
  \And
  Shoichiro Yamaguchi \samethanks \\
  Preferred Networks \\
  Tokyo, Japan \\
  \texttt{guguchi@preferred.jp} \\
}
\begin{document}

\maketitle

\begin{abstract}
\label{abst}
The goal of Out-of-Distribution (OOD) generalization problem is to train a predictor that generalizes on all possible environments.
Popular approaches in this field use the hypothesis that such a predictor is an \textit{invariant predictor} that captures the mechanism that remains constant across environments.
While these approaches have been experimentally successful in various case studies, there is still much room for the theoretical validation of this hypothesis.
This paper presents a set of theoretical conditions necessary for an invariant predictor to achieve the OOD optimality.
In other words, we formalize when we can formulate the OOD generalization problem as a problem of finding a good invariant predictor. 
Our theory not only applies to non-linear cases, but also generalizes the necessary condition used in \cite{rojas2018invariant}.
We also derive Inter-environment Gradient Alignment algorithm from our theory and demonstrate its competitiveness on MNIST-derived benchmarks as well as on two of the three \textit{Invariance Unit Tests} proposed by \cite{aubinlinear}.
\end{abstract}

\section{Introduction}
\label{sec:intro}
In machine learning, a predictor model is usually evaluated based on its performance on unseen data.
However, it is also common to assume that the test data is collected in i.i.d fashion from the same distribution from which the training data was sampled, when in fact such an assumption does not always hold in applications.
A naively trained supervised learner may therefore perform poorly on a dataset drawn from outside the training distribution \cite{damour2020underspecification, geirhos2020shortcut,  shen2018causally, storkey2009training}. 
This problem has been known as Out-of-Distribution generalization problem (OOD problem).

Studies of OOD problem like \cite{arjovsky2019invariant, buhlmann2018invariance, rojas2018invariant} treat the difference between the test distribution and the training distribution as an effect of a possibly unknown environmental variable $\mathcal{E}$. 
When the task is to predict the target variable $y$ associated to the input variable $x$, this framework assumes that every dataset is sampled from $p(x, y | \epsilon)$ with a different realization of the environmental variable $\epsilon$ \footnote{Throughout, we follow the notation in \cite{Durrett} and use an uppercase letter to represent a random variable, and a lowercase letter to represent its realization. That is, $x$ is a realization of $X$, $y$ is a realization of $Y$ and $\epsilon$ is a realization of $\mathcal{E}$.}.
In other words, if there are two datasets $D_1$ and $D_2$ collected from two different environments, the framework assumes that they are sampled respectively from $p(x, y | \epsilon_1)$ and $p(x, y | \epsilon_2)$ with different values of $\epsilon_1$ and $\epsilon_2$. 
If $l(f(x), y)$ represents the error between $y$ and the prediction $f(x)$,  
we may formulate the OOD problem as the problem of finding $f^*$ that satisfies
\begin{align}
f^* = \argmin_{f} \max_{\epsilon \in \textrm{supp}(\mathcal{E})}  \mathbb{E}_{X, Y}[l(f(X), Y) | \epsilon] 
\label{eq:objective_informal}
\end{align}
where $\textrm{supp}(\mathcal{E})$ is the support of the environmental variable $\mathcal{E}$ or the space of all possible test environments \cite{arjovsky2019invariant, buhlmann2018invariance, rojas2018invariant}.
We say that such $f^*$ is \textit{OOD-optimal}. 
To solve the OOD problem \eqref{eq:objective_informal} directly, we need to evaluate a model on all environments in $\textrm{supp}(\mathcal{E})$, which also includes the test environments themselves.

A strategy that has gained much traction in recent years is to seek the solution of the OOD problem \eqref{eq:objective_informal} from the set of invariant predictors that does not depend on the choice of the environment $\epsilon$. 
Experimental results on various works on invariant predictors \cite{arjovsky2019invariant, ilyas2019adversarial, li2018advdomain, rojas2018invariant} show this strategy to be promising.
This strategy is indeed based on the belief that solving \eqref{eq:objective_informal} is essentially equivalent to the problem of finding a good invariant predictor (the \textit{invariance problem}).  
But is this belief 'provably' correct? 
In particular, is there some guarantee that \textit{a good invariance predictor} can solve the OOD problem for a ubiquitous problem like image classification, in which the form of invariance is complex and the true underlying model is non-linear? 
If so, what is theoretically necessary for an invariant predictor to be able to solve the problem \eqref{eq:objective_informal}?

The idea of \textit{invariance problem} has its roots in the study of causality, which uses Directed Acyclic Graph (DAG) to formalize the invariance \cite{buhlmann2018invariance,peters2017elements}.
To prove a set of sufficient conditions for their invariant predictor to be OOD optimal, \cite{prevent19a} used ``\textit{causal DAG to encode prior information about how the distribution of data might change}".
\cite{rojas2018invariant} also proved the OOD optimality of an invariant predictor that is inspired from the ideas of causality.
Essentially, most causality inspired methods define an invariant predictor with a discrete subset of \textit{observable} variables (e.g., a node-set of Causal DAG) that is causal to the target variable \cite{Peters2016jrssb, rojas2018invariant, prevent19a}. 
However, with such a framework, it can be difficult to theoretically investigate the conditions of OOD optimality on datasets in which there is no universal subset of \textit{observable variable} that is causal.
Such a case may arise on datasets involving images, in which each observable variable is a vector of RGB values at a single pixel.  

Recently, Invariant Risk Minimization (IRM) \cite{arjovsky2019invariant} changed the landscape of the OOD research by introducing an invariant predictor that is defined without specifying a fixed discrete subset of observable variables.
\cite{arjovsky2019invariant}'s  definition of the invariant predictor does not use a priori knowledge of the underlying true model, such as DAG. 
For this reason, IRM as a method can be applied to practically any type of dataset. 
\cite{arjovsky2019invariant} has also shown that their invariant predictor agrees with the solution of the OOD problem \eqref{eq:objective_informal} when the true underlying model is linear.
However, whether their invariant predictor can achieve the OOD optimality on a nonlinear model is unknown\begin{footnote}{This claim was correct at the time of the publication of the early version of this paper on the preprint server~[Anonymous, 2020]. Recently, this problem was partially addressed by a follow-up preprint study that cites our paper. To keep our anonymity, we omit the explicit citation of the study.}\end{footnote}.
The question still remains as to whether it is possible to formulate \eqref{eq:objective_informal} as a \textit{invariance problem} when the system is nonlinear and the observable variables are not sorted in a way that respects some causal system.

The purpose of this study is to extend the range of problems on which the concept of invariance can be used to formulate the OOD generalization problem.
By presenting a novel theoretical condition for the OOD optimal invariant predictor, we show that we can in fact formulate the OOD problem as a \textit{invariance problem} in a more general setting than those considered before.   
In particular, we show that this is possible not only on the cases in which there isn't a causal variable that is representable as a subset of observable variables, but also on the cases in which the true underlying model is nonlinear.
Our theoretical condition, which we call \textit{controllability condition}, also generalizes the condition proposed in the causality-inspired work of \cite{rojas2018invariant}.
Moreover, when an additional set of conditions is met,  we can also formulate the OOD problem \eqref{eq:objective_informal}
as \textit{Maximal Invariant Predictor} (MIP) problem,  an information theoretic objective function with invariant constraint.
We discuss these theoretical results in Section \ref{sec:theory}.
In Section \ref{sec:method}, we also present Inter-environment Gradient Alignment (IGA) algorithm inspired from MIP, and discuss its ability to extrapolate in practice.
In Section \ref{sec:exp}, we show that our IGA performs competitively on benchmark experiments, including two of the unit-tests presented in \cite{aubinlinear}.

\section{Theory}
\label{sec:theory}
In this section, we present our main theoretical results about the question of when we can formulate the OOD problem \eqref{eq:objective_informal} as an \textit{invariance problem}, or a problem of finding a good invariant predictor.
We first provide our answer in a purely theoretical form in Section \ref{subsec:when_inv}, and then provide an answer with an objective function in Section \ref{subsec:MIP}. 
We begin this section with a set of notations. We use $X$ and $Y$ respectively to represent the input and the target random variable, and use $\mathcal{E}$ to represent the environmental random variable. 
The environmental variable $\mathcal{E}$ may represent any continuous variable that affects both the input variable $X$ and the output variable $Y$.
If $X$ is a picture of an animal and $Y$ is the animal label, the environmental variable $\mathcal{E}$ may be an aggregate variable whose features contain environmental factors such as weather and lighting conditions, for example.
In the context of causal studies \cite{rojas2018invariant, buhlmann2018invariance}, the environmental variable $\mathcal{E}$ may represent the set of all variables that are not causal to the output variable $Y$.
For a predictor $f$ that maps the range of $X$ to the range of $Y$, we measure its performance by KL or a symmetric Bregman divergence $\ell(f(X), Y) $ between $f(X)$ and $Y$.
Our goal in this section is to find a condition required for an invariant predictor to become the solution of the OOD problem \eqref{eq:objective_informal} based on this performance measure. 
In the next subsection, we present the concept of \textit{invariant predictor} more formally.

\subsection{Preliminary remarks on the invariant predictors used in causality-inspired researches}
\label{subsec:prelimi}
We define an invariant predictor to be a predictor that can be written as a function of a feature $h(X)$ that satisfies the \textit{some} invariance property.
The choice of invariance property that we focus in this study borrows much from those used in causality-inspired works \cite{buhlmann2018invariance, rojas2018invariant}.

If $\mathcal{X} = \{X_i; i=1,...d\}$ is the set of \textit{observable} variables,  many causality-based OOD studies  \cite{buhlmann2018invariance, rojas2018invariant} assume that there is a subset $S \subseteq \{ 1, ... d\}$ such that the conditional distribution of $Y$ given $\{X_j ; j \in S\}$ remains constant across all datasets.
\cite{buhlmann2018invariance, rojas2018invariant} define such $X_S = \{X_j ; j \in S\}$ as invariant/stable feature, and predict $Y$ using a function of $X_S$. 
If we use $X$ to denote the tensor whose $i$th coordinate is $X_i$, we may write $X_S$ as $M_S \odot X$,  or the coordinate-wise product between $X$ and the binary vector $M_S \in \{0, 1\}^d$ whose $j$th coordinate is $1$ whenever $j \in S$ and is $0$ otherwise. 
Thus, we may say that many causality-inspired works construct their invariant predictor as a function of a feature of the form $M_S \odot X$ that satisfies the invariance property $P(Y |  M_S \odot X, \epsilon) = P(Y |  M_S \odot X)$.

In our study, we also seek a variable that satisfies a similar invariance property, but we would like to extend our search-space to include the nonlinear features of $X$ that cannot be expressed as $M_S \odot X$ for a fixed $M_S$.
That is, we look for a generic nonlinear feature $h(X)$ that satisfies the invariance property $P(Y | h(X) ,\epsilon) = P(Y| h(X))$.
Thus, in this work, we say that $h(X)$ is an \textit{invariant feature} if it satisfies the invariance property $P(Y | h(X) ,\epsilon) = P(Y| h(X))$.
We then use $\mathbb{E}[Y| h(X)]$ with such $h(X)$ as our choice of the invariant predictor (which is, by definition, a function of $h(X)$.)
We will first show that, under the condition we call \textit{controllablity condition}, 
the invariant predictor $\mathbb{E}[Y| h(X)]$ with nonlinear invariant $h$ can solve the OOD problem \eqref{eq:objective_informal} even when the underlying true model is also nonlinear.
For more discussion of the types of invariance, please see Section \ref{sec:discussion}.

\subsection{Controllability Condition}
\label{subsec:when_inv}
Our following theoretical result provides a set of necessary conditions for the invariant predictor $\mathbb{E}[Y| h(X)]$ to be able to solve the OOD problem \eqref{eq:objective_informal}.
This result not only serves as our initial answer to the question of when the OOD problem can be written as \textit{invariance problem}, but it also generalizes the previously discovered conditions like those discussed in \cite{rojas2018invariant}.

\begin{thm}[\textbf{Controllability condition (Informal)}]
\mbox{} \\ 
We say that an invariant feature $h(X)$ satisfies the \textbf{controllability condition} if, for all $\epsilon \in \textit{supp}(\mathcal{E})$ there exists $\tilde \epsilon$ in $\textit{supp}(\mathcal{E})$,  a modified version of $\epsilon$ such that $ p(Y| X, \tilde \epsilon) =  p(Y| h(X), \tilde \epsilon)$.
If an invariant feature $h(X)$ satisfies the controllability condition, the invariant predictor $\mathbb{E}[Y| h(X)]$ solves the OOD problem \eqref{eq:objective_informal}.
\label{thm:controllability_informal}
\end{thm}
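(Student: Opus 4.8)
The plan is to show that the invariant predictor $f_h(X):=\mathbb{E}[Y\mid h(X)]$ attains, at its own worst-case environment, the Bayes risk of that environment, while no predictor whatsoever can beat the Bayes risk of any single environment; together these force $f_h$ to be min--max optimal. Write $R_\epsilon(g):=\mathbb{E}[\ell(g(X),Y)\mid\epsilon]$ for the risk of a predictor $g$ in environment $\epsilon$, so that \eqref{eq:objective_informal} asks for a minimizer of $\sup_{\epsilon}R_\epsilon(\cdot)$ over predictors.

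First I would record two elementary facts about the loss $\ell$. Since $\ell$ is KL or a symmetric Bregman divergence, for every $\epsilon$ and every measurable $g$ one has the Bayes lower bound $R_\epsilon(g)\ge \mathbb{E}\big[\ell(\mathbb{E}[Y\mid X,\epsilon],Y)\mid\epsilon\big]$, with equality when $g(X)=\mathbb{E}[Y\mid X,\epsilon]$. Second, conditioning first on $h(X)$, the risk of any function of $h(X)$ — in particular $f_h$ — can be written $R_\epsilon(f_h)=\mathbb{E}_{h(X)\sim p(\cdot\mid\epsilon)}\big[\rho(h(X))\big]$, where $\rho(z):=\mathbb{E}\big[\ell(\mathbb{E}[Y\mid h(X)=z],Y)\mid h(X)=z\big]$; since $h$ is an invariant feature, $p(Y\mid h(X),\epsilon)=p(Y\mid h(X))$, so $\rho$ does \emph{not} depend on $\epsilon$, and therefore $R_\epsilon(f_h)$ depends on $\epsilon$ only through the marginal law of $h(X)$ under $\epsilon$. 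Spelling out the formal controllability condition, ``$\tilde\epsilon$ is a modified version of $\epsilon$'' is (in particular) required to leave that marginal law unchanged; combined with the previous sentence this yields $R_{\tilde\epsilon}(f_h)=R_\epsilon(f_h)$. Finally, invariance also gives $\mathbb{E}[Y\mid h(X),\epsilon]=\mathbb{E}[Y\mid h(X)]=f_h(X)$ for every $\epsilon$.

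Next I would fix an arbitrary competitor $g$ and a slack $\delta>0$, and pick $\epsilon^*\in\mathrm{supp}(\mathcal E)$ with $R_{\epsilon^*}(f_h)\ge \sup_{\epsilon}R_\epsilon(f_h)-\delta$ (an exact maximizer when the supremum is attained). Applying the controllability condition to $\epsilon^*$ produces $\tilde\epsilon^*\in\mathrm{supp}(\mathcal E)$, a modified version of $\epsilon^*$, with $p(Y\mid X,\tilde\epsilon^*)=p(Y\mid h(X),\tilde\epsilon^*)$. Then I would chain:
\begin{align*}
\sup_{\epsilon}R_\epsilon(g)\ \ge\ R_{\tilde\epsilon^*}(g)\ &\ge\ \mathbb{E}\big[\ell(\mathbb{E}[Y\mid X,\tilde\epsilon^*],Y)\mid\tilde\epsilon^*\big] \\
&=\ \mathbb{E}\big[\ell(\mathbb{E}[Y\mid h(X),\tilde\epsilon^*],Y)\mid\tilde\epsilon^*\big] \\
&=\ \mathbb{E}\big[\ell(\mathbb{E}[Y\mid h(X)],Y)\mid\tilde\epsilon^*\big] \\
&=\ R_{\tilde\epsilon^*}(f_h)\ =\ R_{\epsilon^*}(f_h)\ \ge\ \sup_{\epsilon}R_\epsilon(f_h)-\delta ,
\end{align*}
using in order: dropping to a single environment; the Bayes lower bound; the controllability identity; invariance of $h$; marginal-preservation of the modification; and the choice of $\epsilon^*$. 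Letting $\delta\downarrow 0$ gives $\sup_{\epsilon}R_\epsilon(g)\ge\sup_{\epsilon}R_\epsilon(f_h)$ for every $g$; since $f_h$ is itself feasible, this places $f_h=\mathbb{E}[Y\mid h(X)]$ in $\argmin_{f}\sup_{\epsilon}R_\epsilon(f)$, i.e.\ it solves \eqref{eq:objective_informal}.

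I expect the main obstacle to be making the clause ``$\tilde\epsilon$ is a modified version of $\epsilon$'' precise: the whole argument hinges on the modification being allowed to alter the mechanism $p(Y\mid X,\cdot)$ enough to make $h(X)$ sufficient for $Y$, while still preserving the marginal law of $h(X)$ so that the worst-case risk of $f_h$ is undisturbed — this is exactly where the formal definition (and the link to the condition of \cite{rojas2018invariant}) does its work. A secondary point is the usual care when the supremum over $\mathrm{supp}(\mathcal E)$ is not attained, handled here by the $\delta$-slack; and one should double-check the integrability assumptions and the argument-order convention under which the stated loss class enjoys the Bayes-optimality property used above.
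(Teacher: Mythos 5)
Your proposal is correct and follows essentially the same route as the paper's proof of the formal statement (theorem \ref{thm:controllability}): both modify the environment so that $h(X)$ becomes sufficient for $Y$, invoke the Bregman/Bayes optimality of the conditional expectation in that modified environment, and transfer the risk back to the original environment using invariance together with the fact that the modification preserves the law of $h(X)$ --- which is exactly what the paper's lemma \ref{lemma:e_psi_ignore} and the $(\epsilon_\phi,\epsilon_\psi)$ decomposition from the functional representation lemma deliver, i.e.\ precisely the point you correctly flagged as the crux of ``modified version of $\epsilon$.'' The only organizational differences are that you apply the Bayes lower bound at $\sigma(X)$ directly and modify only a near-worst-case environment of $f_h$ (hence your $\delta$-slack), whereas the paper projects the competitor onto $\sigma(h(X))$ via Jensen's inequality and proves the pointwise domination $\mathbb{E}[D(Y,f(X))\mid\epsilon']\ge\mathbb{E}[D(Y,f^*(X))\mid\epsilon]$ for every $\epsilon$, which yields the sup-inequality without any slack argument.
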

Thus, when there exists an invariant feature that satisfies the controllability condition, we can solve the OOD problem by finding the very invariant feature.
Just like \cite{arjovsky2019invariant}, our result is \textit{not} stated in terms of some known model representation like Causal DAG, and
it can therefore be applied to a wide range of situations. 
The proof of this statement uses measure theoretic probability and a variant of functional representation lemma \cite{achille2017emergence, Darmois, gamalNetworkinfo, peters2012identifiability}. 
For the formal statement of this theorem and its proof, please see Appendix \ref{appsec:proof}\begin{footnote}{
We emphasize that a feature $h$ satisfying the controllability condition is \textbf{\textit{not}} a feature that satisfies  $p(Y | X, \epsilon) \! = \! p(Y | h(X) , \epsilon) \! =  \! p(Y| h(X))$ for \textit{all} $\epsilon$. 
However, for a $h$ that satisfies the controllability condition, there is always a way to modify any $\epsilon \in \textit{supp}(\mathcal{E})$ into another environment $\epsilon' \in \textit{supp}(\mathcal{E})$ such that $p(Y | X, \tilde \epsilon) = \! p(Y | h(X) , \tilde \epsilon)$. 
The gist of the controllability condition is that we can define an optimality condition for $h$ without requiring $X \! \perp \! Y\!|(h(X), \epsilon)$ for all $\epsilon$.}\end{footnote}.

In Section \ref{subsec:MIP}, we will provide more concrete variant of \eqref{eq:objective_informal} defined with information theoretic objective (Maximal Invariant Predictor objective / MIP). 
However, although the result \eqref{eq:objective_informal} is abstract in its raw form, it is insightful in its own light.
Before we introduce MIP in Section \ref{subsec:MIP},  we therefore describe the intuition behind the controllability condition as well as its relation to \cite{rojas2018invariant}.

\subsubsection{Intuitive explanation of Controllability condition (Animal Classification example) }
Consider the example task of identifying the label $Y$ of the animal captured in a picture $X$ (e.g., Figure \ref{fig:dog_original}).
Each observation of $X$ is influenced by the environment variable $\mathcal{E}$, which may include factors such as the weather, lighting condition and background. 
A possible OOD goal of this example task is to find a predictor $f$ that can predict the label $Y$ from any picture $X$ in which most of the animal body is clearly visible.
Thus, $\textrm{supp}(\mathcal{E})$ in our consideration contains a wide range of environments in which a photographer can take a clear shot of an animal.
Our intuition tells us that the OOD-optimal invariant predictor in this example shall be a function of the feature $h^*(X)$ that consists exclusively of the appearance features of the animal in $X$; a function which does not depend on the background.
We will describe why such $h^*(X)$ can satisfy the controllability condition.

\textbf{How does $h^*$ satisfy the controllability condition?:}
For the sake of simplicity, let us suppose that \textit{wind condition} and \textit{lighting condition} are the only environmental features of $\mathcal{E}$ that affect the animal appearance $h^*(X)$ in the dataset. 
Consider then the picture of Figure \ref{fig:dog_original}, and let us use $\epsilon$ to denote the realization of the environmental variable $\mathcal{E}$ used to create Figure \ref{fig:dog_original}.
Note that, if we reproduce the wind-condition and the lighting condition of Figure \ref{fig:dog_original} in a photographic studio and take a picture of the same dog in front of a green-screen, we can take a picture like Figure \ref{fig:dog_green}.
Let us use $\tilde \epsilon$ to designate the environment used to create Figure \ref{fig:dog_green}.
Then the environment $\tilde \epsilon$ satisfies the following two properties.
(i) $\tilde \epsilon$ is a modified version of $\epsilon$ that agrees with $\epsilon$ on the \textit{wind} feature and the \textit{lighting} feature.
\begin{wrapfigure}{r}[0mm]{35mm}
  \begin{minipage}[t]{1.0\hsize}
      \includegraphics[width=\linewidth]{./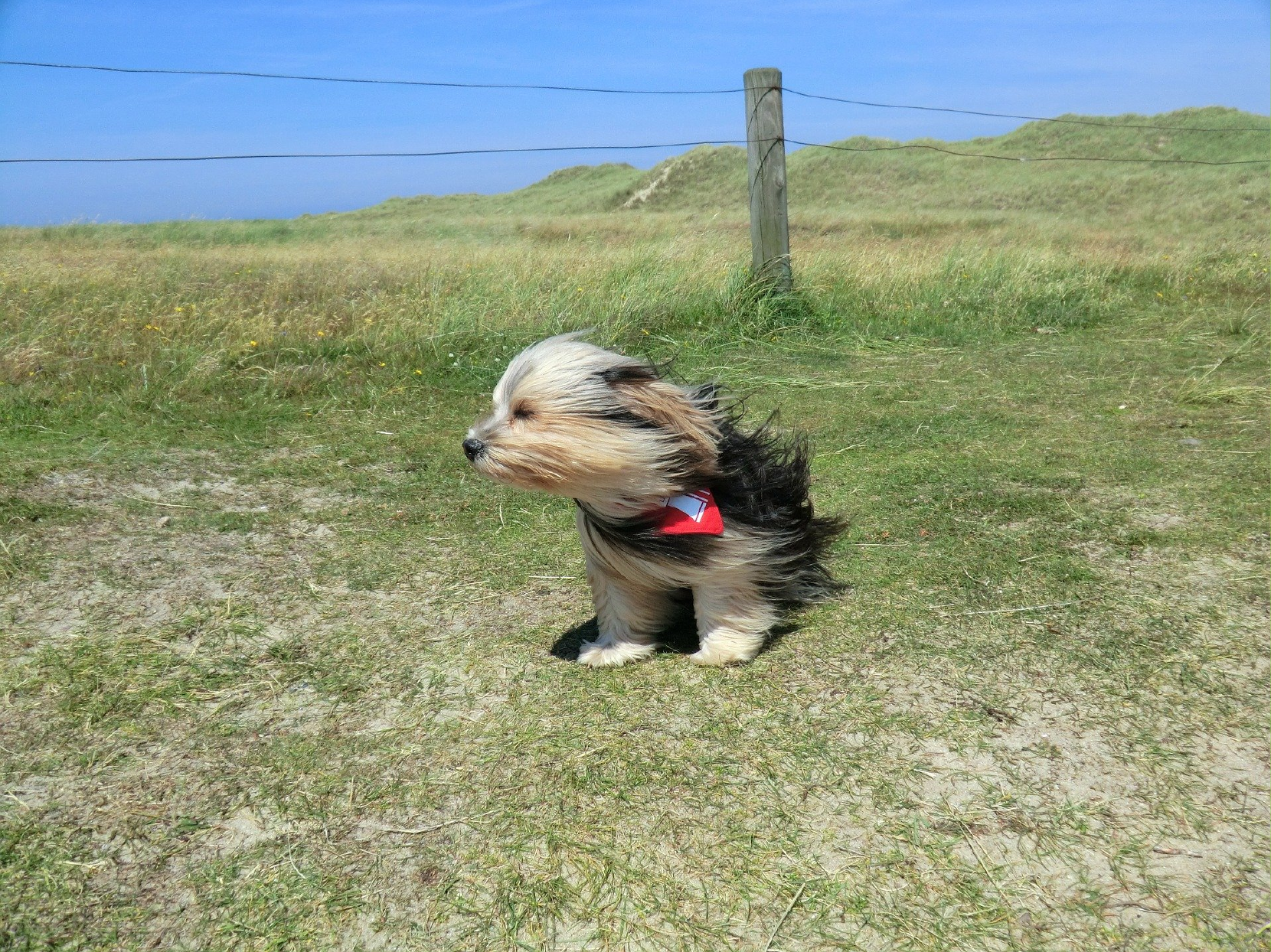}
      \vspace{-0.4cm}
      \caption{A picture of a dog on grass.}
    \label{fig:dog_original}
    \end{minipage}
    \begin{minipage}[t]{1.0\hsize}
      \centering
      \includegraphics[width=\linewidth]{./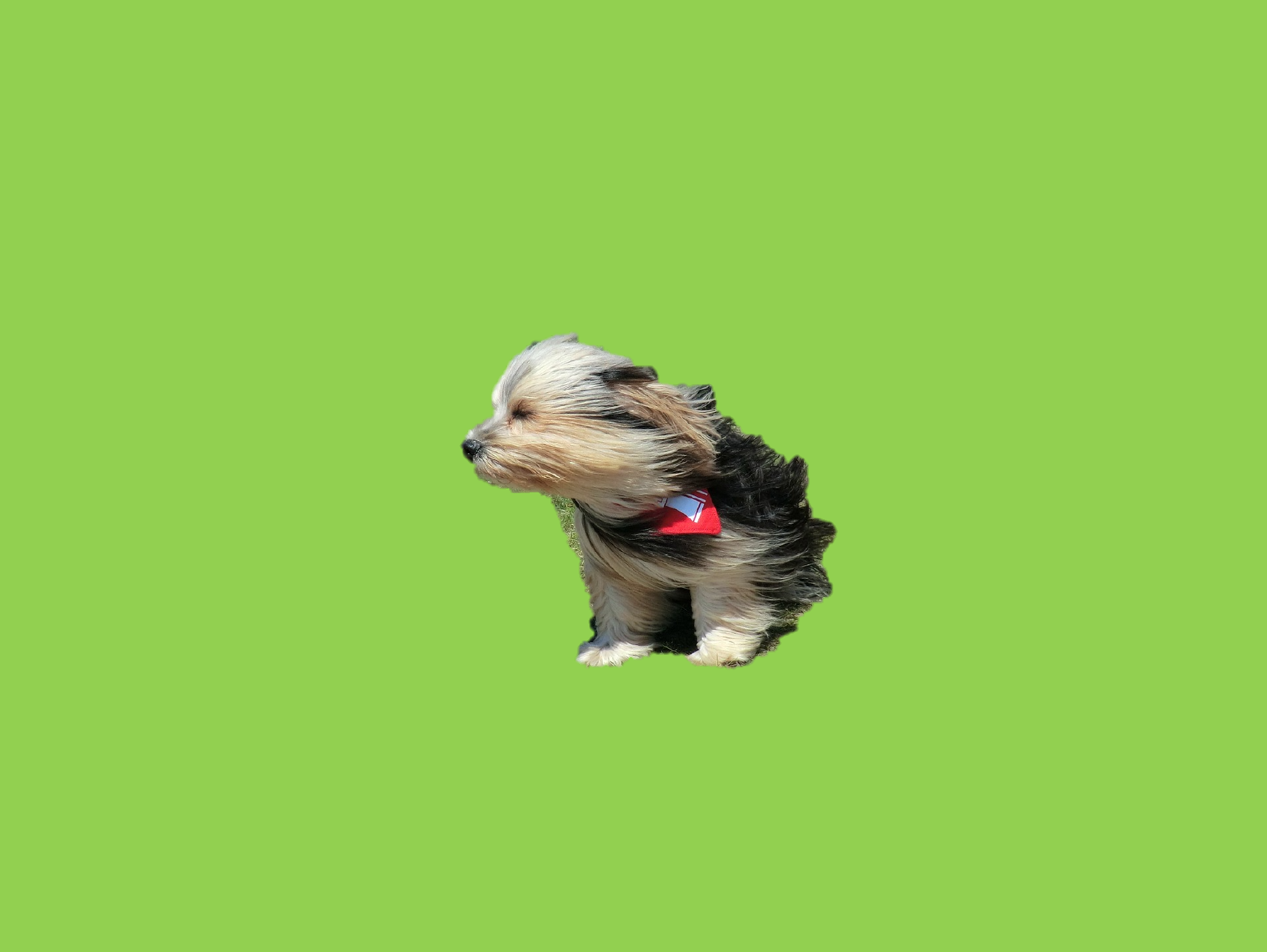}
      \vspace{-0.4cm}
      \caption{A picture of a dog taken in a photographic studio. }
    \label{fig:dog_green}
    \end{minipage}
  \label{fig:dog_pics}
\end{wrapfigure}
Because \textit{wind} is assumed to be the only environmental feature that is affecting the animal appearance, the appearance of the dog does not differ between the environment $\epsilon$ and the environment $\tilde \epsilon$ (See Figure \ref{fig:dog_original} and Figure \ref{fig:dog_green}). 
(ii) On the environment $\tilde \epsilon$,  $P(Y| h^*(X), \tilde \epsilon) = P(Y| X, \tilde \epsilon)$ because the background is empty and $X$ has no more information than $h^*(X)$ to infer the label $Y$. 
Moreover,  $\tilde \epsilon \in \text{supp}(\mathcal{E})$ most likely in our setup as well, because the animal body is visible in Figure \ref{fig:dog_green}. 
In a modern photographic studio, we can reproduce a wide variety of wind-conditions and lighting conditions.
Thus, even for another environment $\epsilon' \in \textrm{supp}(\mathcal{E})$ with a different combination of lighting and wind condition, we can construct the corresponding $\tilde \epsilon' \in \textrm{supp}(\mathcal{E})$ in a similar way.
This suggests that $h^*(X)$ satisfies the controllability condition. 

\textbf{Effect of the choice of $\textrm{supp}(\mathcal{E})$: }
Indeed, the argument so far depends on the size of the $\textrm{supp}(\mathcal{E})$. 
If $\textrm{supp}(\mathcal{E})$ in the OOD problem \eqref{eq:objective_informal} is so small that all animals in the datasets are photographed at their biological habitat, the background information in the picture $X$ would always contain more information about the animal label $Y$ than the animal appearance $h^*(X)$, and our choice of $h^*(X)$ in this example will not be optimal.
On the another extreme, if $\textrm{supp}(\mathcal{E})$ is so large that it contains the environment $\epsilon_{\mathit{darkness}}$ of complete darkness (so that all pictures taken in $\epsilon_{\mathit{darkness}}$ is completely black) , then $P(Y | h(X), \epsilon_{\mathit{darkness}}) \neq P(Y | h(X))$ and $h^*(X)$ does not satisfy the invariance property. In such a case, there will not be any reasonable solution to the OOD problem as well.

\subsubsection{Controllability condition vs the condition used by Rojas et al}
\label{subsec:relation_roj}
\cite{rojas2018invariant} claims to prove a sufficient condition for the OOD optimality of an invariant predictor $\mathbb{E}[Y| M_S \odot X]$.
However, they also require in their proof that, for every $\epsilon \in \text{supp}(\mathcal{E})$, there exists an environment $\tilde \epsilon \in \text{supp}(\mathcal{E})$ such that  $p(X, Y|\tilde \epsilon) = p(Y, M_S \odot X| \epsilon) p(Y, M_S^c \odot X | \epsilon)$ where $M_S^c$ is the complementary mask of $M_S$.
Fortunately, the formal version of theorem \ref{thm:controllability_informal} (theorem \ref{thm:controllability} in Appendix \ref{appsec:proof}) generalizes the necessary condition of \cite{rojas2018invariant} as a special case. 
We present this result as a corollary to \ref{thm:controllability_informal}. 
Please see corollary \ref{thm:Rojas_formal} in Appendix \ref{appsubsec:vs_Rojas} for the formal version of this claim.
\begin{cor}
The necessary condition for the OOD optimality in \cite{rojas2018invariant} is a special case of the controllability condition in theorem.  \ref{thm:controllability_informal}.
\label{thm:vs_Rojas} 
\end{cor}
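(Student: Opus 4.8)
The plan is to recognize Rojas et al.'s setting as a concrete instance of ours by taking the candidate invariant feature to be the coordinate mask $h(X) = M_S \odot X$. Under the stable-feature assumption that underlies \cite{rojas2018invariant} (Section \ref{subsec:prelimi}), the conditional law $p(Y \mid M_S \odot X, \epsilon)$ is the same for every $\epsilon \in \textit{supp}(\mathcal{E})$; hence it equals the environment-marginal $p(Y \mid M_S \odot X)$, so $h(X) = M_S \odot X$ is an invariant feature in our sense and the associated invariant predictor $\mathbb{E}[Y \mid h(X)]$ is exactly the predictor $\mathbb{E}[Y \mid M_S \odot X]$ studied in \cite{rojas2018invariant}. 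It therefore suffices to show that the side condition invoked in their proof --- namely, that for every $\epsilon$ there is some $\tilde\epsilon \in \textit{supp}(\mathcal{E})$ whose joint law factorizes through $M_S$ in the stated way --- forces this particular $h$ to satisfy the controllability condition of Theorem \ref{thm:controllability_informal}.

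The core step is a short density computation. Fixing $\epsilon$ and letting $\tilde\epsilon$ be the environment supplied by their condition, I would marginalize their factorization identity for $p(X, Y \mid \tilde\epsilon)$ over the non-stable coordinates $M_S^c \odot X$ to extract $p(Y, M_S \odot X \mid \tilde\epsilon)$ and $p(M_S \odot X \mid \tilde\epsilon)$, and then divide to obtain $p(Y \mid M_S \odot X, \tilde\epsilon)$ and $p(Y \mid X, \tilde\epsilon)$; the factorized form makes these two conditionals coincide, i.e. $p(Y \mid X, \tilde\epsilon) = p(Y \mid h(X), \tilde\epsilon)$, which is precisely the controllability condition (the remaining requirement, $\tilde\epsilon \in \textit{supp}(\mathcal{E})$, being part of the hypothesis, and $\tilde\epsilon$ being a ``modified version of $\epsilon$'' in our sense since it is constructed from $\epsilon$). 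Theorem \ref{thm:controllability_informal} then yields the OOD optimality of $\mathbb{E}[Y \mid M_S \odot X]$ that \cite{rojas2018invariant} derive from the stronger-looking factorization assumption, which is the claim.

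The only genuine work, and the main obstacle, is the measure-theoretic bookkeeping in this core step: passing from an identity between joint densities to one between conditionals requires care about normalizing constants, about the null sets where conditioning is undefined, and about the precise reading of Rojas et al.'s factorization (once unpacked, it says that in $\tilde\epsilon$ the label decouples from the non-stable coordinates given the stable ones while the stable mechanism from $\epsilon$ is preserved). To justify the word ``special'' one also notes that the converse inclusion fails: the controllability condition permits $h$ to be an arbitrary nonlinear feature rather than a coordinate mask $M_S \odot X$, and asks only for the single conditional-independence identity $p(Y \mid X, \tilde\epsilon) = p(Y \mid h(X), \tilde\epsilon)$ rather than a full product factorization of the joint law --- the animal-classification example above is an instance covered by the former but not the latter. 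The formal statement and the details of the core step appear as Corollary \ref{thm:Rojas_formal} in Appendix \ref{appsubsec:vs_Rojas}.
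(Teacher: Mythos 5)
Your overall route is the same as the paper's: marginalize the factorization identity for $p(x,y\mid\tilde\epsilon)$ over the non-stable coordinates, divide to compare $p(y\mid x,\tilde\epsilon)$ with $p(y\mid x_S,\tilde\epsilon)$, and conclude the controllability identity; the strictness remark also matches. The core density computation you sketch is exactly what the appendix carries out.

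There is, however, one substantive point you dismiss as automatic that the formal statement actually requires you to prove. In the formal controllability condition (Definition \ref{def:controllability}), the modified environment is not merely ``some $\tilde\epsilon$ constructed from $\epsilon$'': writing $\epsilon=(\epsilon_\phi,\epsilon_\psi)$ via the functional representation lemma with $\Phi=X_S$, the new environment must be of the form $\tilde\epsilon=(\epsilon_\phi,\tilde\epsilon_\psi)$, i.e.\ it must \emph{agree with $\epsilon$ on the component $\epsilon_\phi$ that governs the law of $X_S$}. This is not bookkeeping: the proof of Theorem \ref{thm:controllability} uses Lemma \ref{lemma:e_psi_ignore} to replace $\tilde\epsilon_\psi$ by $\epsilon_\psi$ when comparing $\mathbb{E}[D(Y,h(\Phi,\epsilon'))\mid\epsilon']$ with $\mathbb{E}[D(Y,f^*(X))\mid\epsilon]$, and that step is only available because the two environments share $\epsilon_\phi$. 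The paper closes this gap by deriving $p(x_S\mid\epsilon)=p(x_S\mid\epsilon')$ from the factorization together with the invariance $p(y\mid x_S,\epsilon)=p(y\mid x_S)$ (marginalize the identity $p(y,x_S\mid\epsilon)=p(y,x_S\mid\epsilon')$ over $y$ after cancelling the invariant conditional), and then invokes the minimality of $\mathcal{E}_\phi$ to conclude $\epsilon'=(\epsilon_\phi,\epsilon_\psi')$. Your parenthetical ``the stable mechanism from $\epsilon$ is preserved'' gestures at the right fact, but you should make explicit that this is what identifies the $\epsilon_\phi$-component and hence that Rojas et al.'s $\epsilon'$ is admissible as the $\tilde\epsilon$ of the controllability condition. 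With that step added, your argument coincides with the paper's.
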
 

\subsection{Maximal Invariant Prefictor (MIP)} 
\label{subsec:MIP}
Although theorem \ref{thm:controllability_informal} provides an answer to the question of "when the OOD problem \eqref{eq:objective_informal} can be formulated as an \textit{invariant problem}", it defines the \textit{invariant problem} too abstractly, because theorem \ref{thm:controllability_informal} itself does not immediately suggest an objective function that may be used to find an OOD optimal invariant predictor. 
Therefore, we provide a variant of theorem \ref{thm:controllability_informal} that comes together with a trainable objective function whose solution agrees with the solution the OOD problem \eqref{eq:objective_informal}.
The following result holds even when the underlying model is nonlinear.

\begin{thm}[\textbf{MIP (Informal)}]
\mbox{} \\ 
Suppose that there exists at least one invariant feature that satisfies the controllability condition (theorem \ref{thm:controllability_informal}/definition \ref{def:controllability}), and that all other invariant features $\tilde h(X)$ can be written as a function of a common invariant feature $h_0(X)$. 
Then the invariant feature $h_0(X)$ agrees with the solution of
\begin{align}
\max_{h; P(Y | h(X), \epsilon) = P(Y |h(X))} I(Y ; h(X))  
 \label{eq:MIP_objective_informal}
\end{align}
and the invariant predictor $\mathbb{E}[Y| h_0(X)]$ solves the OOD problem \eqref{eq:objective_informal}. 
\label{thm:MIP_objective_informal}
\end{thm}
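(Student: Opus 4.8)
The plan is to derive Theorem~\ref{thm:MIP_objective_informal} from the controllability theorem (Theorem~\ref{thm:controllability_informal}) using two standard tools: the tower property of conditional expectation for nested $\sigma$-algebras, and the chain rule / data-processing inequality for mutual information together with its equality case. Write $\mathcal{G}_0 = \sigma(h_0(X))$. By hypothesis every invariant feature $\tilde h(X)$ is a function of $h_0(X)$, hence $\sigma(\tilde h(X)) \subseteq \mathcal{G}_0$; in particular the invariant feature $h^*(X)$ assumed to satisfy the controllability condition can be written as $h^*(X) = g\bigl(h_0(X)\bigr)$ for some measurable $g$, so that $\sigma(h^*(X)) \subseteq \mathcal{G}_0 \subseteq \sigma(X)$.

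First I would show that $h_0$ itself satisfies the controllability condition. Fix $\epsilon \in \mathrm{supp}(\mathcal{E})$ and take the modified environment $\tilde\epsilon$ supplied by the controllability of $h^*$, so that $p(Y \mid X, \tilde\epsilon) = p(Y \mid h^*(X), \tilde\epsilon)$. The right-hand side is $\sigma(h^*(X))$-measurable and hence $\mathcal{G}_0$-measurable; applying $\mathbb{E}[\,\cdot \mid h_0(X), \tilde\epsilon\,]$ to both sides of this identity and using the tower property over $\sigma(X) \supseteq \mathcal{G}_0$ yields $p(Y \mid h_0(X), \tilde\epsilon) = p(Y \mid h^*(X), \tilde\epsilon) = p(Y \mid X, \tilde\epsilon)$. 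Since $\tilde\epsilon \in \mathrm{supp}(\mathcal{E})$ and $\epsilon$ was arbitrary, $h_0$ satisfies the controllability condition, so Theorem~\ref{thm:controllability_informal} immediately gives that $\mathbb{E}[Y \mid h_0(X)]$ is OOD-optimal, i.e. solves \eqref{eq:objective_informal}.

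Next I would identify $h_0$ with the solution of \eqref{eq:MIP_objective_informal}. The feasible set is nonempty since $h_0$ (and $h^*$) is invariant. For any invariant $h$, $h(X)$ is $\mathcal{G}_0$-measurable, so $I\bigl(Y; h_0(X), h(X)\bigr) = I(Y; h_0(X))$, and the chain rule gives $I(Y; h_0(X)) = I(Y; h(X)) + I\bigl(Y; h_0(X) \mid h(X)\bigr) \ge I(Y; h(X))$; as $h_0$ is itself feasible it attains the supremum. Conversely, if an invariant $h$ achieves equality then $I\bigl(Y; h_0(X) \mid h(X)\bigr) = 0$, i.e. $Y \perp h_0(X) \mid h(X)$, whence $p(Y \mid h(X)) = p\bigl(Y \mid h_0(X), h(X)\bigr) = p(Y \mid h_0(X))$ and in particular $\mathbb{E}[Y \mid h(X)] = \mathbb{E}[Y \mid h_0(X)]$ almost surely. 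Reading ``$h_0$ agrees with the solution'' as ``every solution induces the invariant predictor $\mathbb{E}[Y \mid h_0(X)]$'', this together with the previous paragraph completes the proof.

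I expect the main obstacle to be carrying out the first step with full measure-theoretic rigour: propagating the identity $p(Y \mid X, \tilde\epsilon) = p(Y \mid h^*(X), \tilde\epsilon)$ through the intermediate $\sigma$-algebra $\mathcal{G}_0$ requires working with regular conditional distributions and with conditioning on the (possibly null) event $\{\mathcal{E} = \tilde\epsilon\}$ when $\mathcal{E}$ is continuous --- precisely the disintegration / functional-representation-lemma machinery already invoked for Theorem~\ref{thm:controllability_informal}, which I would reuse rather than rebuild. A secondary technical point is the equality case of the data-processing inequality in the general, non-discrete setting, which is valid under the same integrability hypotheses that make $I(Y; h(X))$ well-defined; and one should note that the constraint set in \eqref{eq:MIP_objective_informal} is by construction rich enough to contain $h_0$, so the supremum is genuinely attained.
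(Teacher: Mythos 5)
Your second step (the chain rule / data-processing argument showing that $h_0$ maximizes \eqref{eq:MIP_objective_informal}, plus the analysis of the equality case) is correct and is essentially what the paper does in Theorem \ref{thm:MIP_formal}. The gap is in your first step, where you infer that $h_0$ itself satisfies the controllability condition. The formal condition (Definition \ref{def:controllability}) is \emph{feature-relative}: writing $\epsilon=(\epsilon_\phi,\epsilon_\psi)$ with $\mathcal{E}_\phi$ the minimal component governing the law of the feature in question, the modified environment must have the form $\tilde\epsilon=(\epsilon_\phi,\tilde\epsilon_\psi)$, i.e.\ it must leave the joint law of $(\text{feature},Y)$ unchanged. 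This is exactly what Lemma \ref{lemma:e_psi_ignore} is used for in the proof of Theorem \ref{thm:controllability}: the step passing from an expectation under $\epsilon'$ back to one under $\epsilon$ requires $(\Phi,Y)\mid\epsilon' \overset{d}{=} (\Phi,Y)\mid\epsilon$. The $\tilde\epsilon$ supplied by the controllability of $h^*$ only agrees with $\epsilon$ on the component governing $(h^*(X),Y)$; since $\sigma(h^*(X))\subseteq\sigma(h_0(X))$, the component governing $(h_0(X),Y)$ is in general strictly larger, and nothing guarantees it is preserved when $\epsilon_{\psi^*}$ is modified. So what you actually prove --- for every $\epsilon$ there is \emph{some} $\tilde\epsilon$ with $Y\perp X\mid h_0(X),\tilde\epsilon$ --- is weaker than the controllability condition for $h_0$, and Theorem \ref{thm:controllability_informal} cannot be invoked for $h_0$ directly.

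The repair is short and is precisely the paper's Lemma \ref{thm:maximal}. Your tower-property computation already yields $p(Y\mid h_0(X),\tilde\epsilon)=p(Y\mid h^*(X),\tilde\epsilon)$; now use the \emph{invariance} of both features (both lie in $\mathcal{I}$) to strip $\tilde\epsilon$ from each side and conclude $p(Y\mid h_0(X))=p(Y\mid h^*(X))$, hence $\mathbb{E}[Y\mid h_0(X)]=\mathbb{E}[Y\mid h^*(X)]$ almost surely. OOD optimality of the right-hand side follows from Theorem \ref{thm:controllability} applied to $h^*$ --- the feature that actually satisfies the hypothesis --- and therefore transfers to $\mathbb{E}[Y\mid h_0(X)]$. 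In short, you do not need $h_0$ to be controllable; you only need its predictor to coincide with that of the controllable feature, and your own computation delivers exactly that identity.
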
 
We refer to \eqref{eq:MIP_objective_informal} as Maximal Invariant Predictor problem (MIP).  
This result essentially claims that, under appropriate set of conditions, we can reformulate (\ref{eq:objective_informal}) as an InfoMax objective \cite{linsker1988self} with invariance constraint.
The formulation of MIP also agrees with the hypothesis presented in \cite{ossan2019}. 
MIP itself is also similar to \cite{chang2020invariant} that aims to solve another problem that is different from the OOD problem \eqref{eq:objective_informal}.
For a more detailed comparison, please see Section \ref{sec:discussion}.

\section{Method} 
\label{sec:method}
In this section, we present an algorithm inspired by MIP. 
In Section \ref{subsec:parametrization}, we first explain our parametrization of $P(Y | h(X))$ that allows us to train the model while imposing the invariance constraint. 
In Section \ref{subsec:IGA}, we present Inter-environment Gradient Alignment (IGA) algorithm.
Finally, in Section \ref{subsec:IGA_linear}, we discuss the OOD loss of an IGA trained model and a case study of IGA on a linear example. 

\subsection{Model parametrization} 
\label{subsec:parametrization}
Given a model for $P(Y | h(X))$, the mutual information $I(Y; h(X))$ itself can be optimized at relative ease because the mutual information is related to loss functions like KL divergence and L2 loss. 
However, to encourage the constraint of $P(Y | h(X) ,\epsilon) = P(Y| h(X))$ for a nonlinear system without using a knowledge of underlying model like Causal DAG, we need a good black-box parameter representation to describe the relationship between $P(Y| h(X), \epsilon) = P(Y | h(X))$ for a generic $h$.

To do this, we adopt the idea of MAML \cite{finn17a} that parametrizes the conditional distribution on each task $\mathcal{T}_i$ as $P(Y| X ; \theta - \alpha \nabla_\theta \mathcal{L}_{\mathcal{T}_i}(\theta))$, where $\theta$ is a task-agnostic parameter of the base model $P(Y | X; \theta)$
and $\mathcal{L}_{\mathcal{T}_i}(\theta)$ is the loss value of $P(Y| X ;\theta)$ on task $\mathcal{T}_i$.
We first use this philosophy to model $P(Y | h(X), \epsilon)$, and then obtain the model for $P(Y| h(X))$.

We begin by making an analogy between \textit{environment} and \textit{task}, using 
$ \mathcal{L}_{\mathcal{E}'}(\theta) = \mathbb{E}_{X',Y'} [l (Q(Y' |X'; \theta), Y') | \mathcal{E}'] $ to represent the loss of a predictor distribution $Q(Y | X ; \theta)$ for the environment $\mathcal{E}'$. 
The distribution $Q(Y | X ; \theta)$ can be a black box distribution parametrized by Neural Networks.
We then model $P(y | h(X), \mathcal{E}')$ for each $y$ as 
\begin{align}
    P(y | h(X), \mathcal{E}') &:= Q(y| X; \theta - \alpha \nabla_\theta \mathcal{L}_{\mathcal{E}'}(\theta))  \label{eq:MAML_like_param_e} 
\end{align}
where $\theta$ is a $h$ specific parameter that is agnostic to the environment $\epsilon$ (Thus, $h$ in the LHS of \eqref{eq:MAML_like_param_e} is implicitly represented in the RHS as $\theta$).
Note that we herein used the notation $(Y', X', \mathcal{E}')$ to distinguish the variables $(X,Y)$ to be used at the time of inference from the variables from the \textit{training} variables $(Y', X', \mathcal{E}')$ that are used to to determine the environment specific model parameter $\theta - \alpha \nabla_\theta \mathcal{L}_{\mathcal{E}'}(\theta)$. 
In other words, $\mathcal{E}'$ might be correlated to $(X',Y')$ that is integrated away in $\mathcal{L}_{\mathcal{E}'}(\theta)$, but is independent from $(X,Y)$.
If we assume uniform continuity of $Q$ with respect to $\theta$, we can say that there exists some $\alpha > 0$ for which the following approximation holds with small error $O(\alpha^2)$ for any measurable $A$ in the range of $Y$(See Appendix \ref{apppsec:param_app}). ;
\begin{align}
    P(y \in A | h(X) ) &:= \mathbb{E}_Y[1_A(Y) | h(X)]  \\
    &=\mathbb{E}_\mathcal{E}'[\mathbb{E}_Y[1_A(Y) | h(X), \mathcal{E}']] \\
    &\cong \mathbb{E}_{\mathcal{E}'} [P(y \in A | h(X), \mathcal{E}')] 
\end{align}
Since this holds for all $A$,  $ P(y | h(X) ) \cong \mathbb{E}_{\mathcal{E}'} [P(y | h(X), \mathcal{E}')]$. Now by substituting  \eqref{eq:MAML_like_param_e} and using the fact that $\mathcal{E}'$ used in the model parameter is independent from $(X,Y)$,  
\begin{align}
    \mathbb{E}_{\mathcal{E}'} [P(y | h(X), \mathcal{E}')] &\cong \mathbb{E}_{\mathcal{E}'}[Q(y| X; \theta - \alpha \nabla_\theta \mathcal{L}_{\mathcal{E}'}(\theta))| X ] \\ 
    &\cong Q(y| X; \theta) - \alpha \nabla_\theta Q(y|X; \theta)^T  \nabla_\theta \mathbb{E}_{\mathcal{E'}}[\mathcal{L}_{\mathcal{E}'}(\theta) |X]  + O(\alpha^2) \\ 
    &= Q(y| X; \theta) - \alpha \nabla_\theta Q(y|X; \theta)^T  \nabla_\theta \mathbb{E}_{\mathcal{E'}}[\mathcal{L}_{\mathcal{E}'}(\theta)]  + O(\alpha^2) \\ 
    &\cong  Q(y| X; \theta - \alpha \nabla_\theta \mathbb{E}_{\mathcal{E'}}[\mathcal{L}_{\mathcal{E}'}(\theta)])   + O(\alpha^2)
    \label{eq:MAML_like_param} 
\end{align}

Because the explicit form of $h$ is absent in both \eqref{eq:MAML_like_param_e} and \eqref{eq:MAML_like_param}, the training based on our parametrizations do not treat the invariant feature $h(X)$ and the invariant predictor function $P(Y|h(X))$ separately. 
This approach is different from many other studies based on invariance, and we would discuss its pros and cons in Section \ref{sec:discussion} and Appendix \ref{appsubsec:method_lim}.
\begin{figure*}[t!]
  \centering
  \begin{tabular}{cc}
    \begin{minipage}[t]{0.55\hsize}
      \centering
      \includegraphics[width=\linewidth]{./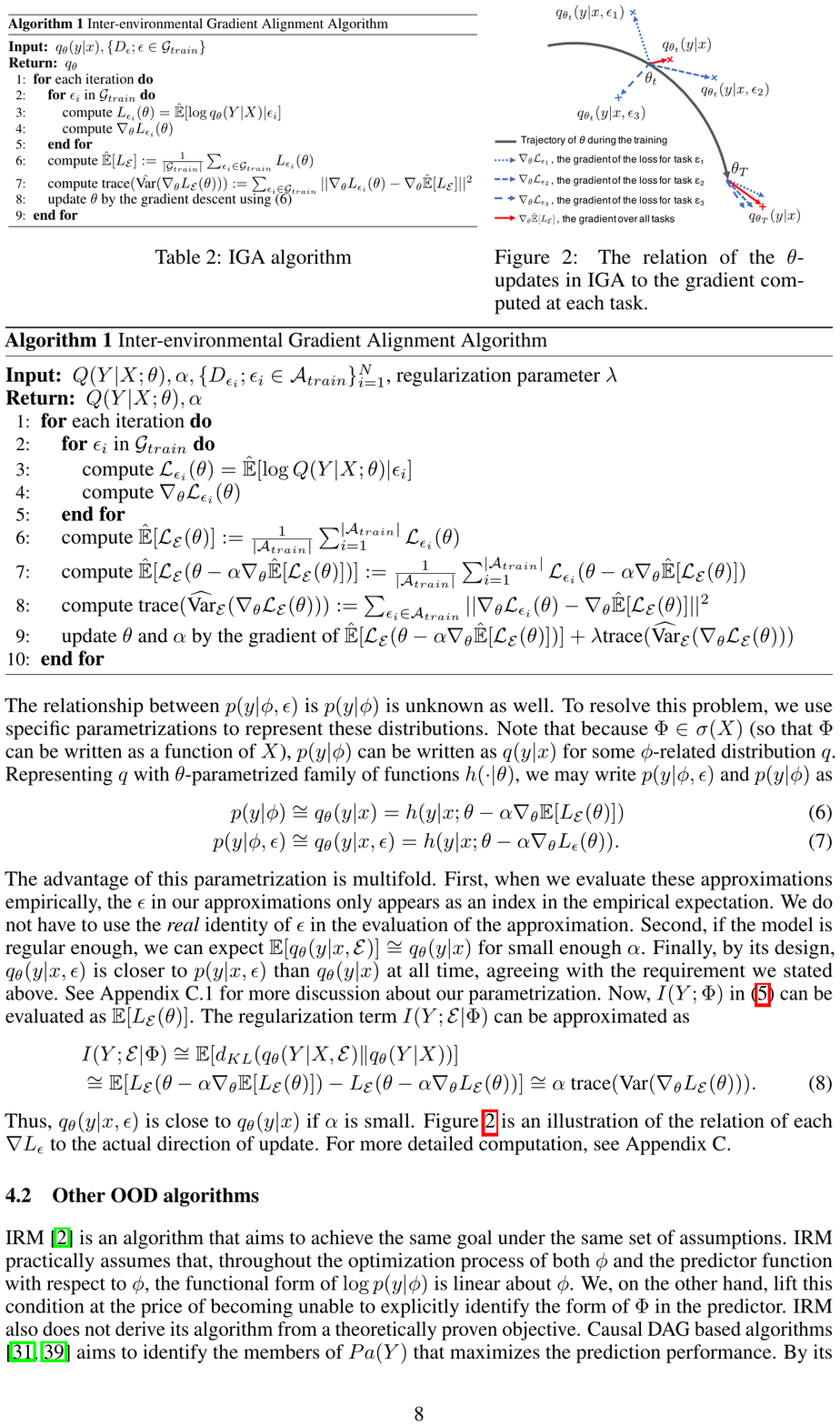}
      \vspace{-0.5cm}
      \tblcaption{IGA algorithm. All integration with respect to $\mathcal{E}$ are evaluated empirically, and each $\epsilon$ appears only as the index of dataset. }
    \label{fig:IGA_algorithm}
    \end{minipage}
    &
    \begin{minipage}[t]{0.4\hsize}
      \centering
      \includegraphics[width=\linewidth]{./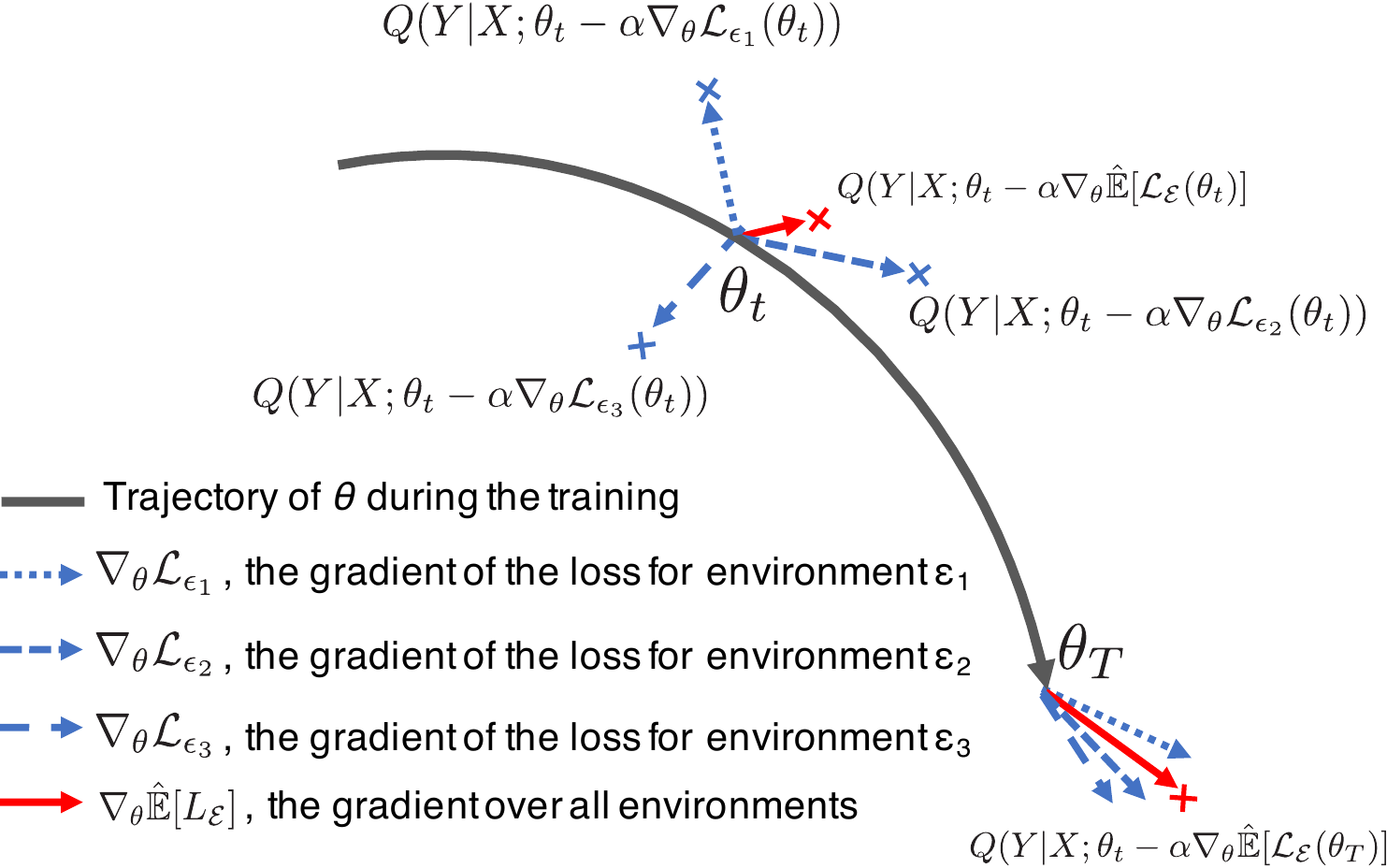}
      \vspace{-0.5cm}
      \caption{Relations among $\nabla_\theta \mathcal{L}_{\epsilon_i}$ during the training with IGA. $\theta$ in the figure is the parameter of the base black-box distribution $Q(Y| X; \theta)$.}
    \label{fig:IGA_overview}
    \end{minipage}
  \end{tabular}
  \label{fig:exp_para}
\end{figure*}
\subsection{Inter-environment Gradient Alignment Algorithm} 
\label{subsec:IGA}

MIP requires that  $d_{KL}(P(Y| h(X), \epsilon) \| P(Y | h(X)))]$ is small for every $\epsilon$. 
This can be achieved by making
$\mathbb{E}_{\mathcal{E}'} [d_{KL}(P(Y| h(X), \mathcal{E}') \| P(Y | h(X)))]$
small. 
Substituting our parametrization (\eqref{eq:MAML_like_param_e}, \eqref{eq:MAML_like_param}) into $\mathbb{E}_{\mathcal{E}'} [d_{KL}(P(Y| h(X), \mathcal{E}') \| P(Y | h(X)))]$ and working out the algebra, 
we obtain
\begin{align}
    \mathbb{E}[d_{KL}(P(Y| h(X), & \mathcal{E}') \| P(Y | h(X)))]  \nonumber \\ &\cong \mathbb{E}[d_{KL}(Q(Y| X; \theta - \alpha \nabla_\theta \mathcal{L}_{\mathcal{E}'}(\theta)) \| Q(Y | X ; \theta - \alpha \nabla_\theta
    \mathbb{E}_{\mathcal{E}'}[\mathcal{L}_{\mathcal{E}'}(\theta)]))] \\
    &\cong \alpha \  \textrm{trace} (\text{Var}_{\mathcal{E}'}(\nabla_\theta  \mathcal{L}_{\mathcal{E}'}(\theta)))
    \label{eq:IGA_reg}
\end{align}
For more detailed derivation of \eqref{eq:IGA_reg}, please see Appendix \ref{appsec:derivation}. 
Because $I(Y ; h(X))$ can be maximized by minimizing the loss $\mathcal{L}_{\mathcal{E}'}(\theta - \alpha \nabla_\theta \mathbb{E}_{\mathcal{E}'}[\mathcal{L}_{\mathcal{E}'}(\theta)])$, we shall minimize the following with respect to $\theta$:
 \begin{align}
  \mathbb{E}_{\mathcal{E}'}[\mathcal{L}_{\mathcal{E}'}(\theta - \alpha \nabla_\theta \mathbb{E}_{\mathcal{E}'}[\mathcal{L}_{\mathcal{E}'}(\theta)])] + \lambda ~  \text{trace}(\text{Var}_{\mathcal{E}'}(\nabla_{\theta} \mathcal{L}_{\mathcal{E}'}(\theta)))  
  \label{eq:master_final}
\end{align}
Because the objective \eqref{eq:master_final} encourages the gradient $\nabla_{\theta} \mathcal{L}_{\mathcal{\epsilon}}$ evaluated on each $\epsilon$ to align with the gradients evaluated on other $\epsilon$s, we call the algorithm for the objective function \eqref{eq:master_final} as \textbf{Inter-environment Gradient Alignment} (IGA) algorithm. 
This formulation agrees with the \textit{general} formulation of IRM \cite{arjovsky2019invariant} (as opposed to \textit{IRMv1}) when $Y$ is a categorical variable. 
MIP thus provides justification to IRM for a nonlinear case when the task of interest is a classification problem.

\subsection{IGA in practice} 
\label{subsec:iga_practice}
Now that we have proposed the objective function \eqref{eq:master_final} for our MIP-inspired algorithm, how is it carried out in practice and what is the range of environments on which the empirical solution of IGA can generalize?

In the algorithm, we evaluate \eqref{eq:master_final} on a set of datasets $\{D_{\epsilon_i}; \epsilon_i \in \mathcal{A}_{\mathit{train}} \}_{i=1}^N $, with each $D_{\epsilon_i}$ consisting of a set of input-output pairs $\{(x_{ij},y_{ij})\}_{j=1}^{N_i}$ drawn from $P(X, Y | \epsilon_i)$.
Also, the variance and the expectation with respect to $\mathcal{E}$ is taken empirically. 
For example, $\mathbb{E}_\mathcal{\mathcal{E}} [ \mathcal{L}_\mathcal{E}(\theta)]$ is approximated by the empirical average $\widehat{\mathbb{E}_\mathcal{\mathcal{E}}} [ \mathcal{L}_\mathcal{E}(\theta)] =\frac{1}{N} \sum_{i=1}^N    \mathcal{L}_{\epsilon_i}(\theta) =  \frac{1}{N} \sum_{i=1}^N   \frac{1}{N_i} \sum_{j=1}^{N_i} \ell(Q(Y| (x_{ij}); \theta), y_{ij})$.
We emphasize that IGA does not require the user to specify the identity of the environment $\epsilon$;  each environment $\epsilon$ appears only as an index in the equation.
Table \ref{fig:IGA_algorithm} is an outline of our algorithm.
Figure \ref{fig:IGA_overview} is a visualization of the relations amongst $\{\nabla_\theta \mathcal{L}_{\epsilon_i}(\theta)\}_i$ in IGA. 

\textbf{IGA loss as a bound of OOD loss on linear combinations of $\mathcal{A}_{train}$: } It turns out that, when evaluated over finite $\mathcal{A}_{train}$, (7) can actually bound the OOD loss on a set of linear combinations of training distributions. Namely, let $\Delta_\eta = \{ \{\alpha_\epsilon\} ; \alpha_\epsilon > - \eta, ~  \sum_\epsilon \alpha_\epsilon =1 \}$, and consider the set of distributions defined by
$\mathcal{E}_\eta  = \{ \sum_{\epsilon \in \mathcal{A}_{train}} \alpha_\epsilon P_\epsilon ; \alpha \in \Delta \eta\}$.
Then it turns out that
\begin{align}
\sup_{\epsilon \in \mathcal{E}_\eta } L_\epsilon(\theta - \alpha \nabla_\theta \mathbb{E}_\mathcal{E}[\mathcal{L}_\mathcal{E}(\theta)])  \leq \mathbb{E}_\mathcal{E}[\mathcal{L}_\mathcal{E}(\theta - \alpha \nabla_\theta \mathbb{E}_\mathcal{E}[\mathcal{L}_\mathcal{E}(\theta)])] + \lambda_\eta ~  \text{trace}(\text{Var}_{\mathcal{E}}(\nabla_{\theta} \mathcal{L}_{\mathcal{E}}(\theta))
\end{align}
approximately for $\lambda_\eta$ that is monotonic in $\eta$. 
Thus, the OOD loss of an IGA trained model is bounded by its loss on $\Delta_\eta$.
For the details of this claim, please see \ref{appsec:app_bound} in the Appendix.

\textbf{An analysis of a simple linear example : }
\label{subsec:IGA_linear}
As a case study, we present our analysis of IGA on the following linear example \footnote{This example was raised by the author of \cite{arjovsky2019invariant} in personal communication.} 
\begin{align}
    X_1 &= N_0, \ Y = X_1 + N_1, \ X_2 = \mathcal{E}Y + N_2,  
\end{align}
where $N_k$ are independent standard normal noises and $\mathcal{E}$ is an arbitrary, scaler valued environmental random variable with non-zero variance.
For this problem,  $X_1$ satisfies $P(Y | X_1, \epsilon) = P(Y | \epsilon)$ for all $\epsilon$, and $X_1$ also stands as the solution of OOD problem over all real values of $\mathcal{E}$.
We claim that $X_1$ is also the solution of IGA \eqref{eq:master_final}. 

Let the base distribution $Q(Y| X; \theta)$ be the distribution of $\hat Y_\theta = \theta_1 X_1 + \theta_2 X_2$, and let 
$\mathcal{L}_\epsilon(\theta) = \mathbb{E}[\|Y - \hat Y\|^2 |\epsilon]$ be the loss function. 
As we discussed above, we parametrize   
$P(Y| h(X), \epsilon)$ as $Q(Y| X; \theta - \alpha \nabla_\theta \mathcal{L}_\epsilon(\theta))$ with $\theta$ implicitly determining $h$.
We assume that the empirical variance  $\widehat{\text{Var}}(\mathcal{E})$ is non zero. 
We can then compute  $\text{trace}(\widehat{\text{Var}}(\nabla_{\theta} \mathcal{L}_{\mathcal{E}}(\theta)))$
in the constraint as the sum of the followings:
\begin{align}
    \widehat{\text{Var}}(\nabla_{\theta_1} \mathcal{L}_{\mathcal{E}}(\theta)) = 4\widehat{\text{Var}}(\epsilon)\theta_2^2 , \ \widehat{\text{Var}}(\nabla_{\theta_2} \mathcal{L}_{\mathcal{E}}(\theta)) = 4\hat{\mathbb{E}}_\epsilon \Big[ \big(
    (\epsilon - \hat{\mathbb{E}}_\epsilon[\epsilon])(\theta_1 -2) + 2(\epsilon^2 - \hat{\mathbb{E}}_\epsilon[\epsilon^2])\theta_2
    \big)^2 \Big] \nonumber
\end{align}
Under the aforementioned assumption of $\widehat{\text{Var}}(\mathcal{E})> 0$, equating the sum of these two variances to $0$ would force $\theta = [2, 0]$. 
Because $\nabla_\theta \mathcal{L}_\epsilon(\theta) \big \vert_{\theta = [2,0]} = [2, 0]$, the solution of IGA would thus be
$Q(Y| X; [2,0] - \alpha [2,0])$ or $(1-\alpha)2 X_1$ in the variable form. 
If we optimize the loss of this function about $\alpha$, we obtain $(1- 0.5)2 X_1 = X_1$ as the optimal solution, which is indeed our intended answer.

\section{Experiment} 
\label{sec:exp} 
\subsection{Invariance Unit Tests}
\label{subsec:exp_unit_test}

Recently, \cite{aubinlinear} proposed Invariance Unit Tests, a set of linear problems to serve as a benchmark for OOD generalization problems. 
To construct the set of datasets for each one of these problems, we first sample a set of environments $\{\epsilon\}$.
Then, for each sampled $\epsilon$, we collect a dataset $D_\epsilon := \{ (x_i^\epsilon, y_i^\epsilon)\}$ from $p(x, y | \epsilon)$.
In the setup of \cite{aubinlinear}, each instance of $x^\epsilon$ is a pair of (a) $x_{\mathit{inv}}^\epsilon \in \mathbb{R}^{d_{\mathit{inv}}}$ that elicits invariant correlations and (b) $x_{\mathit{spu}}^\epsilon \in \mathbb{R}^{d_{\mathit{spu}}}$ that elicits spurious correlations. 
The goal of the experiments in \cite{aubinlinear} is to train a predictor for the target variable $y^\epsilon$ that depends exclusively on $x_{\mathit{inv}}^\epsilon$.
To test the performance of our IGA, we conducted experiments on the unit-tests designated as (2, 2s, 3, 3s) in \cite{aubinlinear}. 
In the unit-tests designated with labels containing $s$ (2s, 3s), the observations are scrambled by a full rank matrix (each observation is presented as $M X$ for some matrix $M$), so that it is difficult to construct an invariant predictor as a function of a small discrete subset of observations.
In this set of experiments, we compare our IGA against IRM, ERM and ANDMask \cite{parascandolo2020learning}.
ANDMask has a somewhat similar philosophy as our IGA, because it aims to minimize the error by updating the model only on the parameters on which the sign of the gradient of the loss is the same for most environments.
We did not conduct tests on (1, 1s), because $P(y | x_{\mathit{inv}} , \epsilon) \neq P(y | x_{\mathit{inv}})$ in these settings (please also see Section \ref{sec:discussion}).
We used the published code of the original paper \cite{aubinlinear} to conduct all experiments. 
For more details of the experimental settings, please see Appendix \ref{appsubsec:exp_unit_test}.
Table \ref{fig:IUT_table} summarizes the result of Invariance unit tests.
As expected, the performance of all models improve with the number of environments used in the training ($n_{\mathit{env}}$). 
Also, as reported in \cite{aubinlinear}, ANDMask performs well on unscrambled environments.
We can see in Table \ref{fig:IUT_table} that IGA performs competitively on all examples.
 

\subsection{Colored MNIST \& Extended Colored MNIST}
\label{subsec:mnist}
To test the performance of our IGA on an image dataset, we conducted a set of experiments on \textit{Colored-MNIST} (\textit{C-MNIST}) in \cite{arjovsky2019invariant} as well as its extension,   \textit{Extended Colored-MNIST} (\textit{EC-MNIST}) 
For C-MNIST, the OOD-optimal invariant predictor is a function of the feature $h(X)$ such that $P(h(X) | \epsilon) = P(h(X))$, which is also a solution of the Adversarial Domain Adaptation (ADA) \cite{li2018advdomain}.
EC-MNIST is an extension of C-MNIST in which the OOD-optimal invariant predictor is not necessarly a function of such a feature.  We therefore describe EC-MNIST first.

\textbf{Extended Colored-MNIST (EC-MNIST)} \\
In EC-MNIST, each $(x,y)$ in the environment $\epsilon = (\epsilon_{ch1}, \epsilon_{ch2})$ is constructed as follows:
{\setlength{\leftmargini}{15pt}
\begin{enumerate}
\setlength{\parskip}{0.025cm} 
\setlength{\itemsep}{0.025cm} 
\item Set $x_{ch2}$ to $1$ with probability $\epsilon_{ch2}$. Set it to $0$ with probability $1- \epsilon_{ch2}$. 
\item Generate a binary label $\hat y_{obs}$ from $y$ with the following rule: $\hat y_{obs}=0$ if $y \in \{0 \sim 4\}$ and  $\hat y_{obs}=1$ otherwise.  
If $\epsilon_{ch2}=k$, construct $y_{obs}$ by flipping $\hat y_{obs}$ with probability $p_k$ ($k \in \{0, 1\}$).
\item Put $y_{obs}= \hat x_{ch0}$, and construct $x_{ch0}$ from  $\hat x_{ch0}$ by flipping $\hat x_{ch0}$ with probability $\epsilon_{ch0}$.
\item Construct $x_{obs}$ as $x_{fig} \times  [x_{ch0}, (1-x_{ch0}), x_{ch2}]$.  As an RGB image, this will come out as an image in which the red scale is \textit{turned on} and the green scale is \textit{turned off} if $x_{ch0}=1$, and other-way around if $x_{ch0} =0$. 
Blue scale is turned on only if $x_{ch2} = 1$. 
\end{enumerate}}
Appendix Figure \ref{appfig:mnist} is the graphical model for the generation of EC-MNIST. 
We emphasize that only $(Y_{obs}, X_{obs})$ are assumed observable in the experiment, and that the node decomposition in Appendix Figure \ref{appfig:mnist} is \textit{not} assumed unknown. 
At the training time, the machine learner will be given a set of datasets $\mathcal{D}_{train} = \{D_{\epsilon_i} ; i=1,...,N_{train}\}$ in which $D_{\epsilon_i}$ is a set of observations gathered from $P(X,Y| \epsilon_i)$ and $R_{train} = \{\epsilon_i; i=1,...,N_{train}\}$ is the set of training environments.
At the test time, the learner will be challenged to make an inference of $Y_{obs}$ from $X_{obs}$ on the dataset drawn from $P(X,Y| \epsilon_{test})$ with $\epsilon_{test} \not \in R_{train}$.
The model is evaluated based on OOD performance \eqref{eq:objective_informal}, or the performance in the worst environment among the union of $R_{train}$ and the test environments.

For our EC-MNIST, the theoretical upper bound for the probability of correctly predicting  $Y_{obs}$ is $\epsilon_{ch2} \max \{p_0, 1-p_0\} + (1- e_{ch2})\max \{p_1, 1-p_1\}$.
In this problem, $X_{fig}$ is the only variable that is independent from $\mathcal{E}$, and it is therefore the solution of \cite{li2018advdomain}.  
However, $X_{ch2}$ together with $X_{fig}$ can create a better predictor than $X_{fig}$ alone. 
In fact, the oracle prediction by $X_{fig}$ alone can attain an average value as high as $\max\{ \epsilon_{ch2}p_0+ (1-\epsilon_{ch2})p_1 , \epsilon_{ch2}(1-p_0)+ (1-\epsilon_{ch2})(1-p_1)\}$, which is lower than the that of the $[X_{fig}, X_{ch2}]$ oracle. 
This fact follows from Fatou's lemma \cite{folland2013real}. 
Thus, ADA cannot find the optimal solution in this case. 
IRM \cite{arjovsky2019invariant} also discusses such a case in their work.

We compared our algorithm against Invariant Risk Minimization (IRM)\cite{arjovsky2019invariant}, Empirical Risk Minimization (ERM), and the oracle(s).
The right column in Table \ref{tab:result_table} compares the results of the algorithms in terms of the OOD performance \eqref{eq:objective_informal}.
We perform better than both ERM and IRM.
 We also perform better than the $X_{fig}$ oracle, which is equivalent to the upper bound of ADA. 
 Because $X_{ch2}$ is necessary in order to outperform the $X_{fig}$ oracle (see Appendix \ref{appsubsec:cmnist}), our result suggests that the IGA-trained models are actually using the feature $X_{ch2}$ in making the prediction of $Y$.
 Figure \ref{fig:ecmnist_result_app}(a)(b) plots the OOD accuracy of IGA-trained models against the regularization parameter $\lambda$ in Table \ref{fig:IGA_algorithm}.
In general, training with larger regularization parameters promotes the OOD performance.
The OOD accuracy plateaus around $\lambda \sim 10^4$. 

\textbf{Colored MNIST (C-MNIST)}\\
The original \textit{C-MNIST} in \cite{arjovsky2019invariant} is a special case of our EC-MNIST in which the distribution of $x_{ch2}$ does not vary with $\epsilon$. 
Figure \ref{appfig:mnist} in Appendix is a schematic of the data generation process of C-MNIST.
In C-MNIST, the OOD-optimal predictor can be constructed with $X_{fig}$ alone, and it achieves the optimal OOD accuracy of $\max(1- p, p)$.
The left column in Table \ref{tab:result_table} compares the results of the algorithms in terms of the OOD accuracy \eqref{eq:objective_informal}. 
Again, our method outperforms both ERM and IRM. 

\begin{figure}[h]
\begin{tabular}{cc}
\begin{minipage}[t]{0.625\hsize}
\includegraphics[width=\linewidth]{./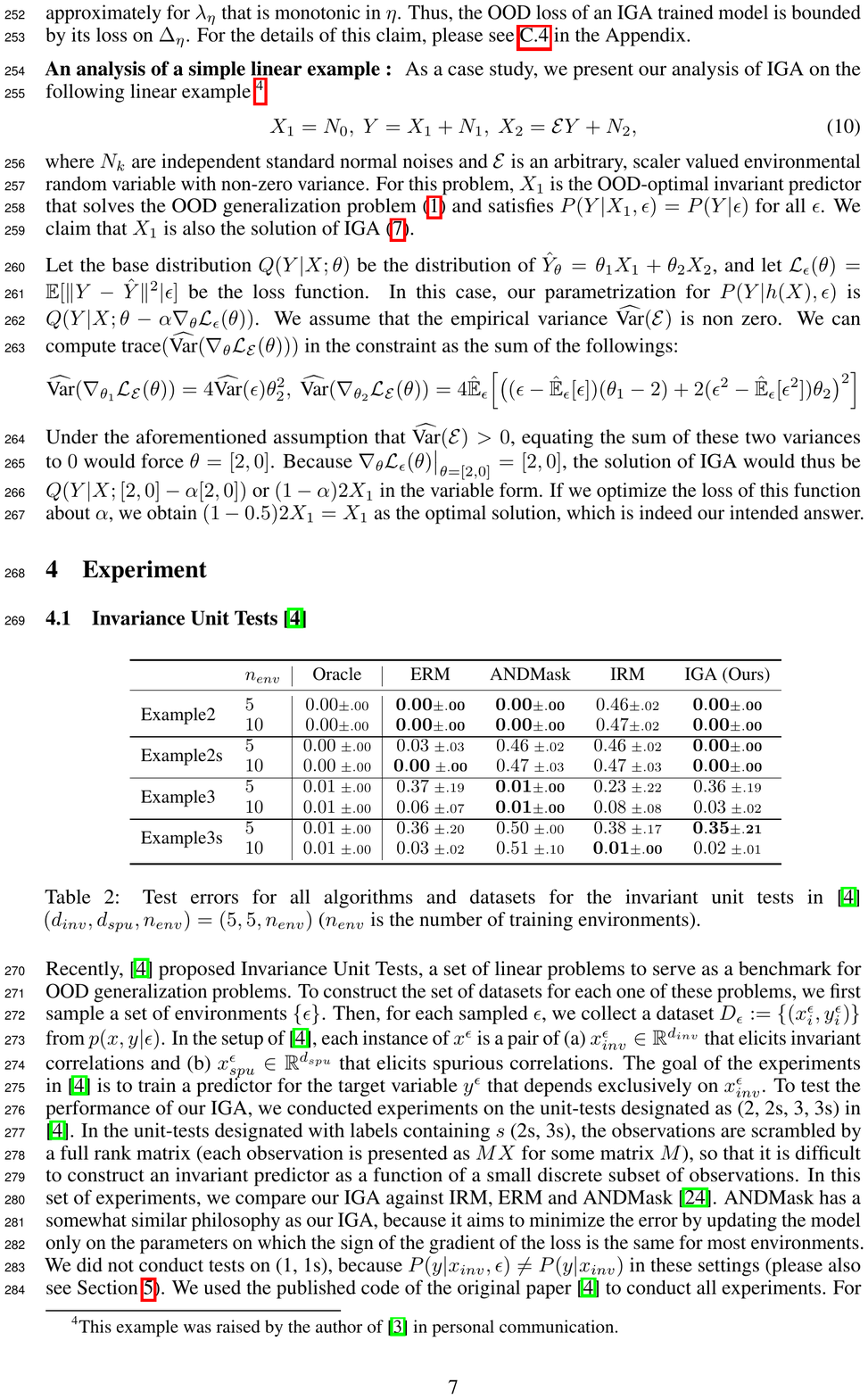}
\vspace{-0.5cm}
\tblcaption{Test errors for all algorithms and datasets for the invariant unit tests in \cite{aubinlinear}  $(d_{inv}, d_{spu}, n_{env}) = (5, 5, n_{env})$ ($n_{env}$ is the number of training environments). We provide the full version of this table in Appendix table \ref{appfig:IUT_table}} 
\label{fig:IUT_table}
\end{minipage}
\hspace{0.2cm}
\begin{minipage}[t]{0.325\hsize}
\includegraphics[width=\linewidth]{./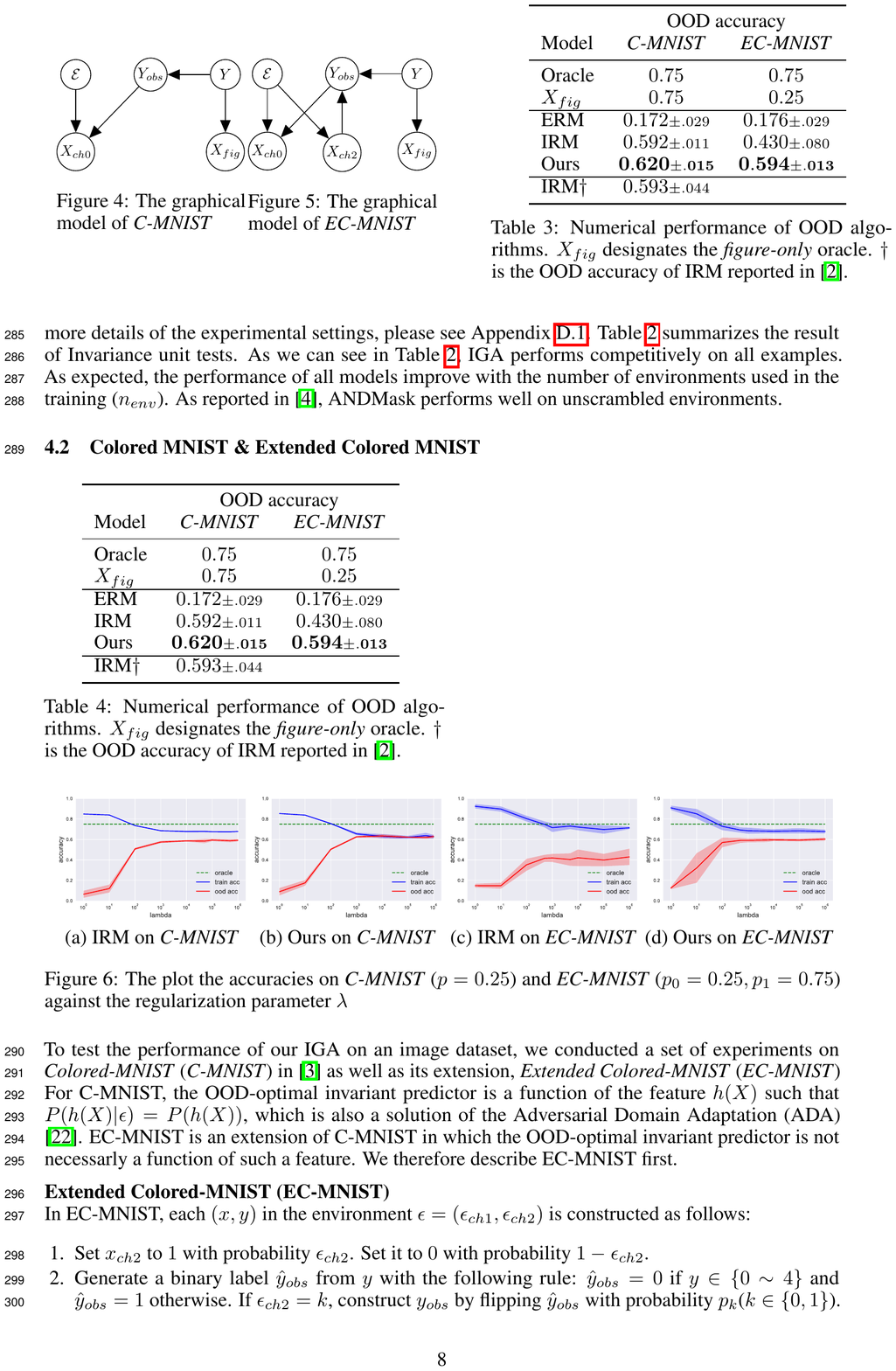}
\vspace{-0.5cm}
\tblcaption{Numerical performance of OOD algorithms. $X_{fig}$ designates the \textit{figure-only} oracle. $\dagger$ is the OOD accuracy of IRM reported in \cite{ahuja2020invariant}.}
\label{tab:result_table}
\end{minipage}
\end{tabular}
\end{figure}

\begin{figure*}[h]
\begin{tabular}{cccc}

\begin{minipage}[t]{0.24\hsize}
\stackunder[5pt]{\includegraphics[width=\linewidth]{./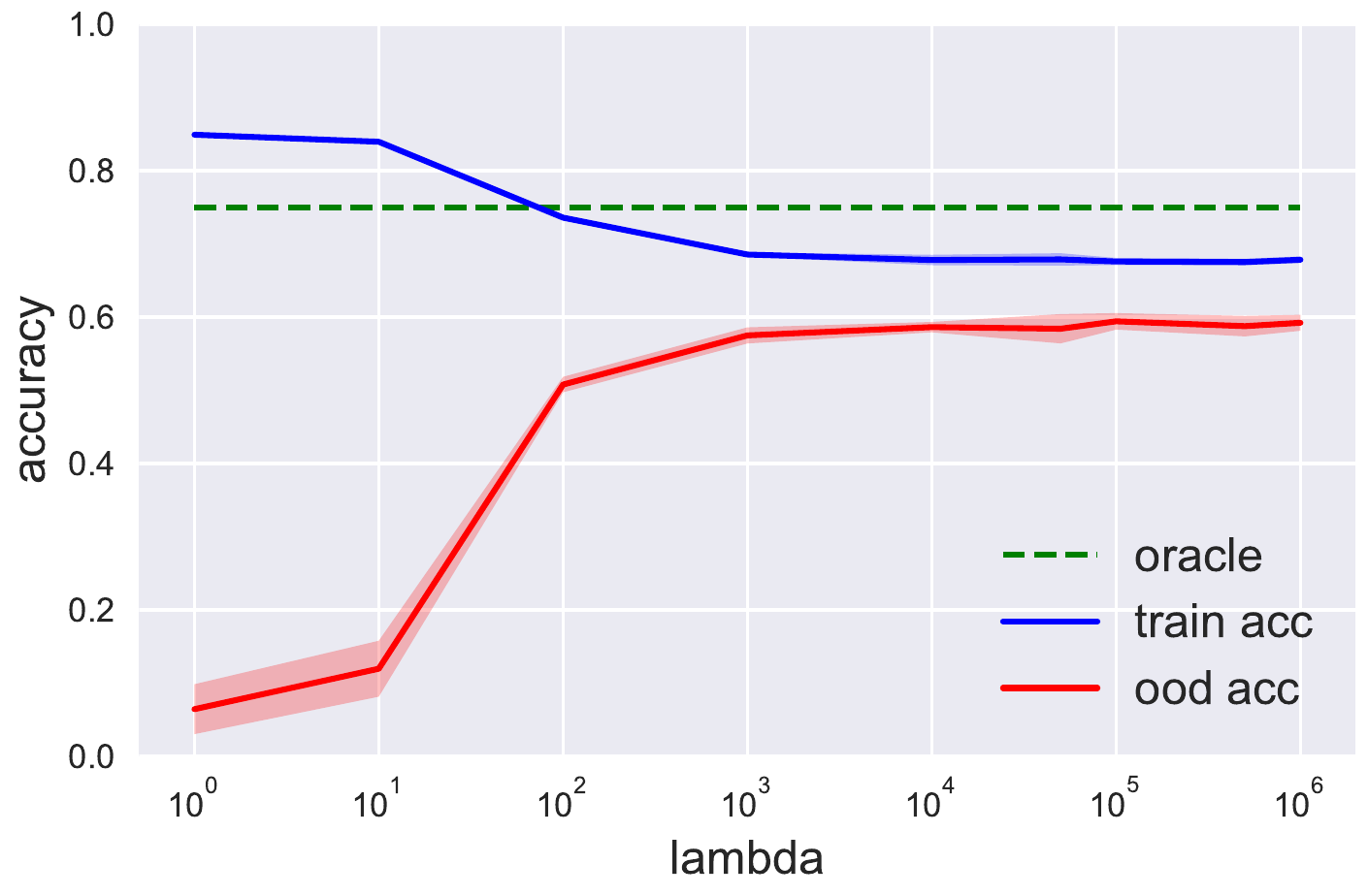}}{(a) IRM on \textit{C-MNIST}}
\end{minipage}
\centering
\begin{minipage}[t]{0.24\hsize}
\stackunder[5pt]{\includegraphics[width=\linewidth]{./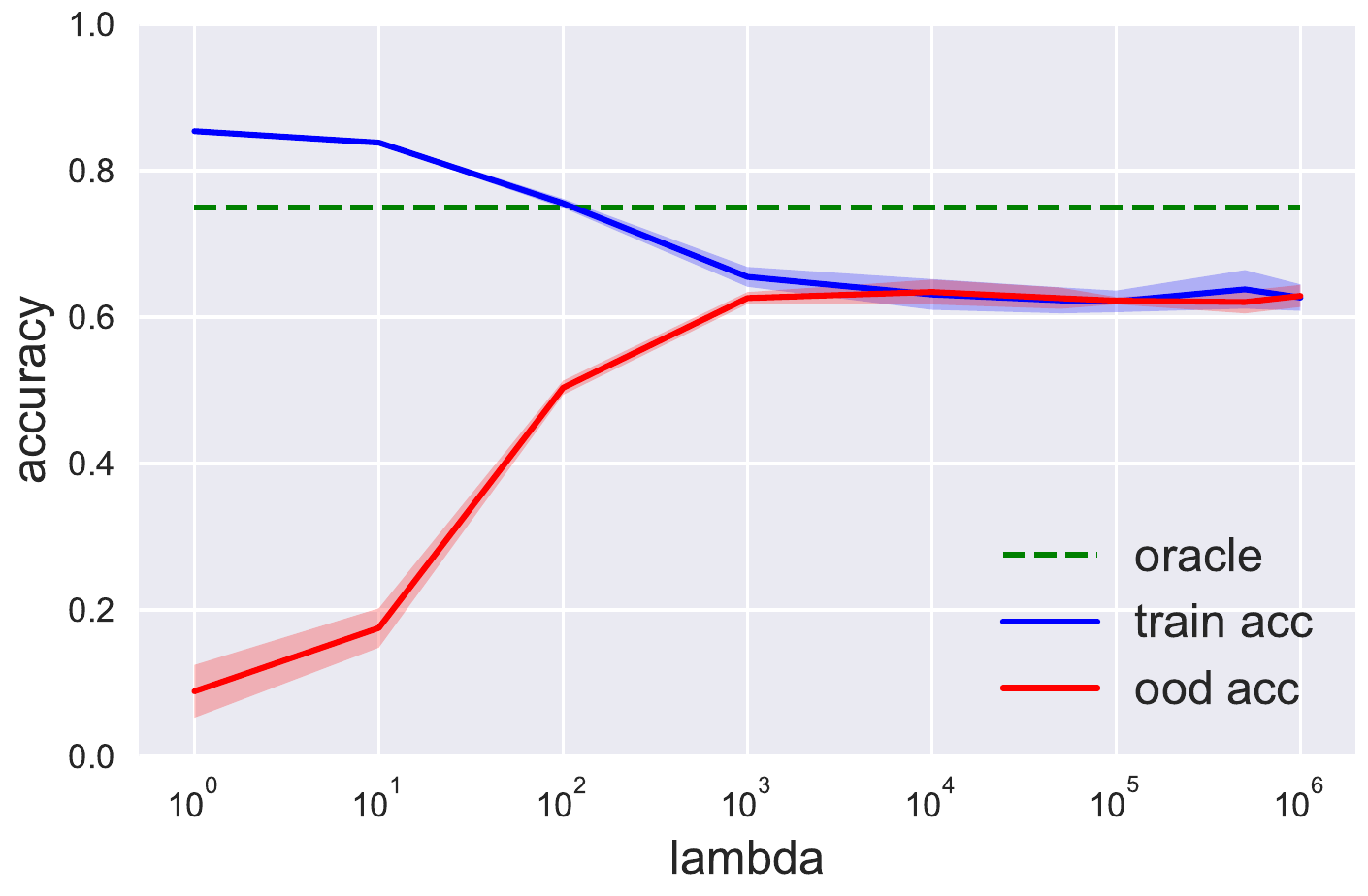}}{(b) Ours on \textit{C-MNIST}}
\end{minipage}
\label{fig:cmnist_result}

\begin{minipage}[t]{0.24\hsize}
\begin{center}
\stackunder[5pt]{\includegraphics[width=\linewidth]{./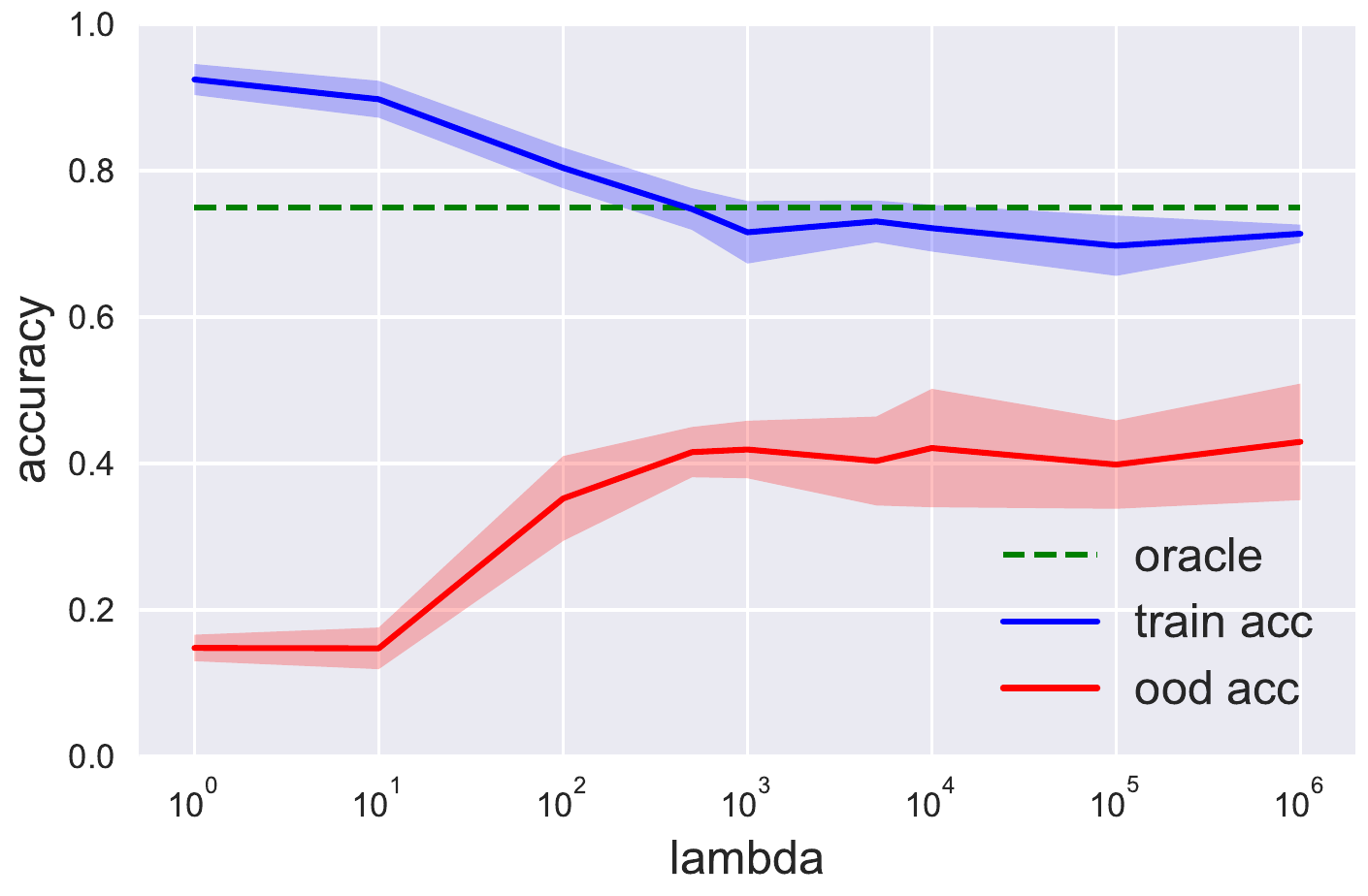}}{(c) IRM on \textit{EC-MNIST}}
\end{center}
\end{minipage}
\begin{minipage}[t]{0.24\hsize}
\begin{center}
\stackunder[5pt]{\includegraphics[width=\linewidth]{./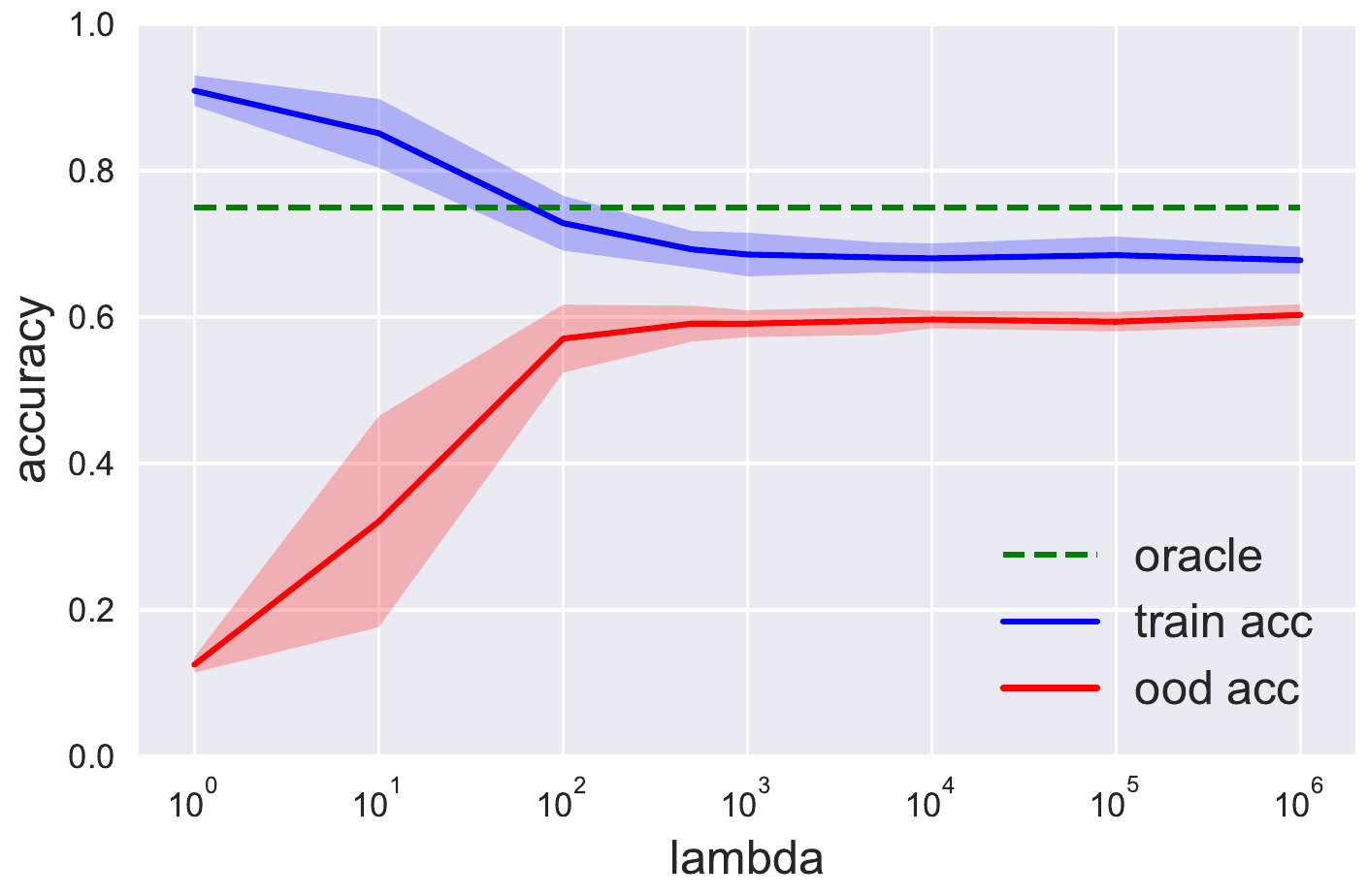}}{(d) Ours on \textit{EC-MNIST}}
\end{center}
\end{minipage}
\end{tabular}
\caption{The plot the accuracies on \textit{C-MNIST} ($p=0.25$) and \textit{EC-MNIST} ($p_0=0.25,p_1=0.75$) against the regularization parameter $\lambda$}
\label{fig:ecmnist_result_app}
\end{figure*}

\section{Related Works \& Discussion}
\label{sec:discussion}
\vspace{-0.3cm}
\textbf{Types of invariance \ }Recent studies of invariant predictor differ by the \textit{type of invariance} that is assumed to hold across environments.
As we mentioned in the introduction, many causality inspired methods \cite{rojas2018invariant, prevent19a} assume that there exists a causal mechanism that remains constant across different datasets.
In equation, they often assume that there exists a set of indices $S \subset \{1, ....,d\}$ such that $P(Y |  M_S \odot X , \epsilon) = P(Y | M_S \odot X)$, where $M_S$ is the binary mask function corresponding to $S$.  
\cite{chang2020invariant} also allows $S$ to depend on $X$.
The type of invariance that we investigate in this study is akin to this type of invariance, except that we only assume the existence of a possibly nonlinear function $h$ with $P(Y | h(X) , \epsilon) = P(Y | h(X))$. 
Meanwhile, \cite{arjovsky2019invariant} studies the set of features $h(X)$ that satisfies $\mathbb{E}[Y|h(X), \epsilon] = \mathbb{E}[Y |h(X)]$.
This is a less restrictive form of invariance because although $P(Y | h(X) , \epsilon) = P(Y | h(X))$ implies $\mathbb{E}[Y |h(X), \epsilon] = \mathbb{E}[Y |h(X)]$, its inverse does not hold in general. 
However, an exception occurs when $Y$ is a categorical variable representable as a one-hot vector, because $P(Y = e_k | h(X)) = \mathbb{E}[Y_k | h(X)]$. 
In a related note, we did not conduct experiments on unit-test 1 and 1s in Section \ref{subsec:exp_unit_test} because they  are both regression problems in which $x_{inv}$ satisfies $\mathbb{E}[Y |x_{inv}, \epsilon] = \mathbb{E}[Y |x_{inv}]$ only.
 \cite{kamath2021does, rosenfeld2021the} also stuides 
 the type of invariance discussed in \cite{arjovsky2019invariant}.

\textbf{MIP as an objective function \ } Our MIP is very much related to \cite{chang2020invariant}. 
However, instead of the commonly used OOD  objective \eqref{eq:objective_informal},  they define their objective as the minimization of $L^*_{test}  = \max_{\epsilon_a} H ( P(Y| M_S \odot X, \epsilon_a); P(Y| M_S \odot X, \epsilon \in \mathcal{A}_{train}))$ where $H$ is cross entropy and  $\mathcal{A}_{train}$ is the set of training environment.
The extent of their claims is also limited to a specific DAG model of their interest.
%
MIP as an objective function is not too special on its own, as it is just a variant of constrained maximization of mutual information (constrained InfoMax). 
One traditional form of constrained InfoMax comes with a constraint on entropy \cite{Constrained_MI}. 
However, because the invariance constraint we impose in MIP has a flavor of model selection, we may say that MIP in spirit is more closely related to Information Bottleneck (IB) \cite{DVIB, IB_ganso} and feature selection \cite{Learning2Explain, Vselection}. 
Developing further theoretical connections between the OOD problem and these methods is therefore a fascinating direction of research.

\textbf{Methodological limitation of IGA \ }One limitation of IGA is the difficulty of evaluating the regularization term $\text{Var}_\mathcal{E}(\nabla_\theta \mathcal{L}_\mathcal{E}(\theta))$.
To be able to apply IGA, we need to be able to empirically evaluate $\text{Var}_\mathcal{E}(\nabla_\theta \mathcal{L}_\mathcal{E}(\theta))$ correctly. 
Recall that $\text{Var}_\mathcal{E}(\nabla_\theta \mathcal{L}_\mathcal{E}(\theta))$ is a variance value taken with respect to the enviromnental variable $\mathcal{E}$, and that $\nabla_\theta \mathcal{L}_\mathcal{E}(\theta)  = \nabla_\theta \mathbb{E}[\ell(f_\theta(X), Y)]$ where $\ell$ is a divergence function and $f_\theta$ is the invariant predictor parametrized by $\theta$.
If the batch size during the training is too small,  the empirical evaluation of the expectation in $\nabla_\theta \mathcal{L}_\mathcal{E}(\theta)$ would have a non-trivial variance on its own and the evaluation of  $\text{Var}_\mathcal{E}(\nabla_\theta \mathcal{L}_\mathcal{E}(\theta))$ would be flawed.
To resolve this problem, we trained our invariant predictor with full batch size. 
Unfortunately, we could not train a good invariant predictor with smaller batch sizes.
This fact is barring us from applying our method to larger datasets such as CIFAR10 or Imagenet.
One important future work is to find a high-precision differentiable estimation for $\text{Var}_\mathcal{E}(\nabla_\theta \mathcal{L}_\mathcal{E}(\theta))$. 
Still yet, we shall also report that there has been a successful application of IGA on medical dataset \begin{footnote}{For anonymity, we refrain from directly citing this work. We however attach a pdf of this publication in the supplemental material with their citation of our preprint blacked out (Medical.pdf). }\end{footnote}. 
We discuss other technical limitation of IGA in Appendix \ref{appsubsec:method_lim}.

\textbf{Social Impact \ }Our study is an effort toward learning a model that can perform well in a newly encountered environment.   Further study in this field might allow safer/ more economical training of the model.
For example, our study might allow the user to train a good model without collecting a dataset from a dangerous/risky environment.
Further study of the OOD problem might also be helpful in promoting the fairness of the prediction \cite{mehrabi2019survey}.
However, one must be wary of the treatment of the environmental factor.
In this study, we consider the situations in which the identity of the environmental factor is unknown.
This is actually the case in many applications;  the effect of the hidden “environmental factor” might be inferrable only from a set of datasets sampled from a collection of environments.
In such cases, a user with an ulterior motive might be able to fake the true identity of the environment by using a particular collection of datasets.
For example, when the mission is to train a model that can perform well on people of all ages, a user with ulterior motive might collect a dataset of one age group from a particular socio-political group and a dataset of another age group from a yet-another socio-political group.
Such a user might advertise his/her predictor as an \textit{age} agnostic predictor, when in truth the “advertised” environmental factor (age) does not agree with the true factor that distinguishes one dataset from another in the specific collection used in the training process.
This problem applies to many statistical methods, and one must pay close attention to the data collection process in order to ensure fair analysis.

\bibliography{main.bib}
\bibliographystyle{abbrv}

\newpage
\appendix
\section{Appendix}

This appendix section is structured as follows.
In Appendix \ref{appsec:proof}, we provide the proofs for the formal versions of our theoretical results. 
In Appendix \ref{appsec:iga_linear}, we show the details of our computation in Section \ref{subsec:IGA_linear}.
In Appendix \ref{appsec:derivation}, we present the details of our derivation of IGA (Section \ref{subsec:IGA}). 
In Appendix \ref{appsec:implement}, we present the details of our experiment on Invariance Unit Tests \cite{aubinlinear} and MNIST derived datasets.
We also provide additional results in Appendix \ref{appsec:add_result} as well.

\section{Formal versions of Theorem \ref{thm:controllability_informal}, Theorem \ref{thm:MIP_objective_informal}, and Corollary \ref{thm:vs_Rojas} }
\label{appsec:proof}

\subsection{Important Remarks and notations}
\label{appsubsec:remark}
In this section, we provide the proofs for the formal versions of our theoretical statements (Theorem \ref{thm:controllability_informal}, Theorem \ref{thm:MIP_objective_informal}, Corollary \ref{thm:vs_Rojas}).  
The formal versions of our statements are based on measure theoretic probability, and hence are our proofs.
Throughout, we will use the notations in the standard probability texts like \cite{Durrett}.
We use upper case letters to represent random variables, and lower case letters to represent the realizations of the random variables. 
For example, $X$ would represent a random variable, and $x$ would be its realization. 
We also treat the environment $\mathcal{E}$ as a random variable, and use $\epsilon$ to represent a realization of $\mathcal{E}$.
We use $Z_1 \perp Z_2$ as a shorthand notation for "$Z_1$ is independent from $Z_2$." 

This section is structured as follows. 
First, in Appendix \ref{appsubsec:proof_setup} we will present the setup of our analysis along with the basic definitions that will be used throughout. 
Next, in Appendix \ref{appsubsec:control} we will present our proof of the formal version of our result about the controllability condition ( theorem \ref{thm:controllability_informal}),
In Appendix \ref{appsubsec:MIP_proof}, we prove the formal version of theorem \ref{thm:MIP_objective_informal}.
In Appendix \ref{appsubsec:vs_Rojas}, we will prove the formal version of corollary \ref{thm:vs_Rojas} that claims that our controllability condition \ref{thm:controllability_informal} generalizes the condition used in \cite{rojas2018invariant}. 
Finally, in Appendix \ref{appsec:other_proof}, we present our proofs for two general lemmas used in Appendix \ref{appsubsec:control} and Appendix \ref{appsubsec:MIP_proof} .


\subsection{Setup} 
\label{appsubsec:proof_setup}
Let $Y$ be the output random variable, $X$ be the input random variable and $\mathcal{E}$ be the environmental random variable.
We suppose that $Y, X, \mathcal{E}$ are all euclidean-space-valued random variables measurable with respect to the probability triple $(\Omega, \mathcal{F}, P)$. 
We follow the notation of the standard text like \cite{Durrett} and use $\sigma(X)$ to represent the sigma algebra of $X$.
Also following the probability convention, we say  $Z \in \sigma(X)$ whenever a random variable $Z$ is measurable with respect to $\sigma(X)$. 
For simplicity, we do not bother writing a feature of $X$ as $h(X)$ with some function $h$. 
Instead, we use the conventional notation $\Phi \in \sigma(X)$ to represent a feature of $X$, because the measurability of $\Phi$ with respect to $\sigma(X)$ is equivalent to the statement that there exists some measurable function $h$ such that $h(X) = \Phi$.

That being said, if $\Phi$ is a random variable that is measurable with respect to $\sigma(X)$, 
let us define $\mathcal{E}_\phi \in \sigma(\mathcal{E})$ to be a minimal random variable (in the sense of sigma algebra) such that $\Phi \perp \mathcal{E} | \mathcal{E}_\phi$.  
The variable $\mathcal{E}_\phi$ thus satisfies $P(\Phi | \mathcal{E}) = P(\Phi | \mathcal{E}_\phi)$. 
Also, if the conditional distribution of $\mathcal{E} | \mathcal{E}_\phi$ is smooth enough, the functional representation lemma \ref{thm:fxn_repn} states that there exists some $\mathcal{E}_\psi$ such that $\mathcal{E}_\psi \perp \mathcal{E}_\phi$ and $\sigma(\mathcal{E}) = \sigma  (\mathcal{E}_\psi, \mathcal{E}_\phi)$.
Because this implies the existence of an invertible map between $\mathcal{E}$ and  $(\mathcal{E}_\psi, \mathcal{E}_\phi)$, 
WLOG we write  $\epsilon = (\epsilon_\phi, \epsilon_\psi)$ for every $\epsilon \in \textrm{supp}(\mathcal{E})$.
Finally and most importantly, let us define the set $\mathcal{I}$ of invariant features to be
$$\mathcal{I} = \{\Phi \in \sigma(X) ; p(Y| \Phi, \mathcal{\epsilon}) =  p(Y | \Phi) ~\forall \epsilon \in \textrm{supp}(\mathcal{E})  \}$$ 
and that this set is non-empty.

\subsection{The proof of the formal version of theorem \ref{thm:controllability_informal}} \label{appsubsec:control}

We begin with the formal statement of our controllability condition \ref{def:controllability}.
\begin{defn}[Controllability Condition]
We say that a feature $\Phi$ satisfies a controllability condition if for all $\epsilon = (\epsilon_\phi, \epsilon_\psi) \in \textrm{supp}(\mathcal{E})$, there exists $\tilde \epsilon_\psi \in \textrm{supp}(\mathcal{E}_\psi)$ such that 
$$ Y \perp X  | \Phi,  \epsilon_\phi, \tilde \epsilon_\psi$$
\label{def:controllability}
\end{defn}
We emphasize that this is a condition about the feature $\Phi \in \mathcal{I}$.
To prove theorem \ref{thm:controllability_informal},
We will first prove the following small lemma about the property of $(Y, \Phi)$. 
\begin{lemma}
If $\Phi \in \mathcal{I}$, then $\mathcal{E}_\psi \perp (\Phi, Y) | \mathcal{E}_\phi$ \label{lemma:e_psi_ignore}
\end{lemma}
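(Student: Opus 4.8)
The plan is to unpack the three hypotheses --- the defining property of $\mathcal{E}_\phi$, the construction of $\mathcal{E}_\psi$ via the functional representation lemma, and membership $\Phi \in \mathcal{I}$ --- into conditional independence statements, and then assemble them with the chain rule for regular conditional distributions. Since $Y, X, \mathcal{E}$ are all Euclidean-space valued, all the regular conditional probabilities involved exist, so these manipulations are legitimate.

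First I would record two consequences of the hypotheses. From $\Phi \perp \mathcal{E} \mid \mathcal{E}_\phi$ (the defining minimality property of $\mathcal{E}_\phi$) together with $\sigma(\mathcal{E}) = \sigma(\mathcal{E}_\phi, \mathcal{E}_\psi)$, restricting the conditioned variable from $\mathcal{E}$ to the sub-$\sigma$-algebra $\sigma(\mathcal{E}_\psi)$ gives $\Phi \perp \mathcal{E}_\psi \mid \mathcal{E}_\phi$, i.e. $p(\Phi \mid \mathcal{E}_\phi, \mathcal{E}_\psi) = p(\Phi \mid \mathcal{E}_\phi)$. Second, from $\Phi \in \mathcal{I}$, i.e. $p(Y \mid \Phi, \epsilon) = p(Y \mid \Phi)$ for all $\epsilon \in \mathrm{supp}(\mathcal{E})$, we get the conditional independence $Y \perp \mathcal{E} \mid \Phi$; since $\sigma(\mathcal{E}_\phi) \subseteq \sigma(\mathcal{E}) = \sigma(\mathcal{E}_\phi, \mathcal{E}_\psi)$, this yields simultaneously $p(Y \mid \Phi, \mathcal{E}_\phi, \mathcal{E}_\psi) = p(Y \mid \Phi)$ and $p(Y \mid \Phi, \mathcal{E}_\phi) = p(Y \mid \Phi)$.

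Then I would combine these via the chain rule. Expanding the joint conditional of $(\Phi, Y)$ given $\mathcal{E} = (\mathcal{E}_\phi, \mathcal{E}_\psi)$,
\[
p(\Phi, Y \mid \mathcal{E}_\phi, \mathcal{E}_\psi) = p(Y \mid \Phi, \mathcal{E}_\phi, \mathcal{E}_\psi)\, p(\Phi \mid \mathcal{E}_\phi, \mathcal{E}_\psi) = p(Y \mid \Phi)\, p(\Phi \mid \mathcal{E}_\phi),
\]
and comparing with
\[
p(\Phi, Y \mid \mathcal{E}_\phi) = p(Y \mid \Phi, \mathcal{E}_\phi)\, p(\Phi \mid \mathcal{E}_\phi) = p(Y \mid \Phi)\, p(\Phi \mid \mathcal{E}_\phi),
\]
the two right-hand sides agree, so $p(\Phi, Y \mid \mathcal{E}) = p(\Phi, Y \mid \mathcal{E}_\phi)$, which is exactly the assertion $\mathcal{E}_\psi \perp (\Phi, Y) \mid \mathcal{E}_\phi$.

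The content of the lemma is really just this chain-rule computation; the only delicate points are bookkeeping ones. I expect the main (minor) obstacle to be justifying rigorously that the pointwise identity $p(Y \mid \Phi, \epsilon) = p(Y \mid \Phi)$ holding over $\mathrm{supp}(\mathcal{E})$ is equivalent to the conditional independence $Y \perp \mathcal{E} \mid \Phi$, and that passing from a conditional independence given $\mathcal{E}$ to one given a coarser variable such as $\mathcal{E}_\psi$ (or $\mathcal{E}_\phi$) is valid --- both handled by the existence and a.s.-uniqueness of regular conditional distributions on the Euclidean spaces in play. Neither is substantive.
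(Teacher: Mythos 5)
Your proof is correct and takes essentially the same route as the paper: both arguments apply the chain rule to $p(\Phi, Y \mid \mathcal{E})$, then invoke $\Phi \perp \mathcal{E} \mid \mathcal{E}_\phi$ (from the definition of $\mathcal{E}_\phi$) and $Y \perp \mathcal{E} \mid \Phi$ (from $\Phi \in \mathcal{I}$) to reduce each factor. The paper writes it as a single chain of equalities from $P(Y,\Phi\mid\epsilon)$ to $P(Y,\Phi\mid\epsilon_\phi)$, whereas you expand both sides and compare, but the logical content is identical.
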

\begin{proof}
\begin{align}
    P(Y, \Phi | \epsilon) &=  P(Y, \Phi | \epsilon_\phi, \epsilon_\psi)  \\
    &= P(Y |  \Phi,  \epsilon_\phi, \epsilon_\psi) P ( \Phi |  \epsilon_\phi, \epsilon_\psi) \\
    &= P(Y | \Phi) P(\Phi | \epsilon_\phi)\\
    &= P(Y | \Phi, \epsilon_\phi ) P(\Phi | \epsilon_\phi)  \\
    &= P(Y , \Phi| \epsilon_\phi) 
\end{align} 
The second equality follows from the definition of conditional probability.
The third and fourth equality follows from the fact that $g(\mathcal{E}) \perp Y  | \Phi$ for any measurable $g$. The last equality follows from the definition of $\mathcal{E}_\phi$.
\end{proof}
This result implies that $(Y, \Phi) | \epsilon_\phi, \epsilon_\psi  =  (Y, \Phi) | \epsilon_\phi, \epsilon_{\psi'}$ for any arbitrary pair $\epsilon_\psi$ and $\epsilon_{\psi'}$.  
In other words, given $\Phi$, the distribution of $Y$ changes only with respect to $\epsilon_\phi$.  We can use this result to prove the following most important  result of our work. 

\begin{thm}
Let $g$ be a strictly convex, differentiable function and $D$ be the corresponding  Bregman Loss function that is convex in both input variables. Also, let $\Phi \in \sigma(\mathcal{I})$ and write $\mathbb{E}[Y |\Phi] = f^*(X)$.  
If $\Phi$ satisfies the controllability condition \ref{def:controllability}, then
$$f^* = \argmin_f \sup_{\epsilon \in \textrm{supp}(\mathcal{E})} \mathbb{E} [D(Y, f(X)| \epsilon)]$$  
\label{thm:controllability}
\end{thm}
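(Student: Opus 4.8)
The plan is to exploit the standard variational characterization of the Bregman risk minimizer together with the controllability condition to reduce the worst-case (sup over $\epsilon$) problem to a pointwise optimality statement. First I would recall that for a strictly convex differentiable $g$ with associated Bregman divergence $D$, the conditional-expectation projection property gives, for any feature $\Phi$ and any predictor $f$ that is $\sigma(\Phi)$-measurable,
\begin{align}
\mathbb{E}[D(Y, f(X)) \mid \Phi] = \mathbb{E}[D(Y, \mathbb{E}[Y\mid\Phi])\mid\Phi] + D(\mathbb{E}[Y\mid\Phi], f(X)),
\label{eq:pythag}
\end{align}
the Bregman Pythagorean identity; minimizing over $\sigma(\Phi)$-measurable $f$ forces $f = \mathbb{E}[Y\mid\Phi] = f^*$. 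More generally, for an \emph{arbitrary} measurable $f$ of $X$, conditioning on $X$ shows $\mathbb{E}[D(Y, f(X))\mid X, \epsilon] \ge \mathbb{E}[D(Y, \mathbb{E}[Y\mid X,\epsilon])\mid X,\epsilon]$, so the \emph{unconstrained} per-environment optimum is $\mathbb{E}[Y\mid X,\epsilon]$. The subtlety is that the OOD-optimal predictor is \emph{not} allowed to depend on $\epsilon$, so the argument must go through $\Phi$.

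The key steps, in order, would be: (1) Fix an arbitrary candidate predictor $f$, a measurable function of $X$ alone. I want to show $\sup_\epsilon \mathbb{E}[D(Y, f(X))\mid\epsilon] \ge \sup_\epsilon \mathbb{E}[D(Y, f^*(X))\mid\epsilon]$. (2) Using Lemma \ref{lemma:e_psi_ignore}, observe that for $\Phi\in\mathcal{I}$ the joint law of $(Y,\Phi)$ given $\epsilon = (\epsilon_\phi,\epsilon_\psi)$ depends only on $\epsilon_\phi$, and in particular $\mathbb{E}[Y\mid\Phi]=f^*(X)$ is genuinely invariant; hence $\mathbb{E}[D(Y,f^*(X))\mid\epsilon]$ also depends only on $\epsilon_\phi$. (3) Fix the environment $\epsilon^\star$ achieving (or approaching) $\sup_\epsilon\mathbb{E}[D(Y,f^*(X))\mid\epsilon]$, and write $\epsilon^\star=(\epsilon^\star_\phi,\epsilon^\star_\psi)$. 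Apply the controllability condition to obtain $\tilde\epsilon_\psi$ with $Y\perp X\mid \Phi,\epsilon^\star_\phi,\tilde\epsilon_\psi$; set $\tilde\epsilon = (\epsilon^\star_\phi,\tilde\epsilon_\psi)\in\mathrm{supp}(\mathcal{E})$. On this environment the conditional independence means $\mathbb{E}[Y\mid X,\tilde\epsilon] = \mathbb{E}[Y\mid\Phi,\tilde\epsilon]$, and by step (2) the latter equals $\mathbb{E}[Y\mid\Phi]=f^*(X)$; so $f^*$ is the \emph{unconstrained} Bayes-optimal predictor for the Bregman risk on $\tilde\epsilon$. (4) Therefore, using the unconstrained optimality from the Pythagorean identity applied on $\tilde\epsilon$,
\begin{align}
\sup_\epsilon \mathbb{E}[D(Y,f(X))\mid\epsilon] \;\ge\; \mathbb{E}[D(Y,f(X))\mid\tilde\epsilon] \;\ge\; \mathbb{E}[D(Y,f^*(X))\mid\tilde\epsilon] \;=\; \mathbb{E}[D(Y,f^*(X))\mid\epsilon^\star] \;=\; \sup_\epsilon\mathbb{E}[D(Y,f^*(X))\mid\epsilon],
\label{eq:chain}
\end{align}
where the last-but-one equality is step (2) (the risk of $f^*$ agrees on $\tilde\epsilon$ and $\epsilon^\star$ since they share $\epsilon_\phi$), and the final one is the choice of $\epsilon^\star$. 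Since $f$ was arbitrary, $f^*$ is the argmin, which is the claim. If the supremum is not attained one runs the same chain with an $\varepsilon$-maximizing sequence and takes limits.

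I expect the main obstacle to be the measure-theoretic bookkeeping in steps (2) and (3): carefully justifying that "the law of $(Y,\Phi)\mid\epsilon$ depends only on $\epsilon_\phi$" transfers to "$\mathbb{E}[Y\mid\Phi,\tilde\epsilon] = \mathbb{E}[Y\mid\Phi]$ a.s." (one must check the conditioning on $\tilde\epsilon$ does not change the $\sigma(\Phi)$-conditional expectation, which is exactly where $\Phi\in\mathcal{I}$ and Lemma \ref{lemma:e_psi_ignore} are used), and that the object $\mathbb{E}[Y\mid X,\tilde\epsilon]$ obtained from the controllability conditional independence $Y\perp X\mid\Phi,\epsilon^\star_\phi,\tilde\epsilon_\psi$ really coincides with $f^*(X)$ rather than some other $\sigma(\Phi)$-measurable function. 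A secondary technical point is verifying the unconstrained Bregman-risk optimality of the conditional mean for a general symmetric Bregman loss / KL — this is classical but should be stated cleanly, perhaps as the lemma deferred to Appendix \ref{appsec:other_proof}. Everything else (the sup manipulations in \eqref{eq:chain}) is routine once these two identifications are in place.
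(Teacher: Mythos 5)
Your proposal is correct, and while it relies on the same underlying ingredients as the paper's proof (controllability, Lemma \ref{lemma:e_psi_ignore}, and the Bayes-optimality of the conditional mean for Bregman losses), you decompose the argument in a genuinely different and somewhat cleaner way. The paper starts from an arbitrary measurable $f$ and, on the modified environment $\epsilon'=(\epsilon_\phi,\tilde\epsilon_\psi)$, uses Jensen's inequality (convexity of $D$ in its second slot) together with $X\perp Y\mid\Phi,\epsilon'$ to project $f(X)$ onto the $\sigma(\Phi)$-measurable surrogate $h(\Phi,\epsilon'):=\mathbb{E}[f(X)\mid\Phi,\epsilon']$; it then invokes Lemma \ref{lemma:e_psi_ignore} to swap $\epsilon'$ for $\epsilon$, applies the constrained Bayes-optimality of $\mathbb{E}[Y\mid\Phi]$ among $\sigma(\Phi)$-measurable predictors, and finishes with $Y\perp\mathcal{E}\mid\Phi$. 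Your version bypasses the Jensen projection entirely: you observe, via controllability and $\Phi\in\mathcal{I}$, that $\mathbb{E}[Y\mid X,\tilde\epsilon]=\mathbb{E}[Y\mid\Phi,\tilde\epsilon]=\mathbb{E}[Y\mid\Phi]=f^*(X)$ on the modified environment, so $f^*$ is already the \emph{unconstrained} Bayes-optimal predictor over all $\sigma(X)$-measurable functions under $P(\cdot\mid\tilde\epsilon)$, and $\mathbb{E}[D(Y,f(X))\mid\tilde\epsilon]\ge\mathbb{E}[D(Y,f^*(X))\mid\tilde\epsilon]$ is immediate from the standard optimality lemma; Lemma \ref{lemma:e_psi_ignore} is then used once to equate the risk of $f^*$ on $\tilde\epsilon$ and on $\epsilon^*$. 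Each route buys something: yours is more compact and avoids re-deriving the Jensen step that is in any case inside the proof of the Bregman Bayes-optimality lemma, whereas the paper's is more explicit about where the convexity of $D(y,\cdot)$ enters and, by comparing an arbitrary $\epsilon$ to its associated $\epsilon'$, establishes the pointwise domination $\mathbb{E}[D(Y,f(X))\mid\epsilon']\ge\mathbb{E}[D(Y,f^*(X))\mid\epsilon]$ for every $\epsilon$ rather than only for the worst case, which automatically sidesteps the non-attained-supremum technicality that you correctly handle with an $\varepsilon$-maximizing sequence. Your identification of the two genuinely load-bearing verifications — that $\mathbb{E}[Y\mid\Phi,\tilde\epsilon]=\mathbb{E}[Y\mid\Phi]$ a.s., and that $\mathbb{E}[Y\mid X,\tilde\epsilon]$ really is $f^*(X)$ — is exactly right.
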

\begin{proof}
We are going to leverage the fact that, if $\mathcal{G}$ is a sub sigma algebra of $\mathcal{F}$ to which $Y$ is measurable, then \cite{Banerjee03onthe}
\begin{align}
    \argmin_{Z \in \mathcal{G}} \mathbb{E}[D(Y, Z)] = \mathbb{E}[Y| \mathcal{G}].  
\end{align}
The situation considered here includes the case in which $D$ is a Kullback-Leibler Divergence loss or equivalently the case in which the loss $E[D(Y, Z)]$ is a cross entropy loss $E[- Y^T \log Z]$ that is convex in both $Y$ and $Z$. The case of $L_2$ loss is trivially included as well, because the $L_2$ metric is symmetric. See Appendix \ref{appsec:other_proof} and \cite{Banerjee03onthe} for more detail. 

To show the claim of this theorem, it suffices to show that, for any $\epsilon \in \textrm{supp}(\mathcal{E})$ and any measurable $f$, there exists $\epsilon' \in \textrm{supp}(\mathcal{E})$ such that 
$\mathbb{E}[D(Y, f(X))|\epsilon'] > \mathbb{E}[D(Y, f^*(X))|\epsilon]$
so that 
$\sup_\epsilon \mathbb{E}[D(Y, f(X))|\epsilon] \geq \sup_\epsilon \mathbb{E}[D(Y, f^*(X))|\epsilon]$ for all measurable $f$. 
Now, if $\epsilon = (\epsilon_\phi, \epsilon_\psi)$,  let us choose $\tilde \epsilon_\psi$ to be such that
$X \perp Y | \Phi, \epsilon_\phi, \tilde \epsilon_\psi$ and write 
$\epsilon' = (\epsilon_\phi, \tilde \epsilon_\psi)$
Then it follows that 
\begin{align}
    \mathbb{E}_{X,Y} [D(Y, f(X)) | \epsilon'] &=  \mathbb{E}_{\Phi,Y} [ \mathbb{E}_X[ D(Y, f(X)) | \Phi, Y, \epsilon']  \epsilon'] \\  
    &\geq \mathbb{E}_{\Phi,Y} [  D(Y, \mathbb{E}_X[f(X)|\Phi, \epsilon', Y]) | \epsilon'] \\ 
    &=\mathbb{E}_{\Phi,Y} [  D(Y, \mathbb{E}_X[f(X)|\Phi, \epsilon']) | \epsilon'] \\ 
    &:=\mathbb{E}_{\Phi,Y} [  D(Y, h(\Phi, \epsilon')) | \epsilon'] 
\end{align}
where the first equality follows from the tower rule, the second inequality follows from Jensen's inequality and the fact that our $D(y,x)$ is convex with respect to $x$ , and the third equality follows from our choice of $\epsilon'$. 
In the fourth equality, we defined $h(\Phi, \epsilon') := \mathbb{E}[f(X)|\Phi, \epsilon']$. With $\epsilon' = (\epsilon_\phi, \tilde \epsilon_\psi)$ fixed to be constant, $D(h(\Phi, \epsilon') , Y) $ is a random variable measurable with respect to $(\Phi, Y)$. 
Therefore, 
\begin{align}
   \mathbb{E}_{\Phi,Y}[ D(Y, h(\Phi, \epsilon')) | \epsilon'] &=  \mathbb{E}_{\Phi,Y}[ D(Y, h(\Phi, \epsilon_\phi, \tilde \epsilon_\psi)) | \epsilon_\phi, \epsilon_\psi] \\
    &\geq \mathbb{E}_{\Phi,Y}[ D(Y, \mathbb{E}_Y[Y | \Phi, \epsilon_\phi, \epsilon_\psi]) | \epsilon_\phi, \epsilon_\psi] \\
    &= \mathbb{E}_{\Phi,Y}[ D(Y, \mathbb{E}_Y[Y | \Phi]) | \epsilon] \\
    &:= \mathbb{E}_{\Phi,Y}[D(Y, f^*(X)) | \epsilon] 
\end{align}
where the first equality follows from the lemma \ref{lemma:e_psi_ignore}, the second equality follows from the minimality of conditional expectation, and the third equality follows from the fact that $Y \perp \mathcal{E} | \Phi$. 
All together we have proven that, for any $\epsilon$ there is $\epsilon'$ with
$$ \mathbb{E}_{\Phi,Y}[ D(Y, f(X)) | \epsilon']  \geq  \mathbb{E}_{\Phi,Y}[ D(Y, f^*(X)) | \epsilon]$$
as desired.
\end{proof}

\subsection{The proof of the formal version of theorem \ref{thm:MIP_objective_informal}} \label{appsubsec:MIP_proof}

To prove theorem \ref{thm:MIP_objective_informal}, we first need several small lemmas.
The first lemma states that any $\Phi$ that satisfies the controllability condition \ref{def:controllability} achieves the maximal mutual information with $Y$ in at least one environment.
\begin{lemma}
Suppose $\Phi \in \mathcal{I}$ and suppose that $\epsilon$ is such that $X \perp Y | \Phi, \epsilon$. Then for this particular $\epsilon$, 
$$\Phi = \arg\max_{Z \in \sigma(X)} I(Y ;  Z | \epsilon)$$
\end{lemma}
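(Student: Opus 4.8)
The plan is to show that, in the special environment $\epsilon$ where $X \perp Y \mid \Phi, \epsilon$, the feature $\Phi$ extracts \emph{all} the information that $X$ carries about $Y$, and that no feature of $X$ can do better. First I would observe the data-processing-type bound: for any $Z \in \sigma(X)$, since $Z$ is a function of $X$, we have the Markov chain $Z - X - Y$ conditionally on $\epsilon$, hence $I(Y; Z \mid \epsilon) \le I(Y; X \mid \epsilon)$. This already gives the right-hand-side upper bound on the objective $I(Y; Z \mid \epsilon)$ over all $Z \in \sigma(X)$. It therefore suffices to show that $\Phi$ attains this upper bound, i.e. $I(Y; \Phi \mid \epsilon) = I(Y; X \mid \epsilon)$.

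The key step is to unpack the hypothesis $X \perp Y \mid \Phi, \epsilon$, which is exactly the controllability conclusion in this environment. Conditional independence gives $I(Y; X \mid \Phi, \epsilon) = 0$. Now use the chain rule for (conditional) mutual information: $I(Y; X \mid \epsilon) = I(Y; \Phi \mid \epsilon) + I(Y; X \mid \Phi, \epsilon)$, where I am using that $\sigma(\Phi) \subseteq \sigma(X)$ so that conditioning on $(\Phi, X)$ is the same as conditioning on $X$. The second term vanishes by conditional independence, so $I(Y; X \mid \epsilon) = I(Y; \Phi \mid \epsilon)$, which combined with the data-processing bound shows $\Phi$ is a maximizer. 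For the ``$=$'' in the statement (uniqueness up to the relevant equivalence, or at least that $\Phi$ is \emph{a} maximizer), I would note that the lemma as used only needs $\Phi$ to achieve the supremum; if a genuine $\arg\max$ statement is wanted, any other maximizer $Z$ must satisfy $I(Y; X \mid Z, \epsilon) = 0$ as well, and one argues $\Phi$ is minimal/canonical among such features or simply interprets $\arg\max$ as ``attains the maximum.''

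The main obstacle is purely measure-theoretic bookkeeping: mutual information and its chain rule must be justified for general euclidean-valued random variables (not discrete), so I would either invoke the standard measure-theoretic definition of conditional mutual information via regular conditional distributions and the fact that the chain rule $I(Y; X \mid \epsilon) = I(Y; \Phi \mid \epsilon) + I(Y; X \mid \Phi, \epsilon)$ holds in that generality, or — consistent with the rest of the appendix — phrase everything through KL divergence / conditional-independence sigma-algebra statements and cite the functional representation / data-processing facts already in use. I expect no real difficulty beyond making sure the conditioning events have positive measure or, more cleanly, working with the conditional distributions directly so that the identities hold $P$-almost surely.
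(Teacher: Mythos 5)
Your argument is correct and is essentially the paper's own proof in direct rather than contrapositive form: the paper supposes some $B\in\sigma(X)$ with $I(Y;B,\Phi\mid\epsilon)>I(Y;\Phi\mid\epsilon)$, applies the same chain rule $I(Y;B,\Phi\mid\epsilon)=I(Y;\Phi\mid\epsilon)+I(Y;B\mid\Phi,\epsilon)$, and derives a contradiction from $B\perp Y\mid\Phi,\epsilon$, whereas you decompose $I(Y;X\mid\epsilon)$ and invoke data processing. The two routes are interchangeable, so no substantive difference.
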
 
\begin{proof}
Suppose that there exists $B \in \sigma(X)$ with $I(Y; B, \Phi | \epsilon) > I(Y; \Phi | \epsilon)$. Then by the property of 
conditional mutual information,
\begin{align}
    I(Y; B, \Phi | \epsilon) = I(Y; \Phi| \epsilon) +   I(Y; B|\Phi,  \epsilon)
\end{align}
This equation implies that $ I(Y; B|\Phi, \epsilon) > 0$.  However, $B \perp Y | \Phi, \epsilon$ by the choice of $\epsilon$ and the property of $\Phi$, so this must be zero, and it is a contradiction.  
\end{proof}

The next lemma states that any $\Phi$ that satisfies the controllability condition \ref{def:controllability} is maximal in $\mathcal{I}$.
\begin{lemma}
Suppose $\Phi \in \mathcal{I}$ and suppose that for some $\epsilon^*$. $Y \perp X | \Phi, \epsilon^*$.  Then $\Phi$ is maximal in the sense that, there is no $\tilde \Phi \in \mathcal{I}$ with $\Phi \in \sigma(\tilde \Phi)$. 
\label{thm:maximal}
\end{lemma}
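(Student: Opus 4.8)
The plan is to show that any $\tilde\Phi \in \mathcal{I}$ with $\Phi \in \sigma(\tilde\Phi)$ induces exactly the same regular conditional distribution of $Y$ as $\Phi$ does, i.e.\ $p(Y \mid \tilde\Phi) = p(Y \mid \Phi)$ almost surely; this is the operative content of ``$\Phi$ is maximal'' — no invariant feature refining $\Phi$ can extract strictly more information about $Y$, so in particular $\mathbb{E}[Y\mid\tilde\Phi]=\mathbb{E}[Y\mid\Phi]$ and $I(Y;\tilde\Phi)=I(Y;\Phi)$. The argument splits into three short links, each consuming one hypothesis: $Y\perp X\mid\Phi,\epsilon^*$; the refinement $\Phi\in\sigma(\tilde\Phi)$; and membership of $\Phi,\tilde\Phi$ in $\mathcal{I}$.

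First, since $\tilde\Phi$ is $\sigma(X)$-measurable, the hypothesis $Y\perp X\mid\Phi,\epsilon^*$ restricts to $Y\perp\tilde\Phi\mid\Phi,\epsilon^*$, i.e.\ $p(Y\mid\tilde\Phi,\Phi,\epsilon^*)=p(Y\mid\Phi,\epsilon^*)$. Second, because $\Phi\in\sigma(\tilde\Phi)$ we have $\sigma(\tilde\Phi,\Phi)=\sigma(\tilde\Phi)$, hence $p(Y\mid\tilde\Phi,\Phi,\epsilon^*)=p(Y\mid\tilde\Phi,\epsilon^*)$; combining these gives $p(Y\mid\tilde\Phi,\epsilon^*)=p(Y\mid\Phi,\epsilon^*)$. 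Third, apply invariance at $\epsilon^*$ — legitimate provided $\epsilon^*\in\textrm{supp}(\mathcal{E})$, which is exactly how such an $\epsilon^*$ arises from the controllability condition in the settings of interest: $\tilde\Phi\in\mathcal{I}$ gives $p(Y\mid\tilde\Phi,\epsilon^*)=p(Y\mid\tilde\Phi)$ and $\Phi\in\mathcal{I}$ gives $p(Y\mid\Phi,\epsilon^*)=p(Y\mid\Phi)$, so $p(Y\mid\tilde\Phi)=p(Y\mid\Phi)$. Since these two regular conditional laws of $Y$ coincide a.s., their means and conditional entropies coincide, which yields $\mathbb{E}[Y\mid\tilde\Phi]=\mathbb{E}[Y\mid\Phi]$ and $I(Y;\tilde\Phi)=H(Y)-H(Y\mid\tilde\Phi)=H(Y)-H(Y\mid\Phi)=I(Y;\Phi)$. (The conditional independence in the first link could equally be obtained from the preceding lemma: $\Phi$ maximizes $I(Y;Z\mid\epsilon^*)$ over $Z\in\sigma(X)$, while $\Phi\in\sigma(\tilde\Phi)$ forces $I(Y;\tilde\Phi\mid\epsilon^*)\geq I(Y;\Phi\mid\epsilon^*)$, hence $I(Y;\tilde\Phi\mid\Phi,\epsilon^*)=0$.)

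The main obstacle is entirely bookkeeping with regular conditional distributions: one must keep each equality at the level of conditional laws rather than of means or mutual informations, track the a.s.\ qualifications, and make explicit that conditioning on the single environment $\epsilon^*$ is meaningful (so $\epsilon^*$ must lie in $\textrm{supp}(\mathcal{E})$). A point worth stating plainly is what the lemma actually delivers: it does not show $\tilde\Phi\in\sigma(\Phi)$ — distinct features can carry the same conditional law of $Y$ — but only that $\tilde\Phi$ is never a strictly more predictive invariant feature than $\Phi$. This is precisely the maximality needed downstream: under the extra hypothesis of Theorem~\ref{thm:MIP_objective_informal} that every invariant feature is a function of a common $h_0\in\mathcal{I}$, applying the lemma with $\tilde\Phi=h_0$ yields $p(Y\mid h_0)=p(Y\mid\Phi)$, so $h_0$ both maximizes $I(Y;h)$ over invariant $h$ (being a refinement of all of them) and has Bayes predictor $\mathbb{E}[Y\mid h_0]=\mathbb{E}[Y\mid\Phi]=f^*$, which is OOD-optimal by Theorem~\ref{thm:controllability}.
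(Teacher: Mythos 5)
Your proposal is correct and follows essentially the same route as the paper's proof: the same chain $P(Y\mid\tilde\Phi)=P(Y\mid\tilde\Phi,\epsilon^*)=P(Y\mid\tilde\Phi,\Phi,\epsilon^*)=P(Y\mid\Phi,\epsilon^*)=P(Y\mid\Phi)$, using the invariance of $\tilde\Phi$ and $\Phi$, the refinement $\Phi\in\sigma(\tilde\Phi)$, and $Y\perp X\mid\Phi,\epsilon^*$ in the middle step. Your explicit reading of ``maximal'' as ``any invariant refinement carries the same conditional law of $Y$'' matches what the paper's proof actually establishes, and is the form in which the lemma is invoked in the proof of Theorem~\ref{thm:MIP_formal}.
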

\begin{proof}
Let $\tilde \Phi \in \mathcal{I}$ as stated in the assumption. Then 
\begin{align}
    P(Y | \tilde \Phi) &=   P(Y | \tilde \Phi, \epsilon^*) \\
    &= P(Y | \tilde \Phi, \Phi, \epsilon^*)  \\
    &= P(Y | \Phi , \epsilon^*) \\
    &= P(Y | \Phi) 
\end{align}
The first line follows from the property of $\mathcal{I}$; 
$Y \perp \mathcal{E} | \Phi.$  The second line follows from the fact that $r(\tilde \Phi) = \Phi$ for some $r$. The third line follows from $X \perp Y | \Phi, \epsilon^*$ for this specific $\epsilon^*$. Lastly,
the final line follows again from the property of $\mathcal{I}$. 
\end{proof}

The following is the formal version of theorem \ref{thm:MIP_objective_informal}. 
\begin{thm}
\label{thm:MIP_formal}
Suppose that there exists at least one $\Phi$ for which there is a corresponding $\tilde \epsilon_\psi$ for every $\epsilon_\phi$ such that $X \perp Y | \Phi, \epsilon_\phi, \tilde \epsilon_\psi$. If $\mathcal{I}$ is generated by one $\Phi_0$, then  $\mathbb{E}[Y | \Phi^*]$ is OOD optimal if 
\begin{align}
    \Phi^*  = \argmax_{\Phi \in \mathcal{I}} I(Y ; \Phi) 
\end{align}
\end{thm}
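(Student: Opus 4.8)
The plan is to reduce the statement to Theorem \ref{thm:controllability}. Concretely, we will show that (a) the generating invariant feature $\Phi_0$ itself satisfies the controllability condition, so that $\mathbb{E}[Y\,|\,\Phi_0]$ is OOD optimal by Theorem \ref{thm:controllability}, and (b) every maximizer $\Phi^*$ of $I(Y;\cdot)$ over $\mathcal{I}$ induces the same predictor as $\Phi_0$, i.e. $\mathbb{E}[Y\,|\,\Phi^*]=\mathbb{E}[Y\,|\,\Phi_0]$ almost surely. Together these give that $\mathbb{E}[Y\,|\,\Phi^*]$ is OOD optimal, which is the claim. Here ``$\mathcal{I}$ is generated by $\Phi_0$'' is read as in the informal Theorem \ref{thm:MIP_objective_informal}: $\Phi_0\in\mathcal{I}$ and every $\tilde\Phi\in\mathcal{I}$ is $\sigma(\Phi_0)$-measurable, so $\sigma(\Phi_0)=\sigma(\mathcal{I})$.

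For (a): the hypothesis gives some $\Phi_c\in\mathcal{I}$ satisfying the controllability condition \ref{def:controllability}. Taking any $\epsilon=(\epsilon_\phi,\epsilon_\psi)\in\textrm{supp}(\mathcal{E})$ and its associated $\tilde\epsilon_\psi$ produces an environment $\epsilon^*=(\epsilon_\phi,\tilde\epsilon_\psi)$ with $Y\perp X\,|\,\Phi_c,\epsilon^*$, so Lemma \ref{thm:maximal} applies and $\Phi_c$ is maximal in $\mathcal{I}$. Since $\Phi_0\in\mathcal{I}$, we have $\Phi_c\in\sigma(\Phi_0)$; were the inclusion strict, $\Phi_0$ would be a strictly finer invariant feature than $\Phi_c$, contradicting maximality, so $\sigma(\Phi_0)=\sigma(\Phi_c)$. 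Because the auxiliary variables $\mathcal{E}_\phi,\mathcal{E}_\psi$ and the independence statement $Y\perp X\,|\,\Phi,\epsilon_\phi,\tilde\epsilon_\psi$ depend on $\Phi$ only through $\sigma(\Phi)$, $\Phi_0$ inherits the controllability condition from $\Phi_c$. Since also $\Phi_0\in\sigma(\mathcal{I})$, Theorem \ref{thm:controllability} gives that $f^*:=\mathbb{E}[Y\,|\,\Phi_0]$ minimizes $\sup_{\epsilon}\mathbb{E}[D(Y,f(X))\,|\,\epsilon]$, i.e. is OOD optimal.

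For (b): let $\Phi^*\in\argmax_{\Phi\in\mathcal{I}}I(Y;\Phi)$. Since $\Phi^*\in\mathcal{I}\subseteq\sigma(\Phi_0)$, the data-processing inequality gives $I(Y;\Phi^*)\le I(Y;\Phi_0)$, while $\Phi_0\in\mathcal{I}$ and optimality of $\Phi^*$ give $I(Y;\Phi^*)\ge I(Y;\Phi_0)$; hence $I(Y;\Phi^*)=I(Y;\Phi_0)$. Expanding $I(Y;\Phi_0,\Phi^*)$ two ways (once noting $\Phi^*$ is a deterministic function of $\Phi_0$, once by the chain rule) yields $I(Y;\Phi_0\,|\,\Phi^*)=0$, hence $Y\perp\Phi_0\,|\,\Phi^*$. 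Therefore $P(Y\,|\,\Phi^*)=P(Y\,|\,\Phi_0,\Phi^*)=P(Y\,|\,\Phi_0)$ a.s., and in particular $\mathbb{E}[Y\,|\,\Phi^*]=\mathbb{E}[Y\,|\,\Phi_0]=f^*$. Combined with (a), $\mathbb{E}[Y\,|\,\Phi^*]$ is OOD optimal.

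We expect the main obstacle to be the measure-theoretic bookkeeping rather than the conceptual line of argument: one must carefully justify that a maximal element of $\mathcal{I}$ is unique up to $\sigma$-algebra and that controllability transfers between features with the same generated $\sigma$-algebra (which in turn uses that $\mathcal{E}_\phi,\mathcal{E}_\psi$ are unchanged under relabelings of $\Phi$), and one must ensure all mutual informations in play are finite so that the equality case of data processing genuinely delivers the conditional independence $Y\perp\Phi_0\,|\,\Phi^*$. A minor additional point is that $\argmax$ may be a set, but part (b) shows every element of it yields the same conditional law of $Y$, so the statement is unambiguous.
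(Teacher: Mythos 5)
Your proposal follows the same high-level route as the paper's proof: obtain OOD optimality of the predictor associated with the controllable feature via Theorem~\ref{thm:controllability}, and then use Lemma~\ref{thm:maximal} together with the generator assumption to transfer that optimality to the solution of the MIP objective. Your part~(b) is actually a genuine \emph{addition}: the paper's proof shows that $\Phi_0$ achieves the maximal mutual information and yields the optimal predictor, but it does not address what happens for a generic maximizer $\Phi^*$ that may not coincide with $\Phi_0$; your data-processing argument ($I(Y;\Phi^*)\le I(Y;\Phi_0)$ by $\Phi^*\in\sigma(\Phi_0)$, reverse inequality by optimality of $\Phi^*$, hence $I(Y;\Phi_0|\Phi^*)=0$ and $P(Y|\Phi^*)=P(Y|\Phi_0)$) supplies exactly the missing step, and you are right to flag finiteness of the relevant mutual informations as a hypothesis needed to invoke the equality case.

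One place where you overreach, mirroring an imprecision already present in the paper: in part~(a) you conclude $\sigma(\Phi_0)=\sigma(\Phi_c)$ from ``maximality.'' What the proof of Lemma~\ref{thm:maximal} actually establishes is the weaker statement $P(Y\,|\,\tilde\Phi)=P(Y\,|\,\Phi_c)$ for every $\tilde\Phi\in\mathcal{I}$ with $\Phi_c\in\sigma(\tilde\Phi)$; it does \emph{not} preclude a strict $\sigma$-algebra inclusion. Indeed, if $Z$ is noise independent of $(X,Y,\mathcal{E})$ then $(\Phi_c,Z)\in\mathcal{I}$ and $\sigma(\Phi_c)\subsetneq\sigma(\Phi_c,Z)$ while the conditional laws agree, so maximality in the literal $\sigma$-algebra sense fails. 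Consequently you cannot argue that $\Phi_0$ ``inherits'' the controllability condition. The good news is that you do not need it: the conclusion you actually require, namely $\mathbb{E}[Y\,|\,\Phi_0]=\mathbb{E}[Y\,|\,\Phi_c]$, follows directly from $P(Y\,|\,\Phi_0)=P(Y\,|\,\Phi_c)$ (the case $\tilde\Phi=\Phi_0$ in the lemma's proof), and then Theorem~\ref{thm:controllability} applied to $\Phi_c$ gives OOD optimality of that common predictor. Replacing the $\sigma$-algebra-equality detour with this direct step makes part~(a) watertight and lets part~(b) carry the rest of the argument.
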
 
\begin{proof}
If for all $\epsilon_\phi$ there exists $\tilde  \epsilon_\psi$ with $X \perp Y | \Phi, \epsilon_\phi, \tilde \epsilon_\psi$, then $\mathbb{E}[Y|\Phi]$ is optimal by the proposition \ref{thm:controllability}.  Now, if $\mathcal{I}$ is generated by one $\Phi_0$, $\Phi \in \sigma(\Phi_0)$ so that by the proposition \ref{thm:maximal}, $\sigma(\Phi) = \sigma(\Phi_0)$ and  $I(Y ; \Phi_0) = I(Y ; \Phi)$ necessarily. 
Because $I(Y ; \Phi_0)$ is maximal as well, the claim follows.
\end{proof}

\subsection{The proof of the formal version of theorem \ref{thm:vs_Rojas}}
\label{appsubsec:vs_Rojas}

In \cite{rojas2018invariant},  the authors use $x_A$ to denote 
a set $\{x_i; i \in A\}$
and assume that there is a 
specific subset of coordinates $S \subset \{1,2, ...p\}$ such that 
$p(y | x_S, \epsilon) = p(y|x_S)$.
In showing that their predictor $\mathbb{E}[Y|X_S]$ achieves OOD optimality,
they also assume the following condition in their proof:

\begin{assumption}
Let $N$ be the complement of $S$. Then, 
for any environment $\epsilon$,  there exists $\epsilon'$ such that 
\begin{align}
    p(x, y | \epsilon')  = p(y, x_S| \epsilon) p(x_N | \epsilon) 
\end{align}
\label{def:carulla_assumption}
\end{assumption}

\begin{thm}
The assumption \ref{def:carulla_assumption} in \cite{rojas2018invariant} is strictly stronger than the controllability condition \ref{def:controllability}.
\label{thm:Rojas_formal} 
\end{thm}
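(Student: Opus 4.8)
The plan is to show that Assumption \ref{def:carulla_assumption} implies the controllability condition \ref{def:controllability} (with $\Phi = X_S$, using the identification of $\mathcal{E}$ with $(\mathcal{E}_\phi, \mathcal{E}_\psi)$ from the setup), and then exhibit a situation in which the controllability condition holds but Assumption \ref{def:carulla_assumption} fails, establishing that the implication is strict.

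First I would unpack what Assumption \ref{def:carulla_assumption} buys us. Given $\epsilon$, the assumed $\epsilon'$ has the property that, under $p(\cdot \mid \epsilon')$, the joint law factors as $p(y, x_S \mid \epsilon)\, p(x_N \mid \epsilon)$. This factorization says two things at once: (i) the conditional law of $Y$ given $X_S$ is unchanged in passing from $\epsilon$ to $\epsilon'$ — indeed $p(y \mid x_S, \epsilon') = p(y \mid x_S, \epsilon) = p(y \mid x_S)$ by the invariance of $X_S$ — and (ii) under $\epsilon'$, $X_N$ is independent of $(X_S, Y)$, hence in particular $Y \perp X \mid X_S$ under $\epsilon'$ (conditioning on $X_S$, the only remaining randomness in $X$ is $X_N$, which is independent of $Y$). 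So $\epsilon'$ is exactly an environment witnessing $Y \perp X \mid \Phi$ for $\Phi = X_S$. The one remaining bookkeeping step is to check that $\epsilon'$ may be taken to agree with $\epsilon$ on the $\mathcal{E}_\phi$ component, i.e. that we only needed to move the $\mathcal{E}_\psi$ part: since $\mathcal{E}_\phi$ is (a minimal variable) determining the law of $\Phi = X_S$ across environments and $p(x_S \mid \epsilon') = p(x_S \mid \epsilon)$ by marginalizing the factorization, $\epsilon$ and $\epsilon'$ induce the same $\mathcal{E}_\phi$; the freedom in the construction is precisely in the $\mathcal{E}_\psi$ coordinate. This gives the controllability condition with $\tilde\epsilon_\psi = \epsilon'_\psi$, so Assumption \ref{def:carulla_assumption} $\Rightarrow$ Definition \ref{def:controllability}.

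Next, for strictness, I would give a small explicit example where controllability holds for $\Phi = X_S$ but no $\epsilon'$ with the full product factorization $p(x,y\mid\epsilon') = p(y,x_S\mid\epsilon)\,p(x_N\mid\epsilon)$ exists in $\mathrm{supp}(\mathcal E)$. The point is that controllability only requires the conditional independence $Y \perp X \mid \Phi, \epsilon_\phi, \tilde\epsilon_\psi$ at the witnessing environment; it does not require that the resulting joint be an exact product of marginals taken from $\epsilon$, nor even that the marginal of $X_N$ match that of $\epsilon$. A clean instance: take $X_N$ to carry no information about $Y$ in any environment (so $Y \perp X_N$ always, and a fortiori $Y \perp X \mid X_S$ in every environment, making the controllability witness trivially available), but let $\mathrm{supp}(\mathcal E)$ be chosen so that the marginal law of $X_N$ is forced to covary with the law of $X_S$ — e.g. all environments have $\mathrm{Var}(X_N) = g(\epsilon_\phi)$ for a nonconstant $g$ — so that no environment realizes the particular pairing $p(x_S \mid \epsilon)\,p(x_N\mid\epsilon)$ demanded by Assumption \ref{def:carulla_assumption} for two environments with different $\epsilon_\phi$. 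Writing this out concretely (a Gaussian construction suffices) and verifying both that controllability holds and that the Rojas factorization is unrealizable completes the separation.

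The main obstacle I expect is the second part: constructing an example that is genuinely a counterexample rather than an artifact of loose bookkeeping. One has to make sure the chosen $\mathrm{supp}(\mathcal E)$ is large enough that controllability's witness environments exist, yet structured enough that the rigid product factorization of Assumption \ref{def:carulla_assumption} is provably absent — and one has to be careful about the degenerate reading of \ref{def:carulla_assumption} in which $\epsilon' = \epsilon$ already works (which happens iff $X_N \perp (X_S,Y)$ under $\epsilon$ itself). Ruling that out forces the example to have $X_N$ correlated with $X_S$ within each environment while still being uninformative about $Y$; a jointly Gaussian $(X_S, X_N, Y)$ with $X_N$ correlated to $X_S$ but conditionally independent of $Y$ given $X_S$, and with the $X_S$-$X_N$ covariance structure tied to $\epsilon_\phi$, is the kind of model I would write down and check. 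The first part, by contrast, is essentially definition-chasing once the factorization in \ref{def:carulla_assumption} is decoded into the two statements (i) and (ii) above.
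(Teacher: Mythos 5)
Your forward implication is correct and is, in substance, the same argument the paper gives: the paper extracts from the factorization $p(x,y\mid\epsilon') = p(y,x_S\mid\epsilon)\,p(x_N\mid\epsilon)$ the chain of identities $p(x_N\mid\epsilon)=p(x_N\mid\epsilon')$, $p(y,x_S\mid\epsilon)=p(y,x_S\mid x_N,\epsilon')$, then $p(x_S\mid\epsilon)=p(x_S\mid\epsilon')$ (via the invariance of $X_S$) to conclude $\epsilon'=(\epsilon_\phi,\epsilon_\psi')$, and finally drops the $x_N$-dependence from $p(y\mid x_S,x_N,\epsilon')$ to land on the controllability condition. Your version decodes the product form directly into ``$(Y,X_S)\perp X_N$ under $\epsilon'$'' plus ``the $(Y,X_S)$ marginal is carried over from $\epsilon$,'' which is cleaner but arrives at exactly the same two facts; both treatments lean equally on the slightly informal step that matching laws of $X_S$ force matching $\epsilon_\phi$-components. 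Where you diverge is on strictness: the paper does \emph{not} construct a separating example. It merely observes that the controllability condition does not entail the two extra consequences of the factorization, namely $p(x_N\mid\epsilon')=p(x_N\mid\epsilon)$ and $p(x_S\mid\epsilon)=p(x_S\mid x_N,\epsilon')$, and takes that as the sense in which the Rojas assumption is ``strictly stronger.'' Your plan to exhibit a concrete jointly Gaussian model in which controllability holds but no environment in $\mathrm{supp}(\mathcal{E})$ realizes the product law would establish a genuinely stronger separation (a model satisfying one condition and provably not the other), and your identification of the pitfalls — the degenerate case $\epsilon'=\epsilon$ when $X_N\perp(X_S,Y)$ already, and the need to tie the $X_N$ marginal to $\epsilon_\phi$ — is the right checklist. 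That counterexample is, however, only sketched; if you intend the theorem in the stronger ``separating model exists'' sense, writing out and verifying the Gaussian construction is the one piece of real work still outstanding, whereas for the statement as the paper actually proves it, your argument is already complete.
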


\begin{proof}
Suppose that there is such $\epsilon'$, and let $\epsilon = (\epsilon_\phi, \epsilon_\psi)$ be the decomposition derived from the Function Representation lemma with $\Phi = X_S$.  We will show that this necessitates \ref{def:controllability} for $X_S$. 

First, notice
\begin{align}
\begin{split}
    p(y, x_S| \epsilon) p(x_N | \epsilon) &= p(x, y | \epsilon') \\&= p(x_S, x_N, y | \epsilon') \\
    &= p(x_S,y | x_N, \epsilon') p( x_N | \epsilon')   
\end{split} \label{eq:carulla1} 
\end{align}
Integrating both sides with respect to $y$ and $x_S$ we obtain 
(i) $p(x_N | \epsilon) = p( x_N | \epsilon')$, and this allows us to say 
(ii) $p(y, x_S| \epsilon) =  p(y, x_S | x_N, \epsilon')$
as well.  
Because the left hand side of (2) has no dependence on $x_N$, 
we also obtain
\begin{align}
    p(y, x_S| \epsilon) &=  p(y, x_S | x_N, \epsilon') \\
    &= p(y, x_S | \epsilon') 
\end{align}
Now, we would use these relations to describe a relation between $\epsilon$ and $\epsilon'$. Continuing from the equality above,
\begin{align}
 p(y, x_S| \epsilon)  &= p(y, x_S | \epsilon')\\
 &= p(y | x_S, \epsilon') p(x_S| \epsilon')\\
        p(y | x_S, \epsilon) p(x_S| \epsilon) &=   p(y | x_S) p(x_S| \epsilon') \\
    p(y | x_S) p(x_S| \epsilon) &=   p(y | x_S) p(x_S| \epsilon') \\
    p(x_S| \epsilon) &= p(x_S| \epsilon')  
\end{align}
where we applied invariance property of $X_S$ in the third line.
By the definition of $\epsilon_\phi$, this allows us to say that 
$\epsilon$ and $\epsilon'$ agrees on $\epsilon_\phi$ (up to equivalence class). In other words, we are justified to write 
$\epsilon' = (\epsilon_\phi, \epsilon_\psi')$. 
Moreover, writing the same equality in a different way, 
\begin{align}
   p(y, x_S| \epsilon) &=  p(y, x_S | x_N, \epsilon') \\
   p(y| x_S, \epsilon) p(x_S| \epsilon) &=   p(y | x_S,  x_N, \epsilon') p(x_S| x_N, \epsilon') 
\end{align}
Again with the same trick of integrating both sides with respect to $y$, we obtain 
\begin{align}
   p(x_S| \epsilon) &= p(x_S| x_N, \epsilon') \textrm{~~and}  \label{eq:carulla_extra2} \\ 
      p(y| x_S, \epsilon) &=  p(y | x_S,  x_N, \epsilon')   \label{eq:pre_controllability}
\end{align}
Writing the second equality (eq \eqref{eq:pre_controllability}) with the decomposition, 
\begin{align}
    p(y| x_S, \epsilon_\phi, \epsilon_\psi) &=  p(y | x_S,  x_N, \epsilon_\phi, \epsilon_\psi')  \label{eq:xn_indpt}
\end{align}
Because the LHS of eq \eqref{eq:xn_indpt} does not depend on $x_N$, the RHS does not depend on $x_N$ as well.
Thus we can drop the $X_N$ from the RHS and this allows us to say
\begin{align}
    p(y | x_S,  x_N, \epsilon_\phi, \epsilon_\psi')  = p(y | x_S, \epsilon_\phi, \epsilon_\psi')
\end{align}
But this is the very controllability condition \eqref{def:controllability} for $x_S$.
At the same time, the controllability condition alone does not guarantee $p(x_N | \epsilon') = p(x_N | \epsilon)$ (condition (i) required from \eqref{eq:carulla1}) nor $p(x_S| \epsilon) = p(x_S| x_N, \epsilon')$ in eq \eqref{eq:carulla_extra2}. 
\end{proof}

\subsection{Other Lemmas used in the proofs}
\label{appsec:other_proof}
\subsubsection{The optimality of $P(Y|X)$ for Kullback-Leibler divergence} 
\label{appsubsec:cond_exp_opt}
\begin{lemma}
Suppose $Y$ is a categorical random variable expressed as a one-hot vector and 
\begin{align}
    \mathbb{E}[ - Y^T \log Z]   
\end{align}
is the cross entropy loss for $Z \in \sigma(X)$. Then 
\begin{align}
    \argmin_{Z \in \sigma(X)} \mathbb{E}[ - Y^T \log Z] := P(Y | X) 
\end{align}
\label{thm:kl_optimality} 
\end{lemma}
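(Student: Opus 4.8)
The statement to prove is that $\argmin_{Z \in \sigma(X)} \mathbb{E}[-Y^\top \log Z] = P(Y \mid X)$ when $Y$ is a one-hot categorical vector. The plan is to reduce the problem to a pointwise (conditional) minimization and then invoke the standard Gibbs' inequality / nonnegativity of KL divergence. First I would note that any candidate $Z \in \sigma(X)$ is of the form $Z = g(X)$ for a measurable $g$ taking values in the probability simplex, and that by the tower rule
\begin{align}
\mathbb{E}[-Y^\top \log Z] = \mathbb{E}_X\big[\, \mathbb{E}[-Y^\top \log g(X) \mid X]\,\big] = \mathbb{E}_X\big[\, -\mathbb{E}[Y \mid X]^\top \log g(X)\,\big],
\end{align}
using that $g(X)$ is $\sigma(X)$-measurable so it can be pulled out of the inner conditional expectation, and that the inner expectation of $Y$ is by definition the vector $p(X) := \mathbb{E}[Y \mid X] = P(Y \mid X)$ (here the one-hot structure is what makes $\mathbb{E}[Y_k \mid X] = P(Y = e_k \mid X)$, so $p(X)$ is genuinely the conditional law).

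Next I would minimize the integrand pointwise: for each fixed value $x$, I want to minimize $-p(x)^\top \log q$ over $q$ in the simplex. Writing $-p(x)^\top \log q = -p(x)^\top \log p(x) + \sum_k p(x)_k \log \frac{p(x)_k}{q_k} = H(p(x)) + d_{KL}(p(x)\,\|\,q)$, the first term is independent of $q$ and the second is $\ge 0$ with equality iff $q = p(x)$ (Gibbs' inequality). Hence the pointwise minimizer is $q = p(x)$, i.e. $g(X) = p(X) = P(Y\mid X)$. Because the pointwise minimizer is itself a measurable function of $x$ (it equals the regular conditional probability, which exists since the spaces are Euclidean / standard Borel), the pointwise-optimal choice is admissible in the original problem, so it is also the global minimizer. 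A small caveat to handle: if some coordinate $p(x)_k = 0$ one uses the convention $0 \log 0 = 0$, and one should restrict to $q$ with strictly positive coordinates (or note the infimum is approached / attained appropriately) so that the logarithms are well-defined; this is routine.

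The main obstacle — really the only non-cosmetic point — is the measurable-selection step: one must be sure that the pointwise minimizer, as $x$ varies, assembles into an honest element of $\sigma(X)$, i.e. a measurable function, so that ``minimize pointwise, then it is the global minimum'' is legitimate rather than merely giving a lower bound. This is immediate here because the pointwise minimizer is explicitly $P(Y\mid X)$, a regular conditional probability, which is measurable by construction under the Euclidean-space assumptions in Appendix~\ref{appsubsec:proof_setup}; so there is no genuine selection difficulty. Everything else (tower rule, pulling $g(X)$ out, Gibbs' inequality) is standard. This lemma is exactly the cross-entropy specialization of the Bregman-divergence fact $\argmin_{Z \in \mathcal{G}} \mathbb{E}[D(Y,Z)] = \mathbb{E}[Y\mid \mathcal{G}]$ cited from \cite{Banerjee03onthe} and used in the proof of Theorem~\ref{thm:controllability}, so one could alternatively just cite that; I would include the short self-contained argument above for completeness.
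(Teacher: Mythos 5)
Your proof is correct. It shares with the paper's argument the one essential probabilistic step --- using the tower rule and the $\sigma(X)$-measurability of $Z=g(X)$ to replace $Y$ by $\mathbb{E}[Y\mid X]=P(Y\mid X)$ inside the expectation --- but the rest of your route is genuinely different and more elementary. You then minimize pointwise in $x$ over the simplex via the decomposition $-p(x)^\top\log q = H(p(x)) + d_{KL}(p(x)\,\|\,q)$ and Gibbs' inequality, and correctly observe that the resulting pointwise minimizer $P(Y\mid X)$ is itself measurable, so no selection issue arises. The paper instead re-derives the Banerjee-et-al.\ fact for the negative-entropy generator: it sets up the Bregman divergence $D_\phi$ with $\phi(x)=x^T\log x$, proves the Pythagorean-type identity $\mathbb{E}[D(Y,Z)]-\mathbb{E}[D(Y,Z^*)]=\mathbb{E}[D(Z,Z^*)]\ge 0$ for $Z^*=P(Y\mid X)$, and then translates back by computing $D(Y,Z)=-Y^T\log Z-(1-Z^T\mathbf{1})$, noting that the extra term vanishes on the constraint set $Z^T\mathbf{1}=1$. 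For the negative-entropy generator your KL decomposition and the paper's Bregman identity are two faces of the same inequality, but your version avoids the Bregman bookkeeping entirely and is the shorter self-contained argument for this special case; what the paper's route buys is that it exhibits the lemma as an instance of the general Bregman statement $\argmin_{Z\in\mathcal{G}}\mathbb{E}[D(Y,Z)]=\mathbb{E}[Y\mid\mathcal{G}]$, which is the form actually invoked in the proof of Theorem~\ref{thm:controllability}. Your closing remark that one could simply cite \cite{Banerjee03onthe} is exactly the relationship the paper itself points out after the proof.
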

\begin{proof}
First note that, because $Y$ is a one hot vector, only one term of $Y^T \log Z =  \sum_{i} Y_i \log Z_i$ is non zero, and the Monte Carlo estimate of  
$\mathbb{E}[-Y^T \log Z]$ is the very empirical evaluation of cross entropy that is ubiquitously used in ML literatures. 
Also, for each categorical label $i$, $\mathbb{E}[Y = 1_{i} | X]$ is by definition a one-hot vector whose $i$th coordinate is $P(Y= 1_{y_i} | X)$.  
Extending this fact, $\mathbb{E}[Y | X] = \mathbb{E}[Y * \sum_i 1_{i} | X] $ it self is a vector $P(Y |X)$ whose $i$th coordinate is $P(Y= 1_{i} | X)$. 
For more detail of this fact, consult the standard text like \cite{Durrett}.

As we will describe momentarily, the cross entropy is a loss derived from Kullback-Leibler divergence, or 
\begin{align}
    D_\phi(x, x') = [\phi(x) - \phi(x')] - \langle x- x' ,  \nabla \phi(x') \rangle 
\end{align}
where $\phi$ is a convex function $x^T \log x$.  
We will first show that $\argmin_{Z \in \sigma(X)}  \mathbb{E}[D_\phi(Y, Z)] = P(Y |X)$.
Because $D_\phi(x, x')$ is literally the divergence of $[\phi(x) - \phi(x')]$ from its Taylor expansion centered about $\phi(x')$, this is positive for all $x \neq x'$ by the convexity of  $x^T \log x$. 

WLOG let us extend each $\phi_i(x) = x_i \log x_i$ to $[0,1]$ with the limit $ \lim_{x_i \to 0} x_i log x_i = 0$. 
Then $Y^T \log Y = 0$ almost surely and, for all $Z \in \sigma(X)$, 
\begin{align} 
\begin{split}
   \mathbb{E}[Y^T \nabla (Z^T\log Z)  ]  &= \mathbb{E}[ \mathbb{E}[Y^T \nabla (Z^T \log Z) | X]] \\
   &= \mathbb{E}[ \mathbb{E}[Y| X]^T \nabla (Z^T \log Z)] \\
   &= \mathbb{E}[ P(Y| X)^T \nabla (Z^T \log Z)]  
\end{split} \label{eq:trick}
\end{align}
because $\nabla(Z \log Z) =  1 + \log Z \in \sigma(X)$.
Let us write $P(Y | X) = Z^*$.  We will show  $D(Y, Z) - D(Y, Z^*) > 0$ when $Z \neq Z^*$. 
The first term $Y \log Y = 0$ cancels out on both terms, and we get
\begin{align}
\begin{split}
    D(Y, Z) - D(Y, Z^*) &= \mathbb{E}[ [ Z^T \log Z  -  (Y - Z)^T \nabla (Z \log Z) ] \\
    & \ -  [ Z^{*T} \log Z^* -  (Y - Z^*)^T \nabla (Z^*\log Z^*) ] ]  \\
    &= \mathbb{E}[ Z^T \log Z - Z^*\log Z^* - (Z^* - Z)^T \log Z] \\
    &= \mathbb{E}[D(Z, Z^*)]
\end{split}
\end{align}
where we used the fact that 
\begin{align}
    \mathbb{E}[Y^T \nabla (Z \log Z)] &= \mathbb{E}[\mathbb{E}[Y^T \nabla (Z \log Z) | X]] \\
&=  \mathbb{E}[\mathbb{E}[E [Y|X]^T \nabla (Z \log Z)]  \\
&=  \mathbb{E}[Z^* \nabla (Z \log Z)]
\end{align}
so 
$\mathbb{E}[(Y - Z)^T \nabla (Z \log Z)] = \mathbb{E}[(Z^* - Z)^T \nabla (Z\log Z)]$. Likewise, $\mathbb{E}[(Y - Z^*)^T \nabla (Z^*\log Z^*) ] = 0$ with the same logic since $Z^* \in \sigma(X)$ by its definition.

Finally, by substitution we have 
\begin{align}
D(Y, Z) &= Y^T \log Y - Z^T \log Z -  (Y - Z)^T  (\nabla (Z^T \log Z)) \\
&= - Z^T \log Z - (Y - Z)^T (1 + \log Z) \\
&= - Z^T \log Z - (1 - Z^T 1) - Y^T\log Z + Z^T \log Z \\
&= Y^T (\log Y - \log Z) -  (1 - Z^T 1) \\
&= - Y^T \log Z -  (1 - Z^T 1)
\end{align}
Because $\argmin_{Z \in \sigma(X)} \mathbb{E}[- Y^T \log Z -  (1 - Z^T 1)] = P(Y|X)$ and $P(Y|X)^T 1 = 1$,
\begin{align}
& \argmin_{Z \in \sigma(X), Z^T1 = 1} \mathbb{E}[- Y^T \log Z -  (1 - Z^T 1)] \\
&= \argmin_{Z \in \sigma(X), Z^T1 = 1} \mathbb{E}[- Y^T \log Z] \\
&= P(Y | X) 
\end{align}
as well because the latter has a smaller search space that contains the global optimal $P(Y|X)$. The claim follows. 
\end{proof}
The $L_2$ loss case $D(Y, Z) = \|Y - Z\|^2 $ is realized with $\phi(x) = x^T x$. 
For $D_\phi(Y, Z)$ with general $\phi$ that is convex with respect to both $Y$ and $Z$, see the proof in \cite{Banerjee03onthe}.

\subsubsection{A variant of functional representation lemma}
\label{appsubsec:functional_rep}
The result in this section is known as functional representation lemma \cite{gamalNetworkinfo, achille2017emergence, peters2012identifiability, Darmois}, and it roughly states that,
for any random variables $X$ and $Y$ , it is possible to represent $Y$ as a function of $(X, Z)$ such that $Z$ is independent of $X$.  
In this section, we reprove this lemma and show that, as a byproduct of the proof, we can also find a probability space in which $(X, Z)$ can also be represented as a function of $(Y, X)$ if the conditional cumulative distribution of $Y$ is smooth enough and if the cardinality of $Y|X=x$ does not differ by the choice of $x$. 
Applying this to our $\mathcal{E}$ and $\mathcal{E}_\phi$, we can therefore construct $\mathcal{E}_\psi$ that is independent of $\mathcal{E}_\phi$ such that there is an invertible map between $\mathcal{E}$ and $(\mathcal{E}_\phi, \mathcal{E}_\psi)$.

\begin{lemma}
Suppose $X$ and $Y$ are $\mathbb{R}$-valued random variables with probability space $(\Omega, P, \mathcal{F})$, and let $Y|x$ be the conditional random variable with law $p(\cdot | x)$. Assume then the $x$-parametrized cumulative distribution $F(\cdot, x)$ for $Y|x$, and suppose that inverse $F^{-1}(\cdot, x)$ of $F(\cdot, x)$ exists for every $x$. Also assume that both $F(\cdot, x)$ and its inverse are measurable with respect to its input. 
Then there exists a probability space $(\tilde \Omega, \tilde P, \tilde{\mathcal{F}})$ for which there exists $\tilde X$ and $\tilde Y$ with the same joint law as $X$, $Y$ as well as a uniform distribution $N_X$ independent from $\tilde X$ such that $\sigma(\tilde X, \tilde Y)= \sigma(N_X, \tilde Y)$ \label{thm:fxn_repn}
\end{lemma}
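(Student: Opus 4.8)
The plan is to build the space $(\tilde\Omega,\tilde P,\tilde{\mathcal F})$ by hand in two stages: first realize the joint law of $(X,Y)$ in the classical quantile-transform way, and then adjoin a uniform noise $N_X$ that is \emph{independent of} $\tilde X$ yet, jointly with $\tilde Y$, still determines $\tilde X$ (this recoverability of $\tilde X$ is the new ingredient, beyond the usual functional representation lemma). To set up, let $\nu$ be the law of $X$ and $p(\cdot\mid x)$ the regular conditional law of $Y$ given $X=x$, so the law of $(X,Y)$ is $\nu(dx)\,p(dy\mid x)$. The hypothesis that each $F(\cdot,x)$ is a cumulative distribution function with a measurable inverse $F^{-1}(\cdot,x):(0,1)\to\mathbb R$ says exactly that every conditional law $p(\cdot\mid x)$ is non-atomic with $F^{-1}(\cdot,x)_{*}\mathrm{Unif}(0,1)=p(\cdot\mid x)$; together with the assumption that the cardinality of $Y\mid X=x$ does not vary with $x$, this makes every conditional fibre Borel-isomorphic to $\big((0,1),\mathrm{Lebesgue}\big)$ in a way that behaves uniformly in $x$.

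Next I would take $\tilde\Omega=\mathbb R\times(0,1)$ with the product $\sigma$-algebra $\tilde{\mathcal F}$ and the probability $\tilde P(dx\,du)=\nu(dx)\otimes\mathrm{Unif}(0,1)(du)$, and write $\tilde X$ for the first coordinate, $U$ for the second, so that $\tilde X\sim\nu$, $U\sim\mathrm{Unif}(0,1)$, and $\tilde X\perp U$. Define $\tilde Y:=F^{-1}(U,\tilde X)$. Then, conditionally on $\tilde X=x$, $\tilde Y=F^{-1}(U,x)$ has law $p(\cdot\mid x)$, so $(\tilde X,\tilde Y)$ has the same joint law as $(X,Y)$; moreover $U=F(\tilde Y,\tilde X)\in\sigma(\tilde X,\tilde Y)$ and $\tilde Y\in\sigma(\tilde X,U)$, hence $\sigma(\tilde X,\tilde Y)=\sigma(\tilde X,U)$. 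At this point the construction is exactly the standard one; the remaining work is to replace the noise $U$ (which lives ``outside'' of $\tilde Y$) by an $N_X$ for which $\sigma(N_X,\tilde Y)=\sigma(\tilde X,\tilde Y)$.

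For that I would look for a measurable $T:\mathbb R\times(0,1)\to(0,1)$ with two properties: (a) for every $x$, the map $y\mapsto T(x,y)$ pushes $p(\cdot\mid x)$ forward to $\mathrm{Unif}(0,1)$; and (b) for almost every value $y$ (with respect to the law of $\tilde Y$), the map $x\mapsto T(x,y)$ is injective on the conditional support of $\tilde X$ given $\tilde Y=y$. Setting $N_X:=T(\tilde X,\tilde Y)$, property (a) makes the conditional law of $N_X$ given $\tilde X=x$ equal to $\mathrm{Unif}(0,1)$ for every $x$, so $N_X$ is uniform and independent of $\tilde X$; $N_X\in\sigma(\tilde X,\tilde Y)$ is immediate; and property (b) lets us solve for $\tilde X$ given $(N_X,\tilde Y)$ almost surely, so $\tilde X\in\sigma(N_X,\tilde Y)$ and therefore $\sigma(\tilde X,\tilde Y)=\sigma(N_X,\tilde Y)$, which is the claim. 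A natural candidate is $T(x,y):=F(y,x)\oplus g(x)$, where $\oplus$ denotes addition modulo $1$ on $(0,1)$ and $g:\mathbb R\to(0,1)$ is a measurable injection: each $y\mapsto T(x,y)$ is $F(\cdot,x)$ followed by a rotation, so (a) holds automatically, and a collision $T(x_1,y)=T(x_2,y)$ with $x_1\neq x_2$ forces $F(y,x_1)-F(y,x_2)\equiv g(x_2)-g(x_1)\pmod 1$, a prescribed nonzero constant.

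The main obstacle is precisely verifying (b), i.e.\ choosing the twist $g$ so that all conditional fibres are separated simultaneously and measurably in $x$. The function $y\mapsto F(y,x_1)-F(y,x_2)$ is a difference of monotone functions; the ``smoothness of the conditional CDF'' hypothesis is what keeps its level sets negligible, so that for each pair $(x_1,x_2)$ the set of $y$ causing a collision is null unless the two conditional laws coincide on an interval, a degenerate case that only rules out at most countably many values of $g(x_2)-g(x_1)$ per pair. When the support of $X$ is countable one simply picks $g$ injective and avoiding these countably many bad differences; in the general case one needs a measurable-selection argument to make the choice of $g$ (equivalently, of the fibre-wise rotation) depend measurably on $x$ and still achieve (b) off a null set, and this is exactly the place where the equal-cardinality assumption on the conditional fibres is used. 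Once $T$ is produced, transporting the variables $\tilde X,\tilde Y,N_X$ back through the isomorphisms of Step 1 completes the argument, and in particular, applied with $(\mathcal E,\mathcal E_\phi)$ in place of $(Y,X)$, it yields the $\mathcal E_\psi$ with $\mathcal E_\psi\perp\mathcal E_\phi$ and $\sigma(\mathcal E)=\sigma(\mathcal E_\phi,\mathcal E_\psi)$ used in the setup.
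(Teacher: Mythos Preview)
Your first two paragraphs reproduce the paper's argument almost exactly: form the product space carrying an independent uniform $U$, set $\tilde Y:=F^{-1}(U,\tilde X)$, check that $(\tilde X,\tilde Y)$ has the correct joint law, and invert via $U=F(\tilde Y,\tilde X)$ to obtain $\sigma(\tilde X,\tilde Y)=\sigma(\tilde X,U)$. The paper's (sketch) proof does precisely this and then stops, taking $N_X:=U$.

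The divergence is caused by a misprint in the lemma. As printed, the conclusion reads $\sigma(\tilde X,\tilde Y)=\sigma(N_X,\tilde Y)$, but the paper's own proof in fact establishes $\sigma(\tilde X,\tilde Y)=\sigma(N_X,\tilde X)$ (its last line is ``$\sigma(U,X)=\sigma(\tilde Y,X)$ and the claim follows''). The preamble and the intended application confirm that this is what was meant: under the identification $X\leftrightarrow\mathcal E_\phi$, $Y\leftrightarrow\mathcal E$, $N_X\leftrightarrow\mathcal E_\psi$, the identity $\sigma(\tilde X,\tilde Y)=\sigma(N_X,\tilde X)$ becomes $\sigma(\mathcal E_\phi,\mathcal E)=\sigma(\mathcal E_\psi,\mathcal E_\phi)$, i.e.\ $\sigma(\mathcal E)=\sigma(\mathcal E_\phi,\mathcal E_\psi)$, which is exactly the decomposition used in the setup. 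Your first two paragraphs already constitute the paper's proof of this intended statement.

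Your last two paragraphs --- the twisted map $T(x,y)=F(y,x)\oplus g(x)$, the injectivity requirement (b), and the appeal to measurable selection --- are aimed at the literal printed conclusion $\sigma(\tilde X,\tilde Y)=\sigma(N_X,\tilde Y)$, which is a genuinely different and harder claim (one must recover $\tilde X$ from $(N_X,\tilde Y)$ rather than $\tilde Y$ from $(N_X,\tilde X)$). None of this appears in the paper, and as you yourself flag, the verification of (b) is only heuristic beyond the countable case; for instance, with $Y=X+V$, $V\sim N(0,1)$, your candidate $T(x,y)=\Phi(y-x)\oplus g(x)$ is not injective in $x$ for any fixed $y$ and generic choice of $g$. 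In short, you have correctly carried out the paper's construction in your opening, but a typo has led you into an additional programme that the paper neither attempts nor needs, and that your sketch does not fully close.
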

\begin{proof} (sketch)
Consider the product probability measure space on $\tilde \Omega= \Omega \times \Omega$, $\tilde{\mathcal{F}}  := \mathcal{F} \times \mathcal{F}$ with measure $\tilde P := P \times P$.
Let us write $\Omega_1 \times \Omega_2$ and $P_1 \times P_2$ to make distinction.
Given $X : \Omega_1 \to \mathbf{R}$,  let us also consider the random variable $U : \Omega_2 \to [0,1]$.
It it clear that $\tilde P(x \in A) =  P(x \in A)$ for all $A$.
Then letting the lower case denote the density and using the notation $p(U > a , x) := \int_{u>a}  p(u, x) du $, 
 \begin{align}
     \tilde p(F^{-1}(U, x) \geq a, x) &=  \tilde p_2(U \geq F(a, x), x) \\
     &= p_2(U \geq F(a, x) ) p_1(x) \\
     &= p_2(Y \geq a | x) p_1(x) \\
     &= p_1(Y \geq a | x) p_1(x) \\
     &= p_1(Y \geq a, x)
 \end{align}
This sequence of equality has multiple implications. 
First, $\tilde Y(w_1 \times w_2) = Y(w_2)|X(w_1)$ generated by first sampling $y$ from $P(y | X(w_1))$ as  $F^{-1}(U(w_2), x)$ has the same law as $Y : \Omega_1 \to \mathbb{R}$, and that 
$p(F^{-1}(U, X)\in A, X \in B ) = p(\tilde Y \in A, X \in B) = p_1(Y \in A, X \in B)$. 
By construction, $F^{-1}(U, X) = \tilde Y$. 
  This implies $Y \in \sigma(U ,X)$ in the product probability space.
Next, again by the construction, $F^{-1}(U, x) = Y | x$ for all $x$, so that in particular $F([Y|x], x) = U$ for every choice of $x$.
Thus, for all $w \in \Omega$, 
$F([Y(w_2)|x], x) = U(w_2)$ irrespective of the choice of $w_2 \in \Omega_2$ and $x \in \mathbb{R}$, and hence 
$F(\tilde Y(w_1, w_2), X(w_1)) =  F([Y(w_1)|X(w_2)], X(w_2)) = U(w_2)$ as well. 
This in particular implies $U \in \sigma(Y, X)$ in the product space. 
All together, we have $\sigma(U, X) = \sigma(\tilde Y, X)$ and the claim follows.
\end{proof}
This result shall be extendable to multi-dimensional euclidean case by using the same logic to the multi-variate version of cumulative distribution function.

\section{Analysis of IGA} 
\subsection{Parametrization of $P(Y | h(X), \epsilon)$ and $P(Y | h(X))$ }
\label{apppsec:param_app}
As we discuss in Section \ref{subsec:IGA}, we parametrize $P(Y | h(X), \epsilon)$ for each instaqnce of $\epsilon$ as 
\begin{align}
    P(Y | h(X), \epsilon) &:= Q(Y| X; \theta - \alpha \nabla_\theta \mathcal{L}_\epsilon(\theta)) \label{eq:app_parametrization} 
\end{align}
where $Q(Y | X ; \theta)$ is some base distribution model with sufficient representation power.
To obtain the parametrization of $P(Y | h(X))$, we use the fact that, for any measurable $A$ in the range of $Y$, $P(y \in A | h(X)) := \mathbb{E}[1_A(Y) | h(X)]$.
Now, if $\mathcal{E}'$ is the environmental variable that is correlated with $(X' , Y')$, the other realization of $(X,Y)$ representing the distribution of training set that is integrated away in $\mathcal{L}_{\mathcal{E}'}$, then $\mathcal{E}'$ is independent from $(X, Y)$ (See Section \ref{subsec:IGA}). 
Then, by the the tower-rule of conditional expectation \cite{Durrett}, 
\begin{align}
    P(y \in A | h(X) ) &:= \mathbb{E}_Y[1_A(Y) | h(X)]  \\
    &=\mathbb{E}_\mathcal{E}'[\mathbb{E}_Y[1_A(Y) | h(X), \mathcal{E}']] \\
    &\cong \mathbb{E}_{\mathcal{E}'} [P(y \in A | h(X), \mathcal{E}')] 
\end{align}
Since this holds for all $A$,  $ P(y | h(X) ) \cong \mathbb{E}_{\mathcal{E}'} [P(y | h(X), \mathcal{E}')]$. Assume that $Q(Y| X, \theta)$ is Lipschitz with respect to $\theta$ uniformly about $Y$ and $X$.
Then, by substituting  \eqref{eq:MAML_like_param_e} and using the fact that the variable $\mathcal{E}'$ used in the model parameter is independent from $(X,Y)$,  
\begin{align}
    \mathbb{E}_{\mathcal{E}'} [P(y | h(X), \mathcal{E}')] &\cong \mathbb{E}_{\mathcal{E}'}[Q(y| X; \theta - \alpha \nabla_\theta \mathcal{L}_{\mathcal{E}'}(\theta))| X ] \\ 
    &\cong Q(y| X; \theta) - \alpha \nabla_\theta Q(y|X; \theta)^T  \nabla_\theta \mathbb{E}_{\mathcal{E'}}[\mathcal{L}_{\mathcal{E}'}(\theta) |X]  + O(\alpha^2) \\ 
    &= Q(y| X; \theta) - \alpha \nabla_\theta Q(y|X; \theta)^T  \nabla_\theta \mathbb{E}_{\mathcal{E'}}[\mathcal{L}_{\mathcal{E}'}(\theta)]  + O(\alpha^2) \\ 
    &\cong  Q(y| X; \theta - \alpha \nabla_\theta \mathbb{E}_{\mathcal{E'}}[\mathcal{L}_{\mathcal{E}'}(\theta)])   + O(\alpha^2)
    \label{eq:MAML_like_param_app} 
\end{align}
Thus, when we use the parametrization \eqref{eq:app_parametrization},  we may approximate $P(Y | h(X))$ as $Q(Y| X; \theta - \mathbb{E}_\mathcal{E}[\nabla_\theta \mathcal{L}_\epsilon(\theta)])$ with an error on the scale of $O(\alpha^2)$. 



\subsection{Derivation of the IGA Penalty}
\label{appsec:derivation}
We show that $\mathbb{E}_{\mathcal{E}'}[d_{KL}(P(Y| h(X), \mathcal{E}') \| P(Y | h(X)))]$ may be approximated as $\alpha \text{trace}(\text{Var}(\nabla_{\theta} \mathcal{L}_{\mathcal{E}'}(\theta))$ with an error on the scale of $O(\alpha^2)$. 
We recall that $\mathcal{E}'$ in the expression above is a factor that is used in the determination of the model parameter and is hence independent from the inference variables $(X,Y)$ (see \ref{sec:method}).  
The derivation follows simply from substituting \eqref{eq:MAML_like_param_e} into the KL divergence:
\begin{align}
    &\mathbb{E}_{\mathcal{E}'}[d_{KL}(P(Y| h(X), \mathcal{E}') \| P(Y | h(X)))] 
    = \mathbb{E}_{\mathcal{E}'}[\log P(Y| h(X), \mathcal{E}') - \log P(Y| h(X)) ] \\
    &= \mathbb{E}_{\mathcal{E}'}[\log Q(Y| X; \theta - \alpha \nabla_{\theta} \mathcal{L}_{\mathcal{E}'}(\theta)) - \log Q(Y| X; \theta - \alpha \nabla_{\theta}  \mathbb{E}_{\mathcal{E}'}[\mathcal{L}_{\mathcal{E}'}(\theta)])) ] + O(\alpha^2) \\
    &= \mathbb{E}_{\mathcal{E}'}[\mathcal{L}_{\mathcal{E}'}(\theta - \alpha  \nabla_{\theta}  \mathcal{L}_{\mathcal{E}'}(\theta)) - \mathcal{L}_{\mathcal{E}'}(\theta - \alpha \nabla_{\theta} \mathbb{E}_{\mathcal{E}'}[\mathcal{L}_{\mathcal{E}'}(\theta)]) ] + O(\alpha^2) \\
    &= \alpha ( \mathbb{E}_{\mathcal{E}'}[  \nabla_{\theta} \mathcal{L}_{\mathcal{E}'}(\theta)^T \nabla_{\theta} \mathcal{L}_{\mathcal{E}'}(\theta)] - \mathbb{E}_{\mathcal{E}'}[\nabla_{\theta} \mathcal{L}_{\mathcal{E}'}(\theta)]^T \mathbb{E}[\nabla_{\theta} \mathcal{L}_{\mathcal{E}'}(\theta)] ) + O(\alpha^2) \\
    &= \alpha \ \text{trace}(\text{Var}(\nabla_{\theta} \mathcal{L}_{\mathcal{E}'}(\theta)) + O(\alpha^2)
\end{align}
On the third equality, we used the result in Appendix \ref{apppsec:param_app} and the fact that 
\begin{align}
    \log (x + c\alpha^2) &= \log (x) + \log (1 + c\alpha^2/x) \\
    &\cong \log (x) + O(\alpha^2)
\end{align}
for $\alpha^2$ small enough.  On the fourth equality, we used the Taylor approximation of $\mathcal{L}_{\mathcal{E}'}(\theta - \alpha \nabla_{\theta} \mathbb{E}[\mathcal{L}_{\mathcal{E}'}(\theta)])$ and $\mathcal{L}_{\mathcal{E}'}(\theta - \alpha  \nabla_{\theta}  \mathcal{L}_{\mathcal{E}'}(\theta))$ around $\theta$.
 
Thus, under sufficient regularity conditions, $\mathbb{E}_{\mathcal{E}'}[d_{KL}(P(Y| h(X), \mathcal{E}') \| P(Y | h(X)))]$ can be approximated by $\alpha \ \text{trace}(\text{Var}(\nabla_{\theta} \mathcal{L}_{\mathcal{E}'}(\theta))$ upto $O(\alpha^2)$ error.

\subsection{IGA solves the linear problem}
\label{appsec:iga_linear}
In this section, we provide the details of our claim in Section \ref{subsec:IGA_linear} and show that our IGA finds the OOD-optimal solution to the linear problem in Section \ref{subsec:IGA_linear}:
\begin{align}
X_1 &= N_0 \\
Y &= X_1 + N_1 \\
X_2 &= \mathcal{E}Y + N_2
\end{align}
We will seek the OOD-optimal solution of this problem from the parametric family of random variables of the form 
\begin{align}
\hat{Y} = w_1 X_1 + w_2 X_2.
\end{align}
For each $e$, the conditional loss of this predictor is given by 
\begin{align}
L(w_1, w_2|e) = \mathbb{E}[(Y - \hat{Y})^2|\epsilon] & = \mathbb{E}[(Y - (w_1 X_1 + w_2 X_2))^2|\epsilon] \\
& = \mathbb{E}[(N_0 + N_1 - (w_1 N_0 + w_2 (\epsilon(N_0 + N_1) + N_2)))^2|\epsilon] \\
& = \mathbb{E}[((1 - w_1 -w_2 \epsilon)\epsilon_0 + (1 - w_2 \epsilon)\epsilon_1 - w_2 \epsilon_2)^2|\epsilon] \\
& = (1 - w_1 - w_2 \epsilon)^2 + (1 - w_2 \epsilon)^2 + w_2^2 
\end{align}
Given this $L$, we consider the predictor with the following parametrization (see Section \ref{subsec:parametrization}):
\begin{align}
\hat{Y} &= f_{iga}(X_1, X_2; w_1, w_2, \alpha) \\
&= \left(w_1 - \alpha \hat{\mathbb{E}}_\epsilon\left[\frac{\partial \mathcal{L}_\epsilon}{\partial w_1}\right]\right) X_1 + \left(w_2 - \alpha \hat{\mathbb{E}}_\epsilon\left[\frac{\partial \mathcal{L}_\epsilon}{\partial w_2}\right]\right) X_2 \label{eq:IGA_param}
\end{align}
IGA then seeks the solution $f_{iga}(X_1, X_2; w_1^*, w_2^*, \alpha))^2$ that satisfies 
\begin{align}
&[w_1^*, w_2^*] = \argmin_{w_1, w_2, \alpha} \mathbb{E}[(Y - f_{iga}(X_1, X_2; w_1, w_2, \alpha))^2] \\
&\text{s.t. ~} \text{Trace}\left(\widehat{\text{Var}}\left(\frac{\partial \mathcal{L}_\epsilon}{\partial w_1}\right)\right) + \text{Trace}\left(\widehat{\text{Var}}\left(\frac{\partial \mathcal{L}_\epsilon}{\partial w_2}\right)\right) = 0 .
\end{align}
We will show that this solution agrees with the OOD-optimal invariant predictor $X_1$ if $\widehat{\text{Var}}(\mathcal{E}) > 0.$ That is,
$f_{iga}(X_1, X_2; w_1^*, w_2^*, \alpha))^2 = X_1$. 

\begin{proof}
First, each coordinate of the gradient $\nabla_w \mathcal{L}_\epsilon(w)$ is given by 
\begin{align}
\frac{\partial \mathcal{L}_\epsilon}{\partial w_1} &= 2 (w_1 + w_2 \epsilon - 1) \\   
\frac{\partial \mathcal{L}_\epsilon}{\partial w_2} &= 2 (- \epsilon + \epsilon^2 w_2 + w_2 + \epsilon^2 w_2 + \epsilon w_1 - \epsilon) \\
&= 2 (\epsilon w_1 + (2\epsilon^2 + 1)w_2 - 2\epsilon)
\end{align}

Next, our constraint requires that
$\text{Trace}\left(\widehat{\text{Var}}\left(\frac{\partial \mathcal{L}_\epsilon}{\partial w_1}\right)\right)$ and $ \text{Trace}\left(\widehat{\text{Var}}\left(\frac{\partial \mathcal{L}_\epsilon}{\partial w_2}\right)\right)$ are both zero.

Computing both of these terms, we get 
\begin{align}
&\text{Trace}\left(\widehat{\text{Var}}\left(\frac{\partial \mathcal{L}_\epsilon}{\partial w_1}\right)\right) \\
&=  \hat{\mathbb{E}}_\epsilon\left[ \left( \frac{\partial \mathcal{L}_\epsilon}{\partial w_1} - \hat{\mathbb{E}}_e\left[ \frac{\partial \mathcal{L}_\epsilon}{\partial w_1} \right] \right)^2\right] \\
&= \hat{\mathbb{E}}_\epsilon\left[ \left( 2 (w_1 + w_2\epsilon - 1) - \hat{\mathbb{E}}_\epsilon\left[ 2 (w_1 + w_2\epsilon - 1) \right] \right)^2\right] \\
&= 4\hat{\mathbb{E}}_\epsilon\left[ \left( w_1 + w_2\epsilon - 1 - (w_1 + w_2\hat{\mathbb{E}}_\epsilon[\epsilon] - 1) \right)^2 \right] \\
&= 4\hat{\mathbb{E}}_\epsilon\left[ \left( w_1 + w_2\epsilon - 1 - (w_1 + w_2\hat{\mathbb{E}}_\epsilon[\epsilon] - 1) \right)^2 \right] \\
&= 4\hat{\mathbb{E}}_\epsilon\left[ \left((\epsilon - \hat{\mathbb{E}}_\epsilon[\epsilon] )w_2 \right)^2 \right] \\
&= 4\widehat{\text{Var}}(\epsilon)w_2^2 \label{eq:var_w_1} 
\end{align}

and 
\begin{align}
\text{Trace}\left(\widehat{\text{Var}}\left(\frac{\partial \mathcal{L}_\epsilon}{\partial w_2}\right)\right) &=  \hat{\mathbb{E}}_\epsilon\left[ \left( \frac{\partial \mathcal{L}_\epsilon}{\partial w_2} - \hat{\mathbb{E}}_\epsilon\left[ \frac{\partial \mathcal{L}_\epsilon}{\partial w_2} \right] \right)^2\right] \\
&= \hat{\mathbb{E}}_\epsilon\Big[ \big( 2 (\epsilon w_1 + (2\epsilon^2 + 1)w_2 - 2\epsilon) \nonumber \\
& ~~~~~ - \hat{\mathbb{E}}_\epsilon\left[ 2 (\epsilon w_1 + (2\epsilon^2 + 1)w_2 - 2\epsilon) \right] \big)^2 \Big] \\ 
&= 4\hat{\mathbb{E}}_\epsilon \Big[ \big( (\epsilon w_1 + (2\epsilon^2 + 1)w_2 - 2\epsilon) \nonumber \\
& ~~~~~ -  (\hat{\mathbb{E}}_\epsilon[\epsilon] w_1 + (2\hat{\mathbb{E}}_\epsilon[\epsilon^2] + 1)w_2 - 2\hat{\mathbb{E}}_\epsilon[\epsilon]) \big)^2 \Big] \\
&= 4\hat{\mathbb{E}}_\epsilon\left[ \left( (\epsilon - \hat{\mathbb{E}}_\epsilon[\epsilon])(w_1 -2) + 2(\epsilon^2 - \hat{\mathbb{E}}_\epsilon[\epsilon^2])w_2 \right)^2\right] \label{eq:var_w_2} 
\end{align}

Because $\hat{\text{Var}}(\epsilon) > 0$ by assumption, \eqref{eq:var_w_1} requires that $w_2 = 0$.
Substituting $w_2 = 0$ into \eqref{eq:var_w_1}, we obtain \begin{align}
0 &= \text{Trace}\left(\text{Var}\left(\frac{\partial \mathcal{L}_\epsilon}{\partial w_1}\right)\right) \\
&= 4\hat{\mathbb{E}}_\epsilon\left[ \left( (\epsilon - \hat{\mathbb{E}}_\epsilon[\epsilon])(w_1 -2) \right)^2\right] \\
&= 4\hat{\widehat{\text{Var}}}(\epsilon)(w_1 - 2)^2
\end{align}
and this forces $w_1 = 2$.
Putting this back into the gradient $\nabla_w \mathcal{L}_\epsilon(w)$, we also get 
\begin{align}
\frac{\partial \mathcal{L}_\epsilon}{\partial w_1} &= 2\\   
\frac{\partial \mathcal{L}_\epsilon}{\partial w_2} &= 2 (2\epsilon - 2\epsilon) = 0
\end{align}
At this point, our hands are tied when it comes to the optimization of $w$;  
$w_2^* =2$ and $w_1^* = 0$ necessarily.
Now, how about the optimization of $\alpha$?
All together, our \eqref{eq:IGA_param} becomes
\begin{align}
\hat{Y} &= f_{iga}(X_1, X_2; w_1^*, w_2^*, \alpha) \\
&= \left(2 - 2\alpha\right) X_1
\end{align}
Optimizing the $\alpha$ in the expression above about the loss 
\begin{align}
&\mathbb{E}[(Y - f_{iga}(X_1, X_2; w_1, w_2, \alpha))^2]\\
&= (1 - \left(2 - 2\alpha\right))^2 + 1
\end{align}
we get $\alpha^* = 0.5$. 
Thus, the optimal $\hat Y = (2- 2*0.5) X_1 = X_1$, and 
we obtain the desired OOD optimal solution for this problem.
\end{proof}

\subsection{To what range of distributions does IGA extrapolate?} \label{appsec:app_bound}

Our IGA provides an approximation scheme for MIP \eqref{eq:MIP_objective_informal}.
However, when used in practice with a finite number of training environments, what is the range of environments on which the solution of IGA can generalize?
\cite{alexis_confounding} provides another derivation of IGA that gives us some clue for this question. 
If our predictor model as a function of its parameters belongs to a Sobolev space, we can use the same argument as in the theorem 1 of \cite{alexis_confounding} to bound the loss of the model on the affine combination of the training environment from above by the IGA loss; 
\begin{prop}
Let $\mathcal{E}_{train}$ be the set of training environments, and
let $\Delta_\eta = \{ \{\alpha_\epsilon\} ; \alpha_\epsilon > - \eta, ~  \sum_\epsilon \alpha_\epsilon =1 \}$. 
Then $\mathcal{E}_\eta  = \{ \sum_{\epsilon \in \mathcal{E}_{train}} \alpha_\epsilon P_\epsilon ; \alpha \in \Delta \eta\}$
defines a linear space of probability distributions. 
Now, let 
$\tilde \theta = \theta - \alpha \nabla_\theta E[L_\epsilon(\theta)]$
as defined in \ref{subsec:IGA}, 
and assume that $L_\epsilon$ lives in a sobolev space $W^{1,2}$ with respect to the parameter $\theta$ and that its evaluation is a bounded linear operator.
Then 
\begin{align}
\sup_{\epsilon \in \mathcal{E}_\eta} L_\epsilon (\tilde \theta) \leq 
\hat E_{\epsilon \in \mathcal{E}_{train}} [L_\epsilon(\tilde \theta)] +  M_\eta  \| \text{trace} (\text{Var}_{\mathcal{E}}(\nabla_{\theta} \Delta_\mathcal{E}(\tilde \theta)))\|_{L_{2}(\theta)}
\end{align}
for $ M_\eta > 0$ that is monotonic in $\eta$. 
\end{prop}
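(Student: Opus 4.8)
The plan is to adapt the argument of Theorem~1 of \cite{alexis_confounding} to the one-step IGA parametrization. Throughout write $N = |\mathcal{E}_{train}|$, $\bar L(\theta) = \hat E_{\epsilon\in\mathcal{E}_{train}}[L_\epsilon(\theta)]$, and $\Delta_\epsilon(\theta) = L_\epsilon(\theta) - \bar L(\theta)$, so that $\text{Var}_\mathcal{E}(\nabla_\theta\Delta_\mathcal{E}) = \text{Var}_\mathcal{E}(\nabla_\theta L_\mathcal{E})$, and keep $\tilde\theta = \theta - \alpha\nabla_\theta\bar L(\theta)$ fixed. First I would note that for a fixed predictor parameter the loss $L_\epsilon(\tilde\theta) = \mathbb{E}_{X,Y}[\ell(Q(Y|X;\tilde\theta),Y)\mid\epsilon]$ is \emph{affine} in the data distribution $P_\epsilon$; hence, for $\epsilon\in\mathcal{E}_\eta$ with weights $\{\alpha_i\}\in\Delta_\eta$, one has the exact identity $L_\epsilon(\tilde\theta) = \sum_i \alpha_i L_{\epsilon_i}(\tilde\theta)$. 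The same affineness shows that $\mathcal{E}_\eta$ lies in the linear span of $\{P_{\epsilon_i}\}$ cut out by the affine constraint $\sum_i\alpha_i = 1$ (this is the ``linear space of probability distributions'' of the statement, modulo the one-sided constraints $\alpha_i>-\eta$ that keep the elements close to genuine mixtures).

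\emph{Mean--deviation split.} Since $\sum_i(\alpha_i - \tfrac1N) = 0$, I would rewrite
\[
  L_\epsilon(\tilde\theta) = \bar L(\tilde\theta) + \sum_i \left(\alpha_i - \tfrac1N\right)\Delta_{\epsilon_i}(\tilde\theta) \le \bar L(\tilde\theta) + \left\|\alpha - \tfrac1N\mathbf{1}\right\|_2 \left(\sum_i \Delta_{\epsilon_i}(\tilde\theta)^2\right)^{1/2},
\]
using Cauchy--Schwarz in the index $i$. Taking the supremum over $\Delta_\eta$, the factor $R_{\eta,N} := \sup_{\alpha\in\Delta_\eta}\left\|\alpha - \tfrac1N\mathbf{1}\right\|_2$ is finite, independent of $\theta$, and monotone increasing in $\eta$ because $\Delta_\eta$ grows with $\eta$.

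\emph{From loss residuals to the gradient penalty.} The crux is to bound $\sum_i\Delta_{\epsilon_i}(\tilde\theta)^2$ by the IGA penalty, and this is where the Sobolev hypothesis enters. Each $\Delta_{\epsilon_i}(\cdot)\in W^{1,2}$ over the parameter domain, and by assumption point evaluation at $\tilde\theta$ is a bounded linear functional there. After recentering each $\Delta_{\epsilon_i}$ by its mean over a fixed bounded reference neighborhood $\Theta_0$ of $\tilde\theta$ (which changes neither $\nabla_\theta\Delta_{\epsilon_i}$ nor the across-environment variance), a Poincar\'e-type inequality gives $|\Delta_{\epsilon_i}(\tilde\theta)| \le C\,\left\|\nabla_\theta\Delta_{\epsilon_i}\right\|_{L_2(\Theta_0)}$ with $C$ independent of $i$. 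Summing over $i$ and exchanging sum and integral,
\[
  \sum_i \Delta_{\epsilon_i}(\tilde\theta)^2 \le C^2\int_{\Theta_0} \sum_i \left|\nabla_\theta\Delta_{\epsilon_i}(\theta)\right|^2 d\theta = C^2 N \left\|\text{trace}\left(\text{Var}_{\mathcal{E}}(\nabla_\theta \Delta_\mathcal{E}(\theta))\right)\right\|_{L_1(\Theta_0)},
\]
and on the bounded set $\Theta_0$ the $L_1$ norm is controlled by the $L_2$ norm up to $(\mathrm{vol}\,\Theta_0)^{1/2}$. Combining the three displays and setting $M_\eta := R_{\eta,N}\,C\,\sqrt N\,(\mathrm{vol}\,\Theta_0)^{1/2}$, which is monotone in $\eta$, gives
\[
  \sup_{\epsilon\in\mathcal{E}_\eta} L_\epsilon(\tilde\theta) \le \hat E_{\epsilon\in\mathcal{E}_{train}}[L_\epsilon(\tilde\theta)] + M_\eta\left\|\text{trace}\left(\text{Var}_{\mathcal{E}}(\nabla_\theta \Delta_\mathcal{E}(\tilde\theta))\right)\right\|_{L_2(\theta)},
\]
where the $L_2(\theta)$ norm on the right is understood over $\Theta_0$, exactly as in the statement.

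\emph{Main obstacle.} I expect the Sobolev step to be the delicate one. Boundedness of point evaluation on $W^{1,2}$ is automatic only when the parameter space is one-dimensional (the embedding $W^{1,2}\hookrightarrow C^0$ fails in higher dimension), so in general one must either strengthen the regularity assumed of $\theta\mapsto L_\epsilon(\theta)$ or work in a low-dimensional effective parametrization; likewise the Poincar\'e step depends on the choice of the bounded reference neighborhood $\Theta_0$, and reconciling the $L_1(\Theta_0)$ bound that comes out naturally with the $L_2$ norm written in the statement requires absorbing $\mathrm{vol}\,\Theta_0$ into $M_\eta$. The remaining ingredients---affineness of the loss in $P_\epsilon$, the Cauchy--Schwarz split, and monotonicity of $R_{\eta,N}$ in $\eta$---are routine, and they are precisely the parts already carried out in Theorem~1 of \cite{alexis_confounding} for the closely related confounded-regression setting.
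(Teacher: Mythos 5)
Your argument reaches the same bound and rests on the same two pillars as the paper's proof (affineness of $L_\epsilon$ in $P_\epsilon$ over the extended simplex, then bounded point evaluation plus a Poincar\'e inequality in $W^{1,2}(\theta)$), but the two intermediate steps are genuinely different. For the simplex step, the paper computes the supremum of the linear functional $\alpha \mapsto \sum_\epsilon \alpha_\epsilon L_\epsilon(\tilde\theta)$ over $\Delta_\eta$ exactly at a vertex, obtaining $\hat E[L_\epsilon(\tilde\theta)] + (1+n\eta)\bigl(\sup_\epsilon L_\epsilon(\tilde\theta) - \hat E[L_\epsilon(\tilde\theta)]\bigr)$, and then has to pass from $\sup_\epsilon \Delta_\epsilon$ to $\sup_\epsilon\|\Delta_\epsilon\|_{L_2}$; your Cauchy--Schwarz split instead lands directly on $\bigl(\sum_i \Delta_{\epsilon_i}(\tilde\theta)^2\bigr)^{1/2}$, which is an empirical variance of the losses and so connects more naturally to the variance-type penalty. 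For the final step, the paper invokes a vague ``Lipschitz about $\epsilon$'' assumption to convert $\sup_\epsilon\|\nabla_\theta\Delta_\epsilon\|_{L_2}$ into the trace-of-variance penalty, whereas you use the exact pointwise identity $\sum_i\|\nabla_\theta\Delta_{\epsilon_i}(\theta)\|^2 = N\,\mathrm{trace}\bigl(\mathrm{Var}_\mathcal{E}(\nabla_\theta\Delta_\mathcal{E}(\theta))\bigr)$, which is cleaner. One caveat: what your chain actually delivers is $M_\eta\,\|\mathrm{trace}(\mathrm{Var}_\mathcal{E}(\nabla_\theta\Delta_\mathcal{E}))\|_{L_1(\Theta_0)}^{1/2}$, i.e.\ the \emph{square root} of a norm of the penalty, not the first power of its $L_2$ norm as written in your last display --- the conversion you describe loses a square root. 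This is a dimensional mismatch (your left-hand side is first order in the gradients, the stated right-hand side is second order), but the paper's own final inequality suffers from exactly the same mismatch and is no tighter, so your proof matches the paper's standard of rigor; you have also correctly isolated the real weak point, namely that point evaluation on $W^{1,2}$ is not bounded when $\dim\theta>1$.
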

Note that the right-hand side of the bound above is almost identical to the IGA loss. 
However, if we approximate the $L_{2}(\theta)$ norm by an evaluation at specific $\theta$, we retrieve the IGA loss itself \ref{subsec:IGA}. The evaluation functional $\phi : L_\epsilon \to L_\epsilon(\theta)$ can be bounded if it also lives in RKHS subspace.
\begin{proof}
The proof borrows almost exactly from the one provided in \cite{alexis_confounding}.
Let us write $n = |\mathcal{E}|$.
\begin{align}
    &\sup_{\epsilon \in \mathcal{E}_\eta} L_\epsilon (\tilde \theta) = \sup_{\alpha \in \Delta \eta} \sum_{\epsilon \in \mathcal{E}_{train}} \alpha_\epsilon L_\epsilon(\tilde \theta) \\
    &= (1 + n \eta) \sup_{\epsilon \in \mathcal{E}_{train}} L_\epsilon (\tilde \theta) - \eta \sum_{\epsilon \in \mathcal{E}_{train}} L_\epsilon(\tilde \theta)\\ 
    &= \hat E_{\epsilon \in \mathcal{E}_{train}} [L_\epsilon(\tilde \theta)]  + (1 + n\eta ) \sup_{\epsilon \in \mathcal{E}_{train}} L_\epsilon(\tilde \theta) - (1 + n\eta ) \frac{1}{n} \sum_{\epsilon \in \mathcal{E}_{train}}  L_\epsilon(\tilde \theta)] \\
    & =  \hat E_{\epsilon \in \mathcal{E}_{train}} [L_\epsilon(\tilde \theta)] 
    + (1 + n\eta) \left\{  \sup_{\epsilon \in \mathcal{E}_{train}} L_\epsilon(\tilde \theta) -   \hat E_{\epsilon \in \mathcal{E}_{train}} [L_\epsilon(\tilde \theta)] \right\} \\
    & = \hat E_{\epsilon \in \mathcal{E}_{train}} [L_\epsilon(\tilde \theta)]  
    + (1 + n\eta) \left\{  \sup_{\epsilon \in \mathcal{E}_{train}} \left( L_\epsilon(\tilde \theta) -   \hat E_{\epsilon \in \mathcal{E}_{train}} [L_\epsilon(\tilde \theta)] \right)\right\} \label{eq:ineq_Poincare} 
\end{align}
For simplicity, let us put 
$$\Delta_\epsilon(\theta) = L_\epsilon(\tilde \theta) -   \hat E_{\epsilon \in \mathcal{E}_{train}} [L_\epsilon(\tilde \theta)].$$ 
Then continuing from \ref{eq:ineq_Poincare}, we can use the assumption about $L_\epsilon$ to bound   $\sup_{\epsilon \in \mathcal{E}_\eta} L_\epsilon (\tilde \theta)$ from above by 
\begin{align}
    \hat E_{\epsilon \in \mathcal{E}_{train}} [L_\epsilon(\tilde \theta)]   + (1 + n\eta) M  \| \sup_{\epsilon \in \mathcal{E}_{train}} \Delta_\epsilon(\theta) \|_{L_2}  \\
\leq \hat E_{\epsilon \in \mathcal{E}_{train}} [L_\epsilon(\tilde \theta)]  + (1 + n\eta) M  \sup_{\epsilon \in \mathcal{E}_{train}} \| \Delta_\epsilon(\theta) \|_{L_2} 
\end{align}
for some $M$.  We can bound  $\sup_{\epsilon \in \mathcal{E}_{train}} \| \Delta_\epsilon(\theta) \|_{L_2} $ from above by 
$$\|\sup_{\epsilon} \Delta_\epsilon(\theta) \|_{L_{2}} \! \leq \sup_{\epsilon} \|\Delta_\epsilon(\theta) \|_{L_{2}} \! \leq \sup_{\epsilon} \|\nabla_{\theta} \Delta_\epsilon (\theta)\|_{L_{2}},$$ 
where we used Poincare inequality in the last inequality. 
If $\Delta_\epsilon$ is also Lipschitz about $\epsilon$ this can be bounded from above by  $\tilde M \| \textrm{trace} \textrm{Var}_{\mathcal{E}}(\nabla_{\theta} \Delta_\mathcal{E}(\theta))\|_{L_{2}}$ for some $\tilde M \! > \! 0$, and our claim follows.
\end{proof}


This suggests that, while IGA is derived from MIP in our work, it can also be interpreted from the perspective of [1], and that the OOD loss of its solution can be approximately bounded (by the similar bound we derived above) on a set of affine combinations of the training environments.

\section{Implementation Detail}
\label{appsec:implement}
In this section we describe the details of the experiment design along with the architectures of the models we used. 
In order to present a self-contained material, we first restate the experimental setting we already described in the main manuscript.

\subsection{More details on Invariant Unite test}
\label{appsubsec:exp_unit_test}

For the evaluation of all models in Invariance unit test, we used the code published in \url{https://github.com/facebookresearch/InvarianceUnitTests} (MIT license). 
However, we trained each model with 5 times longer iterations than the experiments in the original article (50000),
and used different set of \textit{Dataseed}(5) and \textit{Modelseed}(15). 
We also conducted experiments for $n_{env} = 3, 5, 7, 10$ only.
We have set the search range of $\lambda$ in IGA to $100 - 1000$.
The results of IGA differs from the values reported in \cite{aubinlinear} most likely because we trained each model with longer iteration and because we also trained the parameter $\alpha$ in \eqref{eq:MIP_objective_informal} that was not trained in the preprint version of this paper.
To obtain all the results in the table, we used 256CPU over 7days.
This experiment is implemented by \cite{paszke2019pytorch}.


\subsection{Colored MNIST} 
\label{appsubsec:cmnist}
\textit{Colored MNIST} is an experiment proposed in \cite{arjovsky2019invariant} (code published in \url{https://github.com/facebookresearch/InvariantRiskMinimization} (The license is provided in \url{https://github.com/facebookresearch/InvariantRiskMinimization/blob/master/LICENSE}). ). 
The goal of the task in \textit{Colored MNIST} is to predict the label of a given digit in the presence of varying exterior factor, $\mathcal{E}$. 
The left panel of Figure \ref{appfig:mnist} is a Bayesian Network representation of this experiment.
Each member of the \textit{Colored MNIST} dataset is constructed from an image-label pair $(x, y)$ in  MNIST, as follows.
\begin{enumerate}
\setlength{\itemsep}{0.1cm} 
    \item Assign a binary label $\hat y_{obs}$ from $y$ with the following rule: $\hat y_{obs}=0$ if $y \in \{0 \sim 4\}$ and  $\hat y_{obs}=1$ otherwise.  
    \item Flip $\hat y_{obs}$ with a fixed probability $p$ to produce $y_{obs}$.
    \item Let $x_{fig}$ be the binary image corresponding to $y$.
    \item Put $y_{obs}= \hat x_{ch1}$, and construct $x_{ch1}$ from  $\hat x_{ch}$ by flipping $\hat x_{ch1}$ with probability $e$.
    \item Construct $x_{ods} = x_{fig} \times  [x_{ch0}, (1-x_{ch0}), 0]$.(that is, red if $x_{ch1}=1$ and green if   $x_{ch1}=0$.) Indeed, $x_{obs}$ has exactly same information as the pair $(x_{fig},  x_{ch1})$. 
\end{enumerate}
In this experiment, only $(Y_{obs}, X_{obs})$ are assumed observable.
At training times, the machine learner will be given a set of datasets $\mathcal{D}_{train} = \{D_e ; e \in R_{train}\}$ in which $D_e$ is a set of observations gathered when $\mathcal{E} = e$. 
We set $|R_{train}| =2$, and choose $|D_e| = 25000$.
More particularly, for the $e_1$ we chose the flip-rate($p$) to be $0.1$,
and chose $p = 0.2$ for the $e_2$. 
Each image was resized to $14 \times 14$ resolution.

For the test evaluation, we randomly sampled 10 instances of $p$ uniformly from the range $[0, 1]$ to construct $R_{test}$, and approximated the OOD accuracy by computing the worst performance over all $R_{test}$ .
We used $5$ seeds to produce each numerical result.
For the model, we used 4 Layers MLP with 2500 units per each layer and elu activation\cite{clevert2015fast}, and did not use bias term in the last sigmoid activation. 
We used batch normalization (BN)\cite{ioffe2015batch} for each layer, and optimized the model using  
Adam\cite{kingma2014adam} with alpha = 0.0015, beta1=0.0, beta2=0.9 over 500 iterations.
In general, less number of iterations yielded better results when $|\mathcal{R}_{train}|$ was small (less overfitting). 
On C-MNIST, we trained our IGA-models while fixing $\alpha=0$ (See Appendix \ref{appsubsec:method_lim} for the reason of choosing this setting ). 
For this set of experiment, we used 8 NVIDIA Tesla P100 GPUs.
This experiment is implemented by \cite{tokui2019chainer}.

\subsection{Extended Colored MNIST} 
\label{appsubsec:ecmnist}
As described in the main manuscript, Extended Colored MNIST is a modified version of colored MNIST, in which the dataset was constructed using the following procedure.
The right panel of Figure \ref{appfig:mnist} is a Bayesian Network representation of this experiment.
\begin{enumerate}
\setlength{\itemsep}{0.1cm}
\item Set $x_{ch2}$ to $1$ with probability $e_{ch2}$. Set it to $0$ with probability $1- e_{ch2}$. 
\item Construct $\hat y_{obs}$ in the same way as in \textit{Colored MNIST}.  If $e_{ch2}=k$, construct $y_{obs}$ by flipping $\hat y_{obs}$ with probability $p_k$($k \in \{0, 1\}$.)
\item Put $y_{obs}= \hat x_{ch0}$, and construct $x_{ch0}$ from  $\hat x_{ch}$ by flipping $\hat x_{ch1}$ with probability $e_{ch0}$.
\item Construct $x_{obs}$ as $x_{fig} \times  [x_{ch0}, (1-x_{ch0}), x_{ch2}]$.  As an RGB image, this will come out as an image in which the red scale is \textit{turned on} and the green scale is \textit{turned off} if $x_{ch0}=1$, and otherway around if $x_{ch0} =0$. 
Blue scale is turned-on only if $x_{ch2} = 1$. 
\end{enumerate}
In this experiment, we set $|R_{train}| =5$, and choose $|D_e| = 10000$,
and resized each image in the dataset to $14 \times 14$ resolution.
To produce $e \in R_{train}$, we selected $e_{ch0}$ randomly from the range $[0.1, 0.2]$, and selected $e_{ch2}$ randomly from the range $[0.3, 0.4]$.

Mean while, we set $|R_{test}| = 9$.
To produce $n$-th member of $R_{test}$, we set $e_{ch0} = 0.1$ and  
we selected $e_{ch2}$ randomly from the range $[0.0, 1.0]$.
We chose $p_0=0.25$, $p_1=0.75$ for both $R_{test}$ and $R_{train}$.

We used $5$ seeds to produce each numerical result.
For the model, we used 4 Layers MLP with 2500 units per each layer, and did not use bias term in the last sigmoid activation. 
We used batch normalization for each layer, and optimized the model using  
Adam with alpha = 0.0005, beta1=0.0, beta2=0.9 over 2000 iterations.
The performance-values of IRM in the Table 3 of the main article are the results produced by the model that achieved the best average \textbf{\textit{train}} accuracy among all models trained with $\lambda > 10^4$.
The averages were computed over 5 seeds.
On EC-MNIST, we trained our IGA-models while fixing $\alpha=0$ (See Appendix \ref{appsubsec:method_lim} for the reason of choosing this setting). 
For this set of experiment, we used 8 NVIDIA Tesla P100 GPUs.
This experiment is implemented by \cite{tokui2019chainer}.



\subsection{Other technical limitations of IGA}
\label{appsubsec:method_lim}
In IGA, the invariant feature $h(X)$ is hidden within the base distribution $Q(Y|X; \theta)$ it does not appear explicitly in the algorithm.
Thus, even after training the predictor, we have no way of extracting the invariant feature itself. 
This is not the case in IRM \cite{arjovsky2019invariant} and causality-inspired methods \cite{rojas2018invariant, prevent19a},  because they train $h(X)$ and $P(Y|h(X))$ separately.
Causality inspired methods makes this possible by restricting the search space of $h(X)$ to the family of masking functions, and IRM makes this possible by restricting the search space of $Y | h(X)$ to the family of predictors that is linear in $h(X)$ (i.e., $Y | h(X) = w^T h(X)$.)  
Meanwhile, we considered the possibility that, for an arbitrary nonlinear $h(X)$,  the complexity of $P(Y | h(X) , \epsilon)$ itself might differ across environments. 
Although this allows us to treat more general situations, this comes at the cost of not being able to identify the explicit form of $h(X)$.  
Also, we shall note that IGA is not a method to check whether a given feature satisfies a controllability condition; IGA is a method aimed at solving our MIP-\textit{invariance problem} under that assumption that there exists at least one feature that satisfies the controllability condition.


Finally, we shall report that, in our application of IGA to MNIST derived datasets, the training of the $\alpha$ parameter in our inference model (\ref{eq:MAML_like_param}) was often unstable.
It seems that, when we train our formulation with an overparametrized model like MLP, it is difficult to train $\alpha$ together with the model parameters. 
From this observation, we deduced that we shall use a fixed value of $\alpha$ to reduce the parameter redundancy.
Also, as we mention in the method section and \ref{apppsec:param_app}, a small $\alpha$ better justifies the Taylor approximation. 
We therefore decided to use a very small, fixed value of $\alpha$ during the training. 
When we experimented with such settings, we also observed that the forward output did not change much between setting $\alpha$ to $0$ and to a very small value.
Therefore, in the end, we decided to set $\alpha \! = \! 0$ at the forward time in the implementation. 
We shall emphasize, however, that 
since our variance approximation holds for arbitrarily small $\alpha$, our use of $\alpha \! = \! 0$ here is strictly implementational, and it does not contradict our formulation. 


\begin{figure}[h]
\centering
\hspace{-0.5cm}
\begin{minipage}[t]{0.35\hsize}
\begin{center}
\hspace{-0.3cm}
\includegraphics[width=\linewidth]{./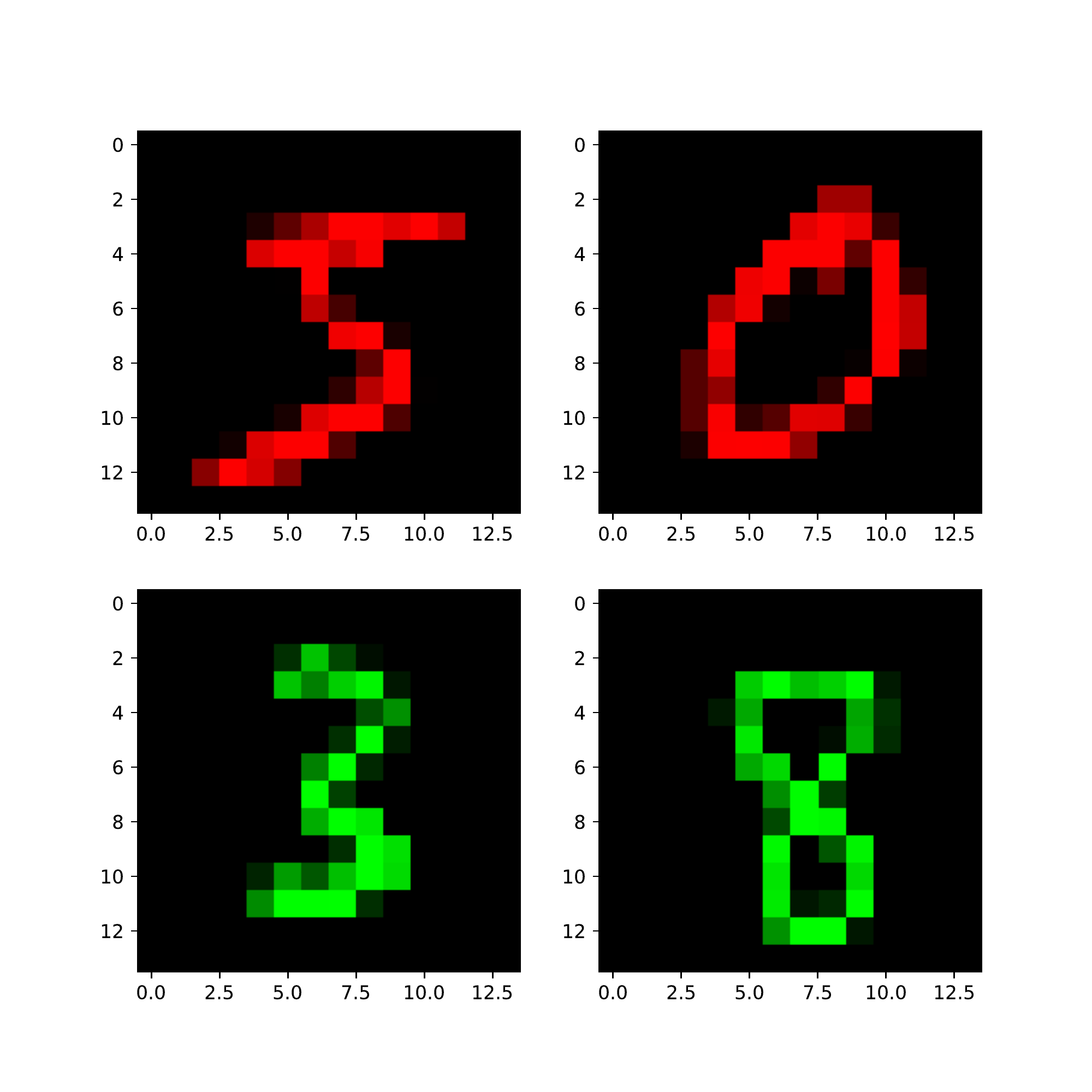}
\caption{Example images of \textit{Colored MNIST}. Only two channels are used for all images, and the colors are flipped randomly by the exterior factor.}
\end{center}
\end{minipage}
%
\hspace{0.5cm}
\begin{minipage}[t]{0.35\hsize}
\begin{center}
\hspace{-0.15cm}
\includegraphics[width=\linewidth]{./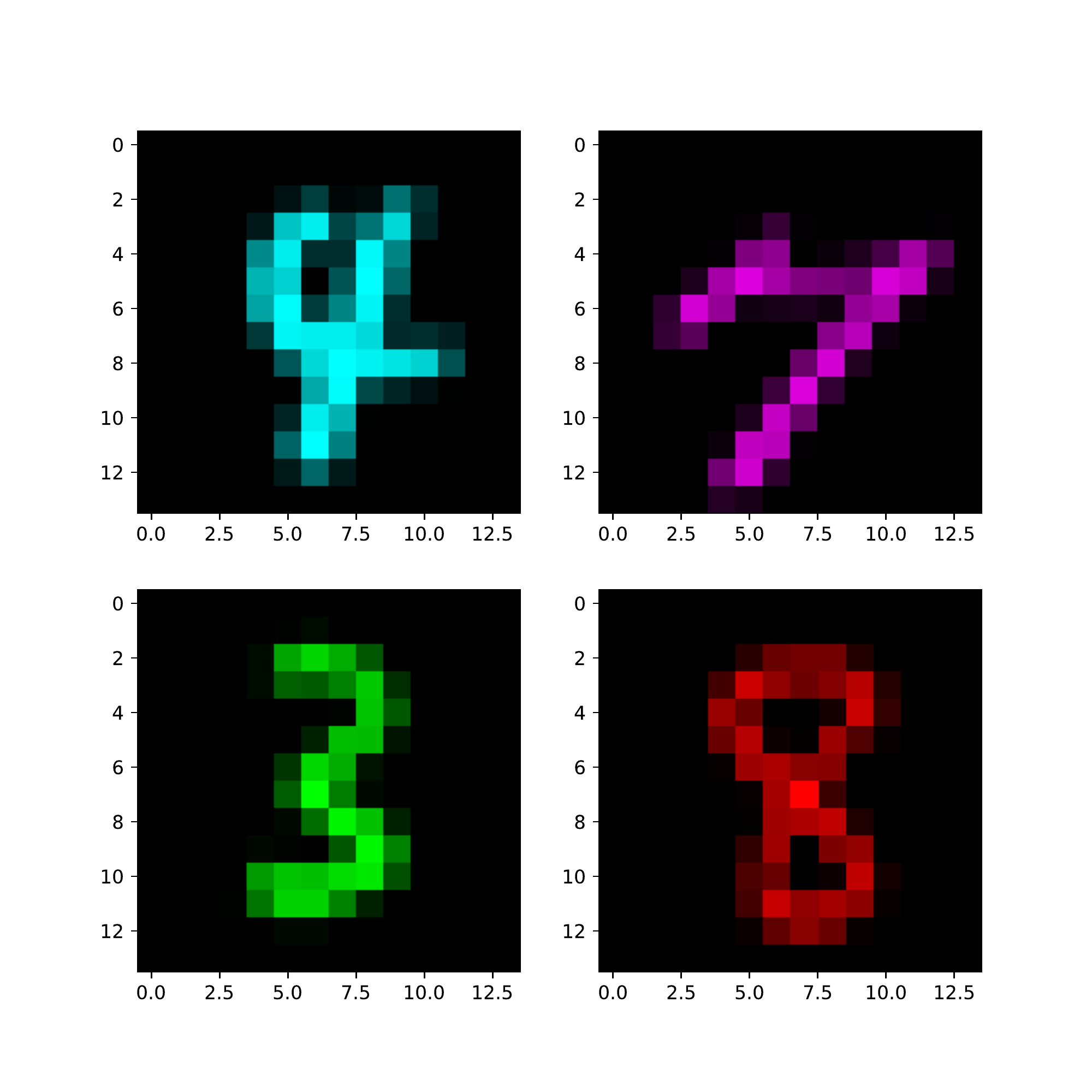}
\caption{Example images of \textit{Extended Colored MNIST}. The first  two channels and the third channel are perturbed by the different mechanism.  See the main manuscript for the way of the construction.}
\end{center}
\end{minipage}
\label{fig:cmnist_fig}
\end{figure}

\begin{figure}[t]
\begin{tabular}{cc}
\centering
\hspace{-0.3cm}
\begin{minipage}[t]{0.5\hsize}
\begin{center}
\begin{tikzpicture}[scale=0.85,transform shape,wrap/.style={inner sep=0pt,
fit=#1,transform shape=false}]
 \node[latent](e){$\mathcal{E}$};
 \node[latent, right=of e](ylabel){$Y_{obs}$};
  \node[latent, right=of ylabel](y){$Y$};
  \node[latent, below=of e](x2){$X_{ch0}$};
  \node[latent, below=of y](x1){$X_{fig}$};
 \edge {e, ylabel}{x2};
 \edge {y}{ylabel};
 \edge {y}{x1};
%
\end{tikzpicture}
\end{center}
\end{minipage}
\centering
\begin{minipage}[t]{0.5\hsize}
\begin{center}
\begin{tikzpicture}[scale=0.85,transform shape,wrap/.style={inner sep=0pt,
fit=#1,transform shape=false}]
 \node[latent](e){$\mathcal{E}$};
 \node[latent, right=of e](ylabel){$Y_{obs}$};
  \node[latent, right=of ylabel](y){$Y$};
 \node[latent, below=of e](x2){$X_{ch0}$};
  \node[latent, below=of y](x1){$X_{fig}$};
  \node[latent, below=of ylabel](x3){$X_{ch2}$};
 \edge {e, ylabel}{x2};
 \edge {e}{x3};
 \edge {y, x3}{ylabel};
 \edge {y}{x1};
\end{tikzpicture}
\end{center}
\end{minipage}
\end{tabular}
\caption{The graphical model of \textit{Colored MNIST}(left) and \textit{Extended Colored MNIST}(right)}
\label{appfig:mnist}
\end{figure}

\section{Additional Result of Invariance Unit Tests}
\label{appsec:add_iut_result}

Table \ref{appfig:IUT_table} below is the full version of Table \ref{fig:IUT_table} in main script.

\begin{figure*}[h!]
  \begin{minipage}{1.0\textwidth}
  \centering
  \scalebox{0.9}{
  \begin{tabular}{ll|c|cccc}
    \toprule
     & & Oracle & ERM & ANDMask & IRM & IGA (Ours)\\
    \midrule
    \multirow{4}{*}{Example2} 
    & $n_{env}=3$ 
    & $0.0 {\scriptstyle \pm .00}$ 
    & $0.02 {\scriptstyle \pm .01}$ 
    & $\cellbest 0.00 {\scriptstyle \pm .00}$ 
    & $0.46 {\scriptstyle \pm .01}$ 
    & $\cellbest 0.00 {\scriptstyle \pm .00}$ \\
    & $n_{env}=5$ 
    & $0.00 {\scriptstyle \pm .00}$ 
    & $\cellbest 0.00 {\scriptstyle \pm .00}$ 
    & $\cellbest 0.00 {\scriptstyle \pm .00}$ 
    & $0.46 {\scriptstyle \pm .02}$ 
    & $\cellbest 0.00 {\scriptstyle \pm .00}$ \\
    & $n_{env}=7$ 
    & $0.00 {\scriptstyle \pm .00}$ 
    & $\cellbest 0.00 {\scriptstyle \pm .00}$ 
    & $\cellbest 0.00 {\scriptstyle \pm .00}$ 
    & $0.46 {\scriptstyle \pm .02}$ 
    & $\cellbest 0.00 {\scriptstyle \pm .00}$ \\
    & $n_{env}=10$ 
    & $0.00 {\scriptstyle \pm .00}$ 
    & $\cellbest 0.00 {\scriptstyle \pm .00}$ 
    & $\cellbest 0.00 {\scriptstyle \pm .00}$ 
    & $0.47 {\scriptstyle \pm .02}$ 
    & $\cellbest 0.00 {\scriptstyle \pm .00}$ \\
    \hline
    \multirow{4}{*}{Example2s} 
    & $n_{env}=3$ 
    & $0.0 \ {\scriptstyle \pm .000}$ 
    & $0.08 \ {\scriptstyle \pm .05}$ 
    & $0.45 \ {\scriptstyle \pm .01}$ 
    & $0.45 \ {\scriptstyle \pm .01}$ 
    & $\cellbest 0.00 {\scriptstyle \pm .00}$ \\
    & $n_{env}=5$ 
    & $0.00 \ {\scriptstyle \pm .00}$ 
    & $0.03 \ {\scriptstyle \pm .03}$ 
    & $0.46 \ {\scriptstyle \pm .02}$ 
    & $0.46 \ {\scriptstyle \pm .02}$ 
    & $\cellbest 0.00 {\scriptstyle \pm .00}$ \\
    & $n_{env}=7$ 
    & $0.00 \ {\scriptstyle \pm .00}$ 
    & $0.02 \ {\scriptstyle \pm .01}$ 
    & $0.46 \ {\scriptstyle \pm .02}$ 
    & $0.46 \ {\scriptstyle \pm .02}$ 
    & $\cellbest 0.00 {\scriptstyle \pm .00}$ \\
    & $n_{env}=10$ 
    & $0.00 \ {\scriptstyle \pm .00}$ 
    & $\cellbest 0.00 \ {\scriptstyle \pm .00}$ 
    & $0.47 \ {\scriptstyle \pm .03}$ 
    & $0.47 \ {\scriptstyle \pm .03}$ 
    & $\cellbest 0.00 {\scriptstyle \pm .00}$ \\
    \hline
    \multirow{4}{*}{Example3} 
    & $n_{env}=3$ 
    & $0.01 \ {\scriptstyle \pm .00}$ 
    & $0.50 \ {\scriptstyle \pm .01}$ 
    & $\cellbest 0.30 {\scriptstyle \pm .24}$ 
    & $0.50 \ {\scriptstyle \pm .01}$
    & $0.50 \ {\scriptstyle \pm .01}$ \\
    & $n_{env}=5$ 
    & $0.01 \ {\scriptstyle \pm .00}$ 
    & $0.37 \ {\scriptstyle \pm .19}$ 
    & $\cellbest 0.01 {\scriptstyle \pm .00}$ 
    & $0.23 \ {\scriptstyle \pm .22}$ 
    & $0.36 \ {\scriptstyle \pm .19}$ \\
    & $n_{env}=7$ 
    & $0.01 \ {\scriptstyle \pm .00}$ 
    & $0.26 \ {\scriptstyle \pm .23}$ 
    & $\cellbest 0.01 {\scriptstyle \pm .00}$ 
    & $0.23 \ {\scriptstyle \pm .21}$ 
    & $0.21 \ {\scriptstyle \pm .23}$ \\
    & $n_{env}=10$ 
    & $0.01 \ {\scriptstyle \pm .00}$ 
    & $0.06 \ {\scriptstyle \pm .07}$ 
    & $\cellbest 0.01 {\scriptstyle \pm .00}$ 
    & $0.08 \ {\scriptstyle \pm .08}$ 
    & $0.03 \ {\scriptstyle \pm .02}$ \\
    \hline
    \multirow{4}{*}{Example3s} 
    & $n_{env}=3$ 
    & $0.01 \ {\scriptstyle \pm .00}$ 
    & $0.50 \ {\scriptstyle \pm .01}$ 
    & $0.50 \ {\scriptstyle \pm .00}$ 
    & $0.50 \ {\scriptstyle \pm .01}$ 
    & $0.50 \ {\scriptstyle \pm .01}$ \\
    & $n_{env}=5$ 
    & $0.01 \ {\scriptstyle \pm .00}$ 
    & $0.36 \ {\scriptstyle \pm .20}$ 
    & $0.50 \ {\scriptstyle \pm .00}$ 
    & $0.38 \ {\scriptstyle \pm .17}$ 
    & $\cellbest 0.35 {\scriptstyle \pm .21}$ \\
    & $n_{env}=7$ 
    & $0.01 \ {\scriptstyle \pm .00}$ 
    & $0.26 \ {\scriptstyle \pm .23}$ 
    & $0.48 \ {\scriptstyle \pm .08}$ 
    & $0.24 \ {\scriptstyle \pm .22}$ 
    & $\cellbest 0.21 {\scriptstyle \pm .23}$ \\
    & $n_{env}=10$ 
    & $0.01 \ {\scriptstyle \pm .00}$ 
    & $0.03 \ {\scriptstyle \pm .02}$ 
    & $0.51 \ {\scriptstyle \pm .10}$ 
    & $\cellbest 0.01 {\scriptstyle \pm .00}$ 
    & $0.02 \ {\scriptstyle \pm .01}$ \\
    \bottomrule
    \end{tabular}
    }
    \end{minipage} \tblcaption{Test errors for all algorithms and datasets for the invariant unit tests in \cite{aubinlinear}  $(d_{inv}, d_{spu}, n_{env}) = (5, 5, n_{env})$ ($n_{env}$ is the number of training environments).} 
    \label{appfig:IUT_table}
\end{figure*}

\section{Additional Result of \textit{Colored MNIST} and \textit{Extended Colored MNIST}}

\label{appsec:add_result}
The result of \textit{Extended Colored MNIST} with $p_0=0.25$, $p_1=0.65$ for both $R_{test}$ and $R_{train}$.  
Our algorithm outperforms the Invariant Risk Minimization(IRM)\cite{arjovsky2019invariant} in this case as well in Figure \ref{fig:excmnist_065}.

\begin{figure}[h!]
\centering
\begin{minipage}[t]{0.75\hsize}
\begin{center}
\includegraphics[width=\linewidth]{./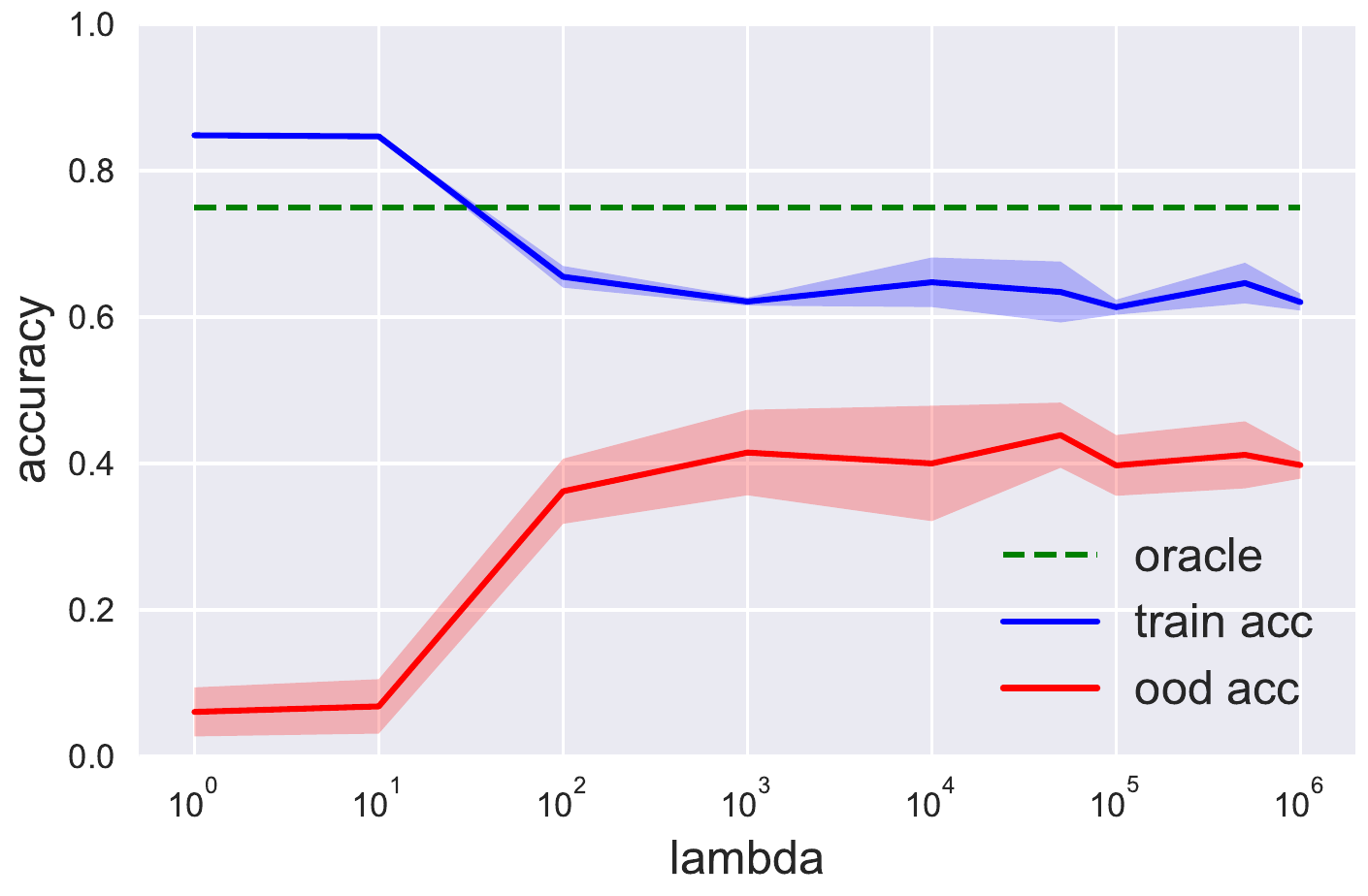}
\end{center}
\end{minipage}
\caption{Result on \textit{Colored MNIST} with MLP without BN.}
\label{fig:cmnist_mlp_result}
\end{figure}

\begin{figure*}[h]
\centering
\begin{minipage}[t]{0.35\hsize}
\begin{center}
\hspace{-0.3cm}
\stackunder[5pt]{\includegraphics[width=\linewidth]{./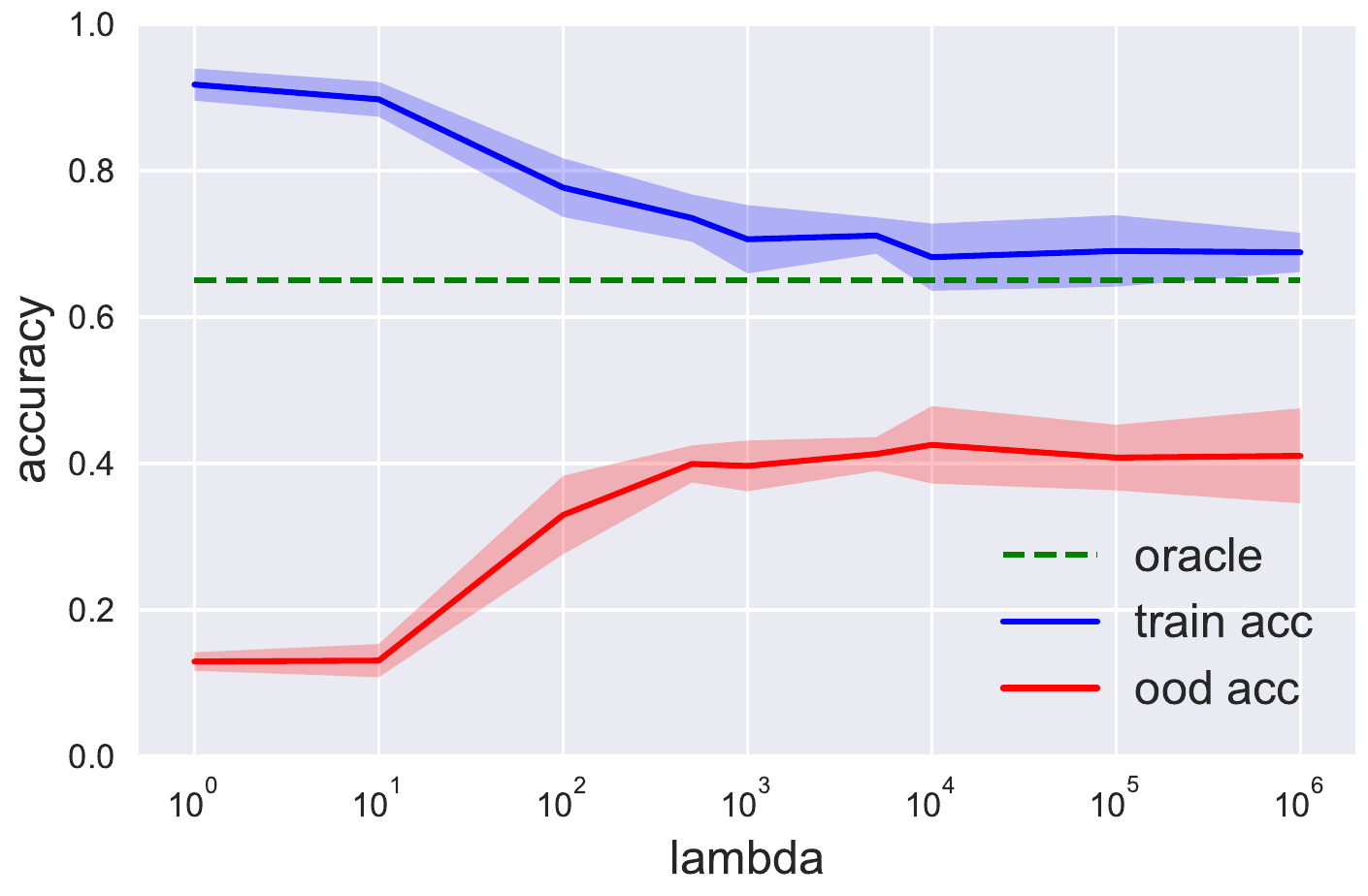}}{IRM}
\end{center}
\end{minipage}
\begin{minipage}[t]{0.35\hsize}
\begin{center}
\hspace{-0.15cm}
\stackunder[5pt]{\includegraphics[width=\linewidth]{./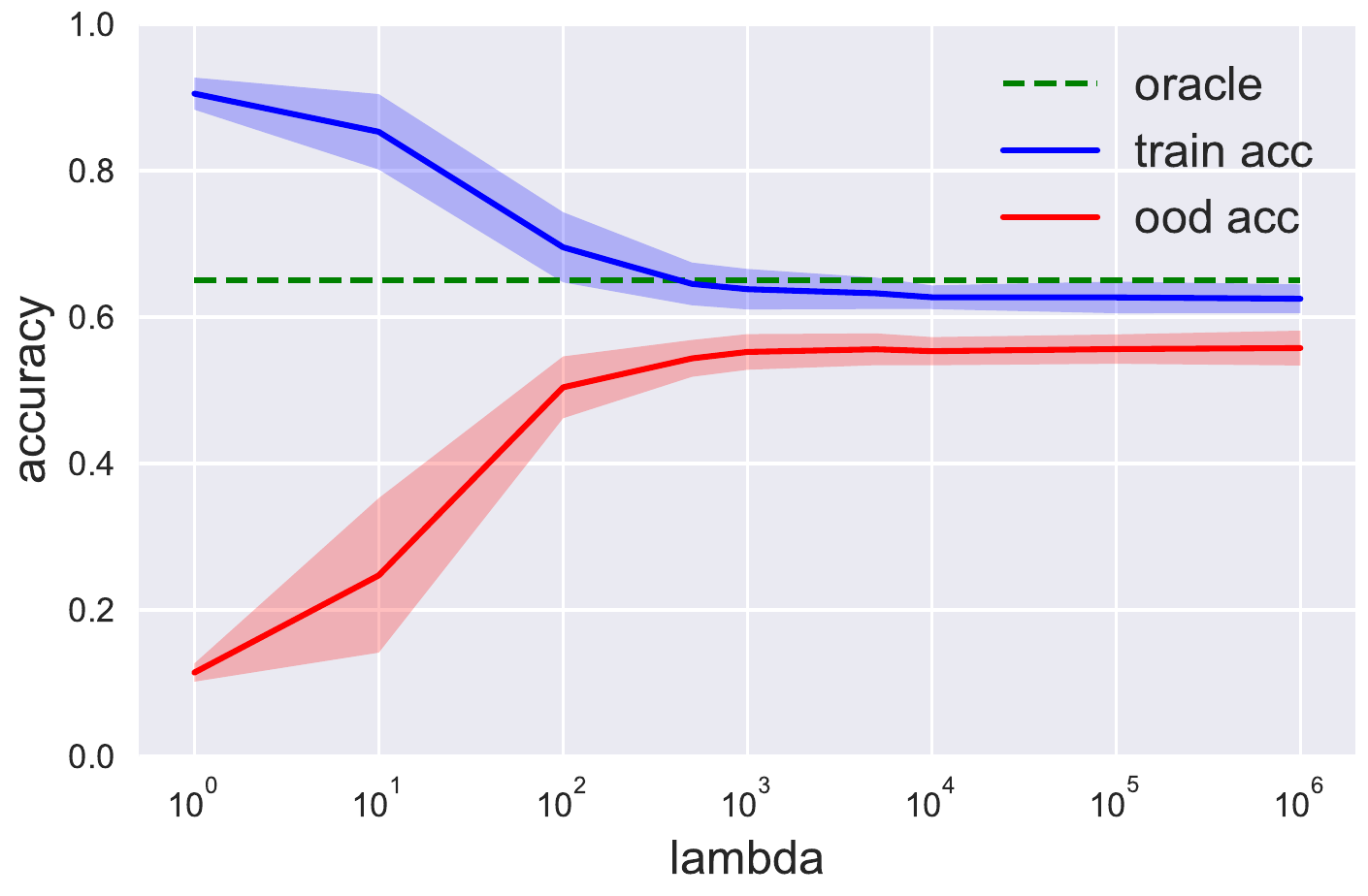}}{Ours}
\end{center}
\end{minipage}
\caption{Result on \textit{Extended Colored MNIST} ($p_0 = 0.25, p_1 = 0.65$)}
\label{fig:excmnist_065}
\end{figure*}

In general, IRM does not work well with standard gradient descent when we implement MLP without Batch Normalization (Figure \ref{fig:cmnist_mlp_result}).

\newpage
We shall  note that the original implementation of the IRM published in Github (\url{https://github.com/facebookresearch/InvariantRiskMinimization}) uses a very specific schedule for the regularization parameter $\lambda$, and it makes $\lambda$ to jump to a very large value at a very specific timing. 
The following figures are the result of their original algorithm on MNIST and \textit{Extended Colored MNIST} implemented with various jump-timings of $\lambda$.  
For \textit{Colored MNIST}, the original IRM works for specific choices of the jump timing($200\sim 300$). 
For \textit{Extended Colored MNIST}, the original algorithm does not work too well for any choice of the jump timings. 
Meanwhile, IRM works relatively well on \textit{Colored MNIST} consistently if we apply batch normalization, and it works well even without "jumping" the $\lambda$. 
For the tables we present in the main manuscript, we reported the result of IRM implemented \textit{with} batch normalization, which consistently yielded better results than the original implementation. 
\begin{figure*}[h]
\centering
\begin{minipage}[t]{0.35\hsize}
\begin{center}
\hspace{-0.3cm}
\stackunder[5pt]{\includegraphics[width=\linewidth]{./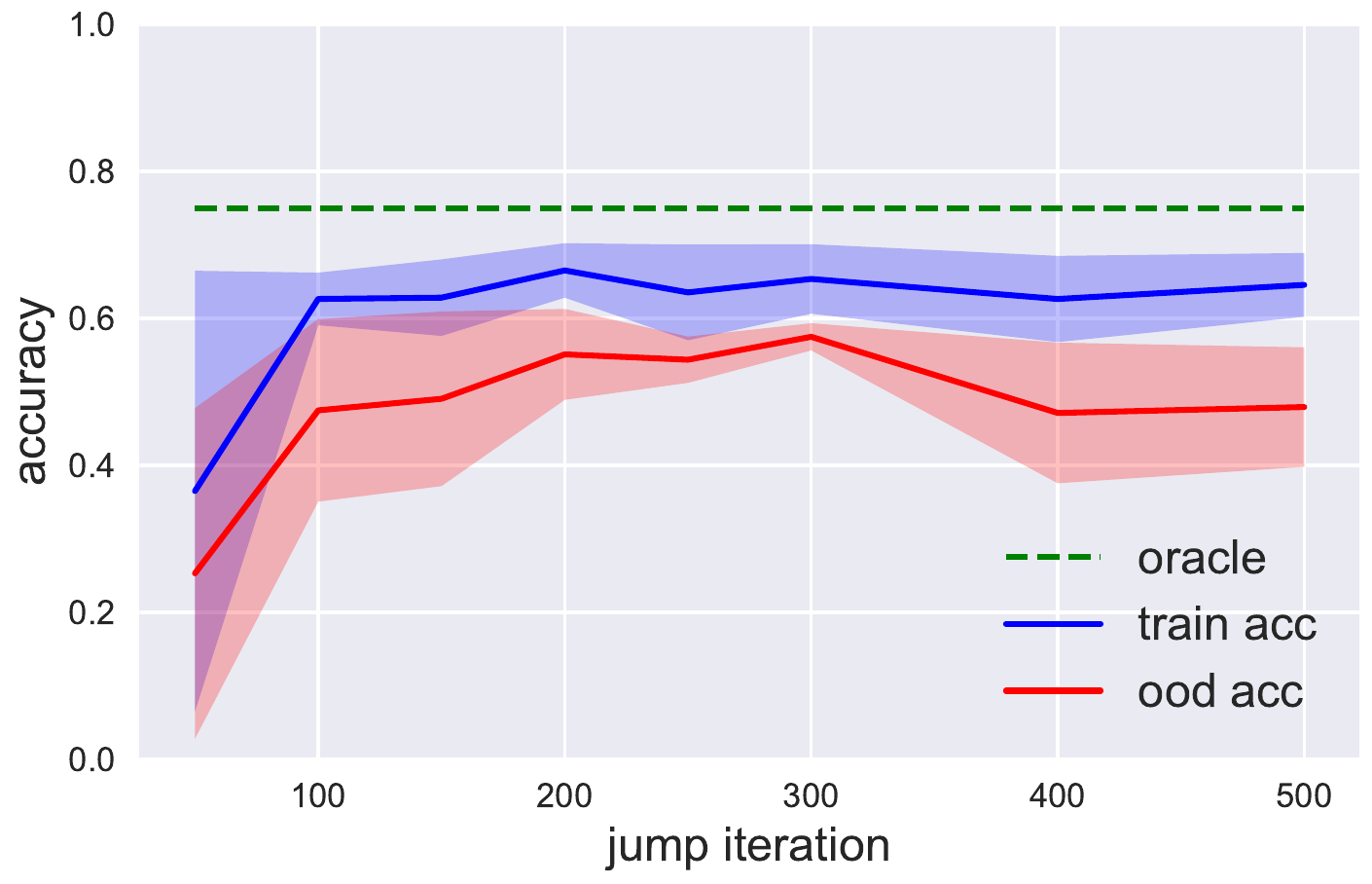}}{MLP}
\end{center}
\end{minipage}
\begin{minipage}[t]{0.35\hsize}
\begin{center}
\hspace{-0.15cm}
\stackunder[5pt]{\includegraphics[width=\linewidth]{./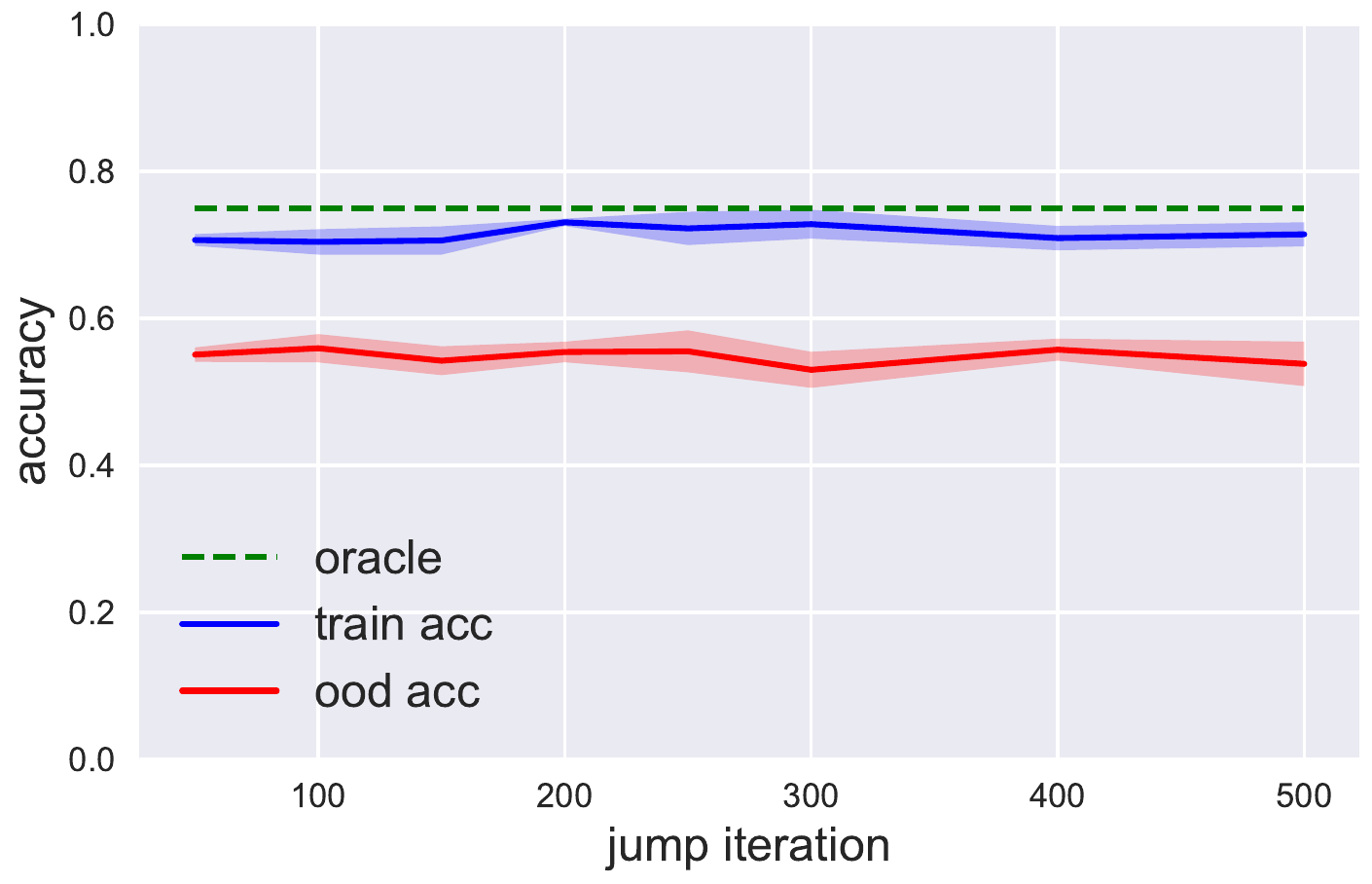}}{MLP with BN}
\end{center}
\end{minipage}
\caption{The plot of jump timing against the accuracies on \textit{Colored MNIST}}
\label{fig:cmnist_result_app}
\vspace{1.5cm}
\begin{minipage}[t]{0.35\hsize}
\begin{center}
\hspace{-0.3cm}
\stackunder[5pt]{\includegraphics[width=\linewidth]{./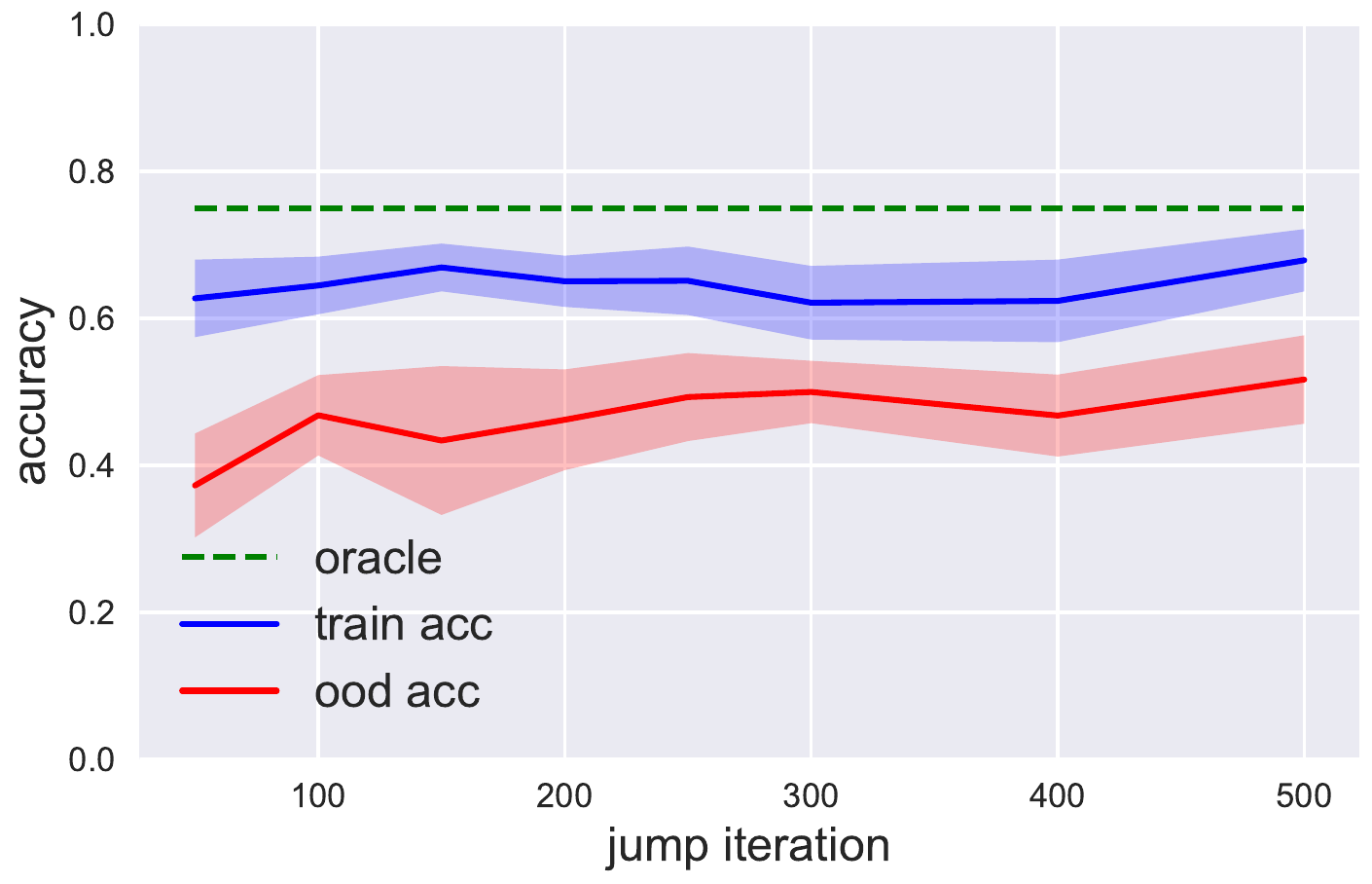}}{MLP}
\end{center}
\end{minipage}
\begin{minipage}[t]{0.35\hsize}
\begin{center}
\hspace{-0.15cm}
\stackunder[5pt]{\includegraphics[width=\linewidth]{./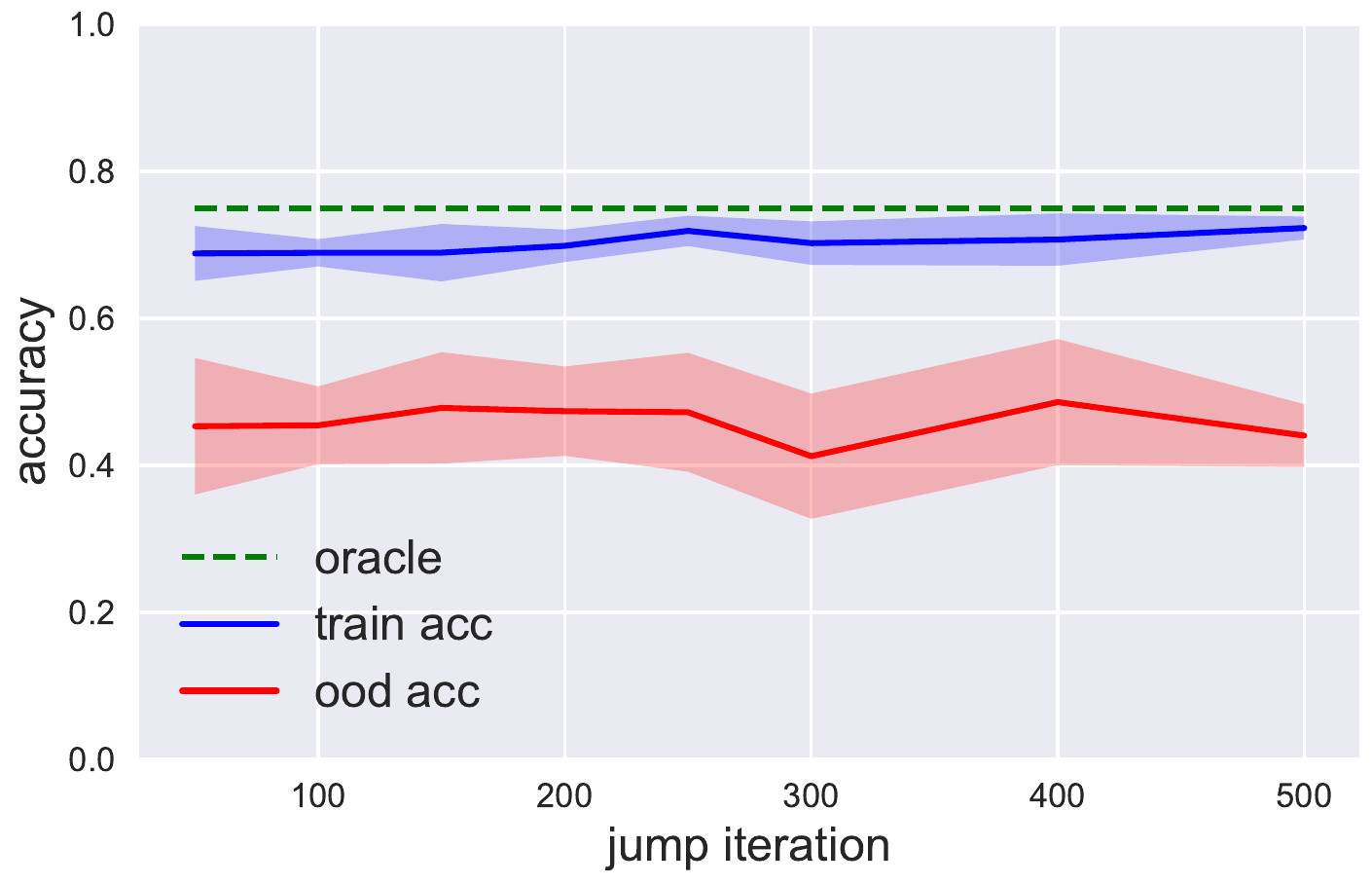}}{MLP with BN}
\end{center}
\end{minipage}
\caption{Results on \textit{Extended Colored MNIST}.}
\label{fig:ecmnist_result}

\begin{minipage}[t]{0.35\hsize}
\begin{center}
\hspace{-0.3cm}
\stackunder[5pt]{\includegraphics[width=\linewidth]{./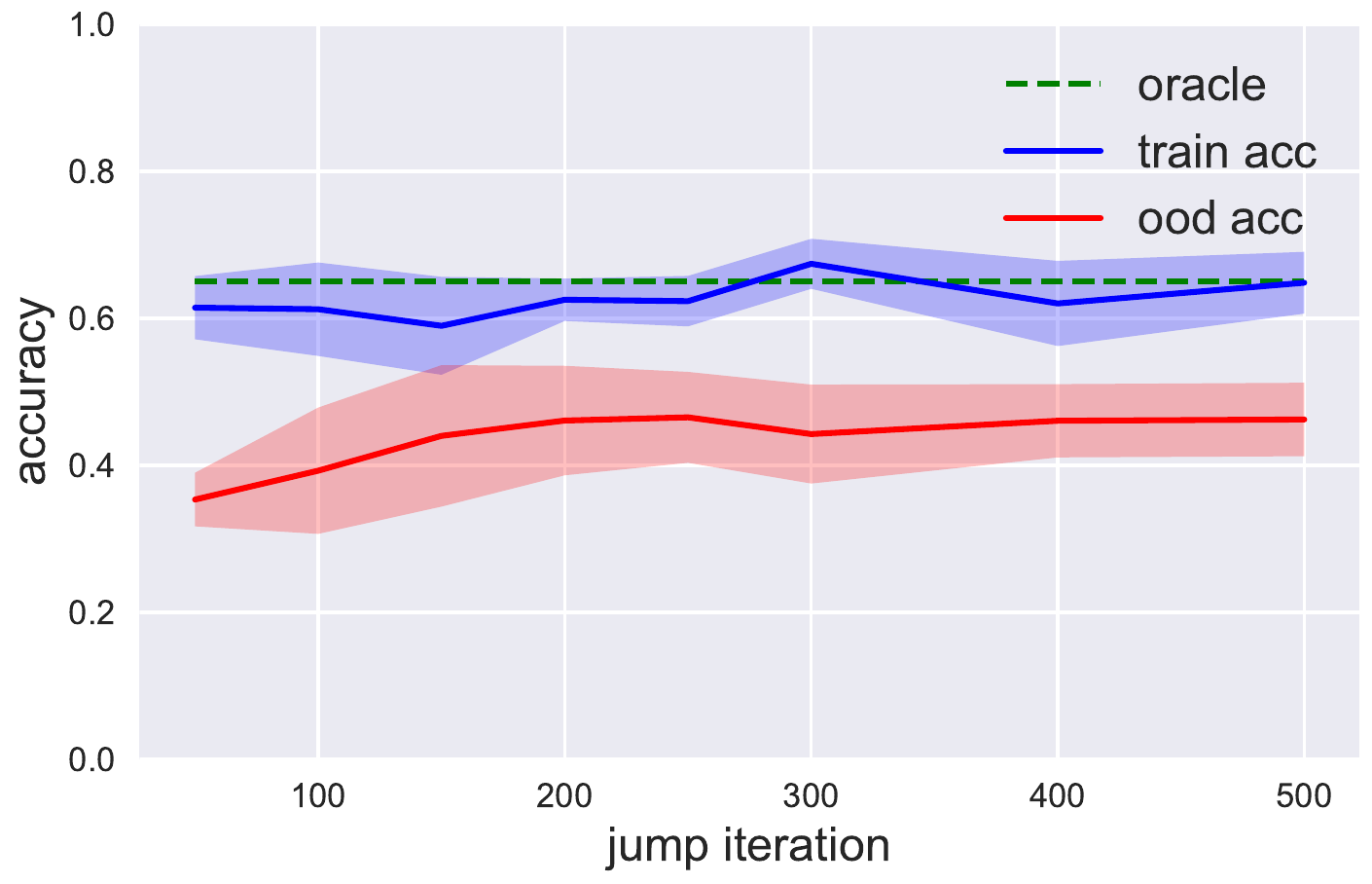}}{MLP}
\end{center}
\end{minipage}
\begin{minipage}[t]{0.35\hsize}
\begin{center}
\hspace{-0.15cm}
\stackunder[5pt]{\includegraphics[width=\linewidth]{./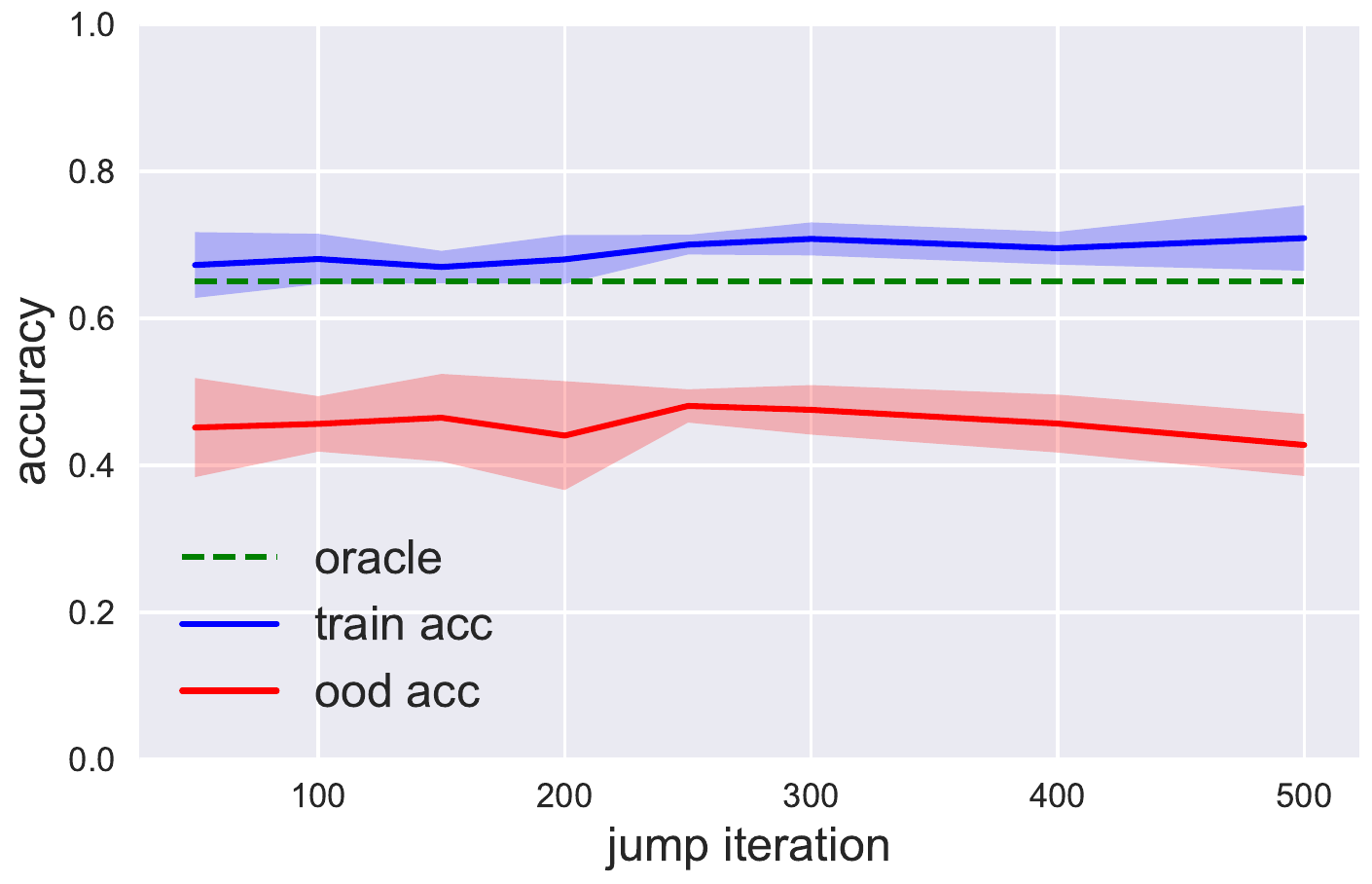}}{MLP with BN}
\end{center}
\end{minipage}
\caption{The plot of jump timing against the accuracies on \textit{Extended Colored MNIST}.}
\end{figure*}

\subsection{Ablation study for two phase training with nonlinear predictor}

\subsubsection{Two-phase training} 
Previous deep learning OOD algorithms like \cite{arjovsky2019invariant} and \cite{chang2020invariant} aim to learn an OOD optimal predictor in two phases: (i) the phase of learning a \textit{invariant feature} $h(X)$ and (ii) the phase of learning a predictor $\hat Y = g(h(X)$. 
When the loss is of Bregman divergence type, the optimal $g^*$ for any given $h$ takes of the form $\mathbb{E}[Y|\Phi] = g^*(h(X))$, and the optimal solution $g^*$ itself depends on the choice of $h$; we shall therefore write $g^*_h$ for $g^*$.
Because $g^*_h$ and $h$ are dependent on one another, allowing a large model space for both $g^*_h$ or $h$ would make the training difficult.
The algorithm of \cite{arjovsky2019invariant} took the approach of using a small model space for $g^*_h$ and a large black-box model space for $h$.
In other words, they trained the predictor of the form of $w^T h(X)$.
However, the complexity of $g^*_h$ may vary with $h$.
That is, even if $g^*_{h_0}$ is linear so that $E[Y|h_0(X)] = w^T h_0(X)$ for some $w$, it is possible that $g^*_{h_1} E[Y|h_1(X)]$ for a different $h_1$  might be more a more complex function of $h_1(X)$.
On the other hand, causality-inspired works took a reverse approach of assuming a possibly large model space for $g^*_h$, and sought $h$ from those that can be expressed as $M \odot X$ with a binary mask variable $M$.
That is, they constructed the invariant predictor of form $g^*_h(M \odot X)$ with nonlinear $g*_h$. 
However, if the model space of $g^*_h$ is large, optimizing $g^*_h$ with respect to function $h$ can be a daunting task.
In fact, \cite{chang2020invariant} is giving up the computation of the gradient of $g^*_h$ with respect to the parameter of $h$.
IGA is different from previous approaches in that it implicitly trains $g$ and $h$ in one phase. 

We conducted an ablation study to compare our one-phase training against the two-phase training.
To train a generic model in two phase training, we used the following modification of the objective function used in \cite{chang2020invariant}.
\begin{align}
     \argmin_{g, h} & \Big\{  \mathbb{E}[\mathcal{L}_\mathcal{E}(g \circ h)]  \nonumber \\
     &+ \lambda a( \mathbb{E}[\mathcal{L}_\mathcal{E}(g \circ h )] - \mathbb{E}[\mathcal{L}_\mathcal{E}(g_{\mathcal{E}, h}^* \circ h)]) \Big\}
\end{align}
We describe each component of this expression below. 
The function $a(t)$ is a  convex function that monotonically increases in $t$ when $t<0$, and strictly increases in $t$ when $t \geq 0$.  
The function $g_{\epsilon, h}^*$ for each $\epsilon$ and $h$ is the function that achieves  
$\min_g \mathcal{L}_\epsilon(g(h))$, and it is to be approximated with an internal round of gradient descent. 
The parameter $\lambda$ is the regularization parameter. 
We use a monotonic function like $a$ in this objective function because,  when we take the full expectation with respect to $X$ and $Y$, $\mathcal{L}_\epsilon(g \circ h) > \mathcal{L}_\epsilon(g^*_{\epsilon, h} \circ h)$  by the optimality of $g^*_{\epsilon, h}$.  
We trained $g$ and $h$ with this objective function on both C-MNIST and EC-MNIST, and studied the relation between $\lambda$ and the final accuracy as well as the value of the regularization term.

To model both $g$ and $h$, we used MLP with 4 layers containing 1500 nodes each and activation function elu. We did not use bias term in the last sigmoid activation. 
As is done in both \cite{arjovsky2019invariant} and \cite{chang2020invariant}, we optimized both models in parallel without propagating the loss of $g_h$ to $h$. 
For both C-MNIST and EC-MNIST, we evaluated the model performance in the same way as in the IRM experiments.

As we see in the plots below, even when the loss of the environment agnostic predictor $g \circ h$ is close to environment specific $g_{\epsilon, h}^* \circ h$,  the the performance on the training environments does not generalize to all environments. 
This tendency was observed irrespective of the presence of Batch normalization.
This is possibly true $g^*_\epsilon$ is not estimated well in the training process due to the inter-dependency between $g$ and $h$.

\begin{figure*}[h]
\centering
\begin{minipage}[t]{0.275\hsize}
\begin{center}
\hspace{-0.3cm}
\stackunder[5pt]{\includegraphics[width=\linewidth]{./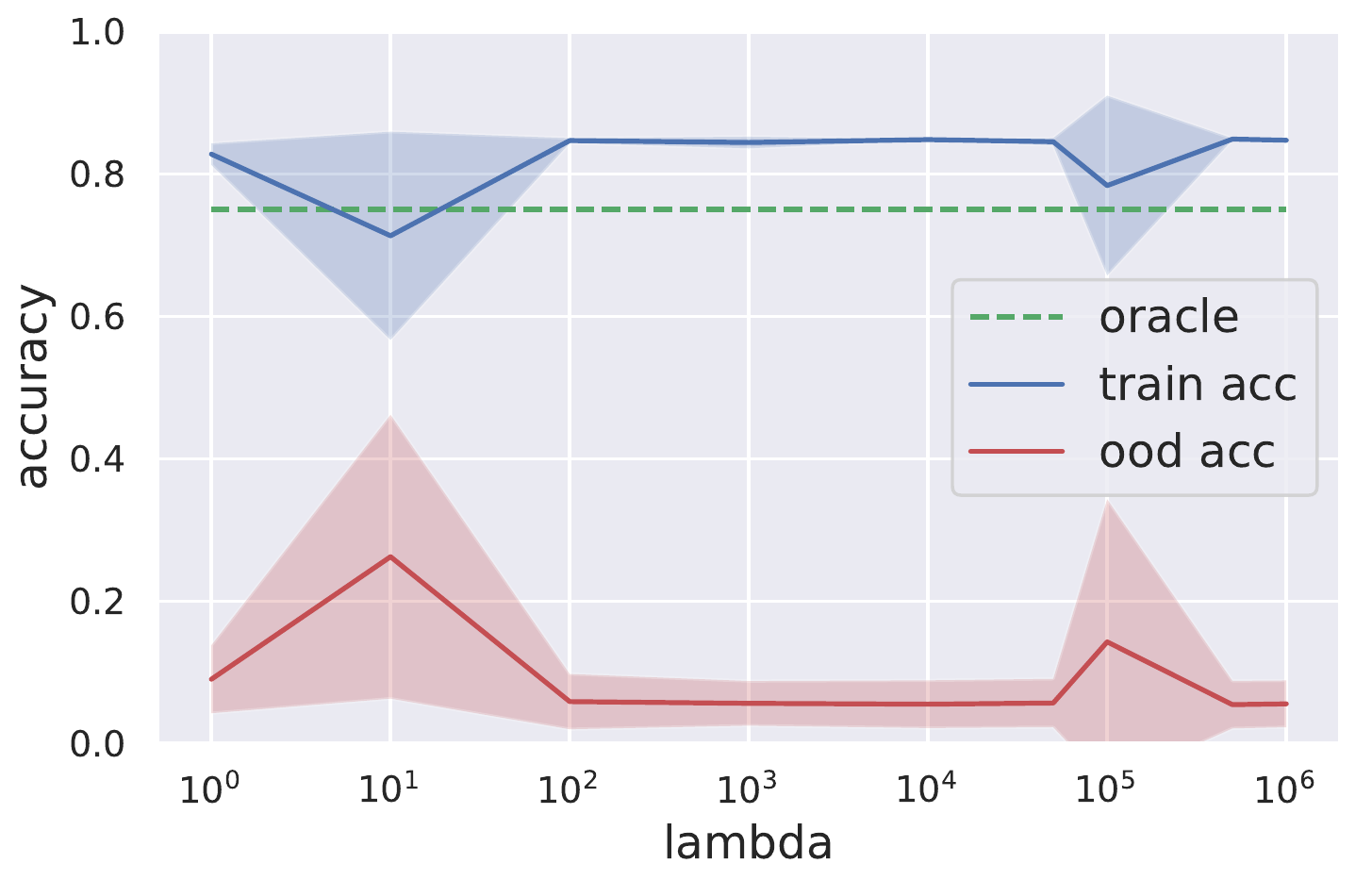}}{accuracy}
\end{center}
\end{minipage}
\begin{minipage}[t]{0.275\hsize}
\begin{center}
\hspace{-0.15cm}
\stackunder[5pt]{\includegraphics[width=\linewidth]{./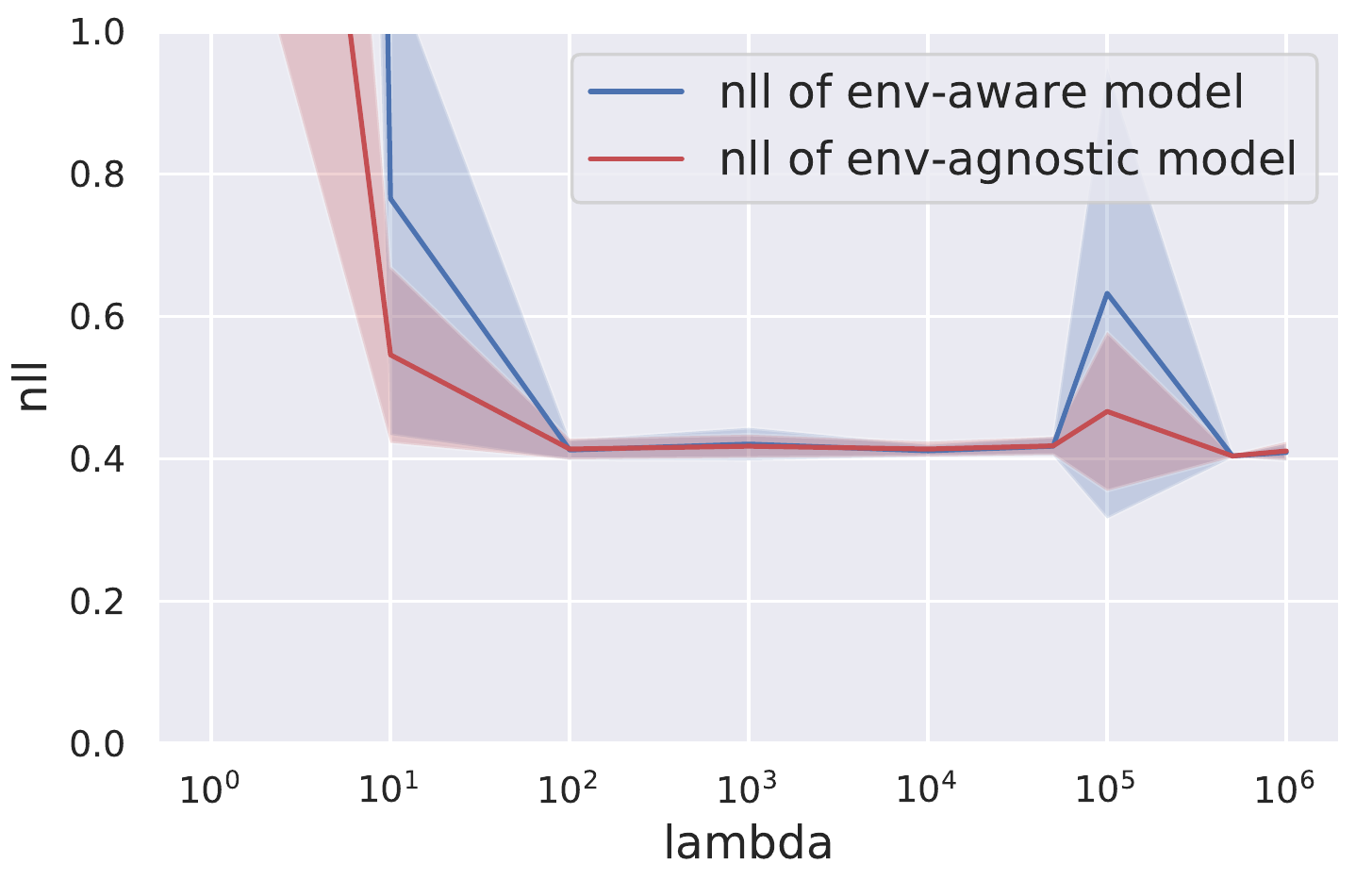}}{The final nll}
\end{center}
\end{minipage}
\caption{The two phase training results for \textit{Colored MNIST} with MLP encoder and MLP predictor}
\label{fig:ir_cmnist_result_app}
\end{figure*}

\begin{figure*}[h]
\centering
\begin{minipage}[t]{0.275\hsize}
\begin{center}
\hspace{-0.3cm}
\stackunder[5pt]{\includegraphics[width=\linewidth]{./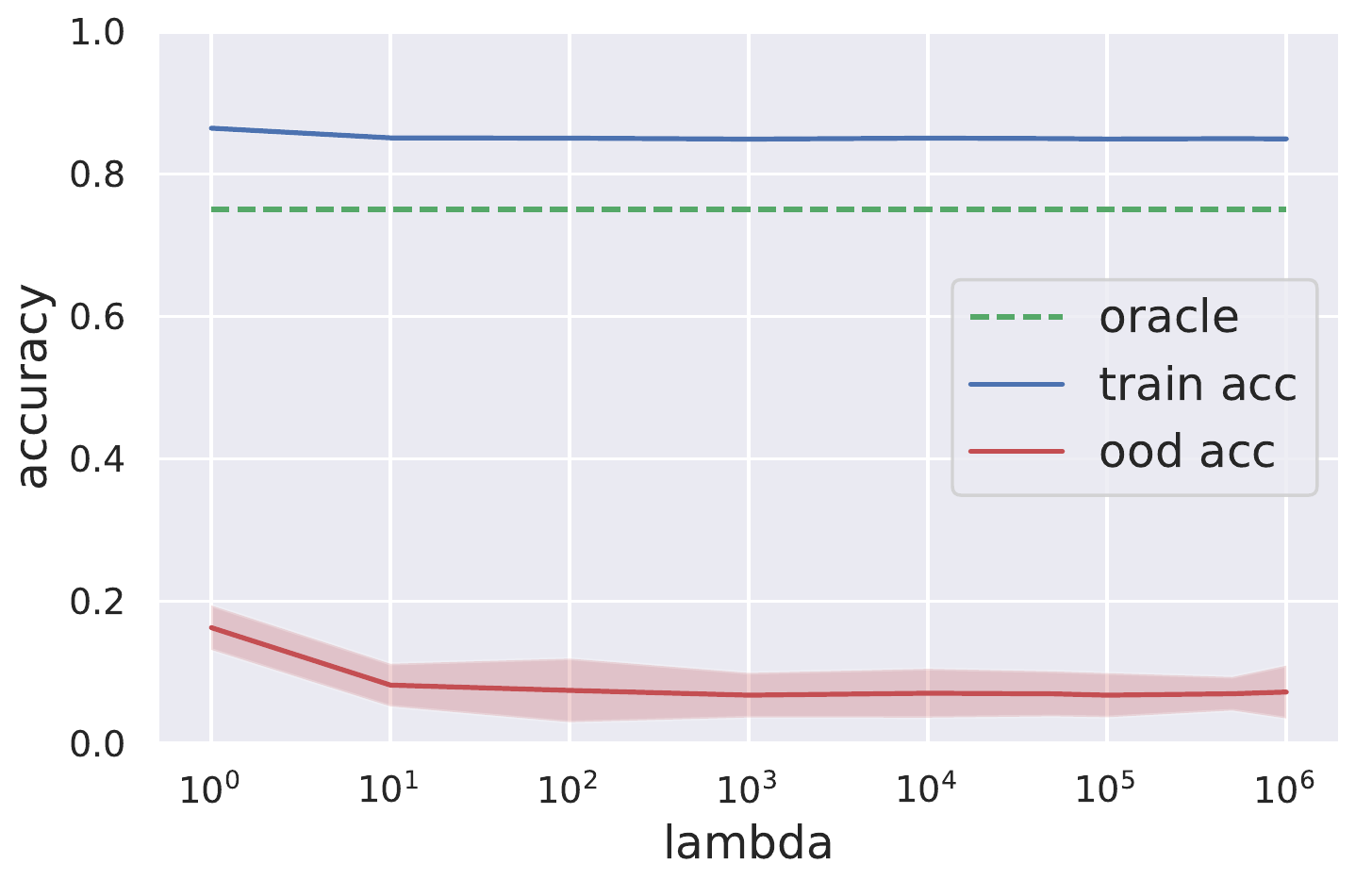}}{accuracy}
\end{center}
\end{minipage}
\begin{minipage}[t]{0.275\hsize}
\begin{center}
\hspace{-0.15cm}
\stackunder[5pt]{\includegraphics[width=\linewidth]{./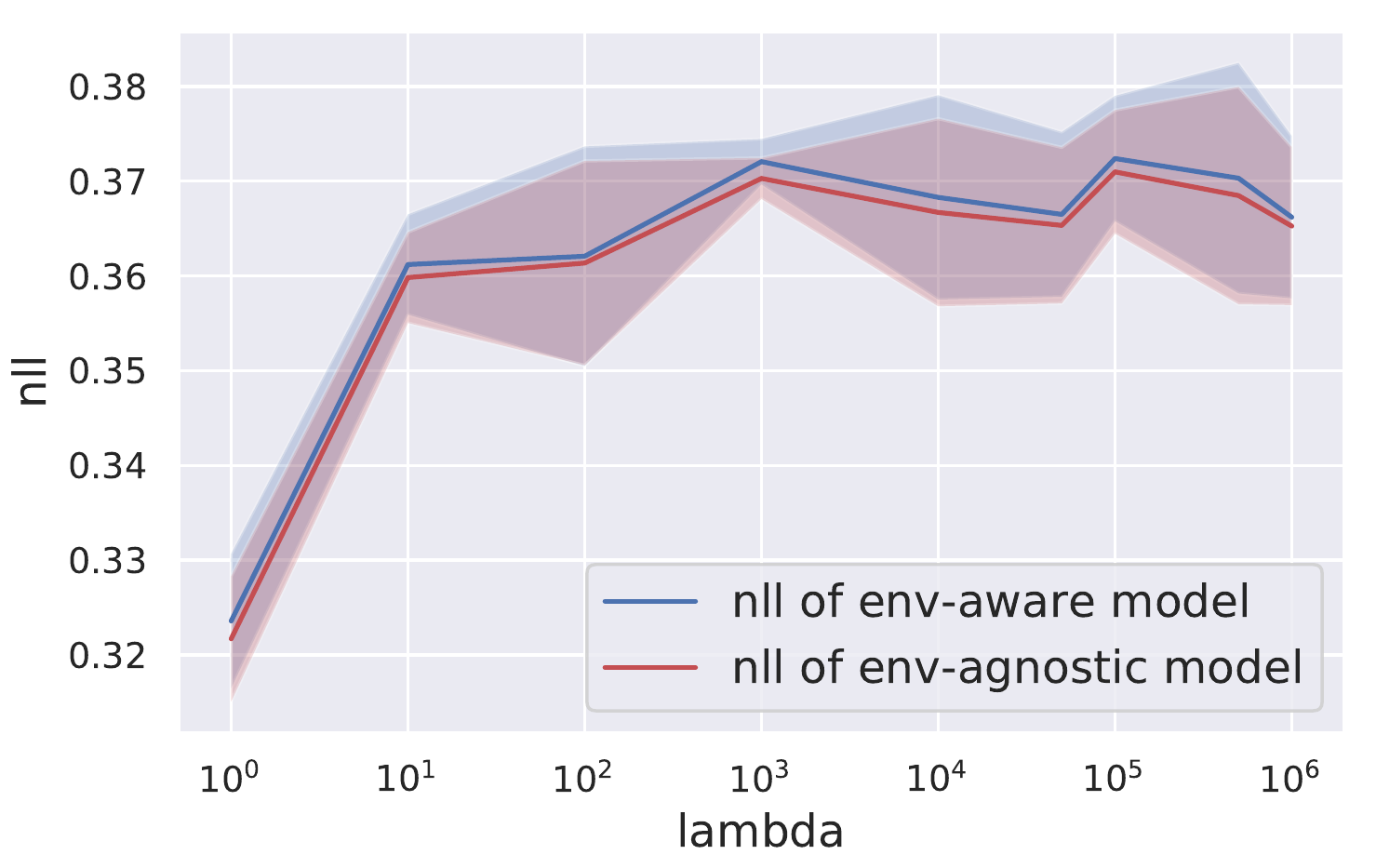}}{The final nll}
\end{center}
\end{minipage}
\caption{The two phase training results for \textit{Colored MNIST} with MLP + BN encoder and MLP + BN predictor }
\label{fig:ir_cmnist_result_app_bn}
\end{figure*}

\begin{figure*}[h]
\centering
\begin{minipage}[t]{0.275\hsize}
\begin{center}
\hspace{-0.3cm}
\stackunder[5pt]{\includegraphics[width=\linewidth]{./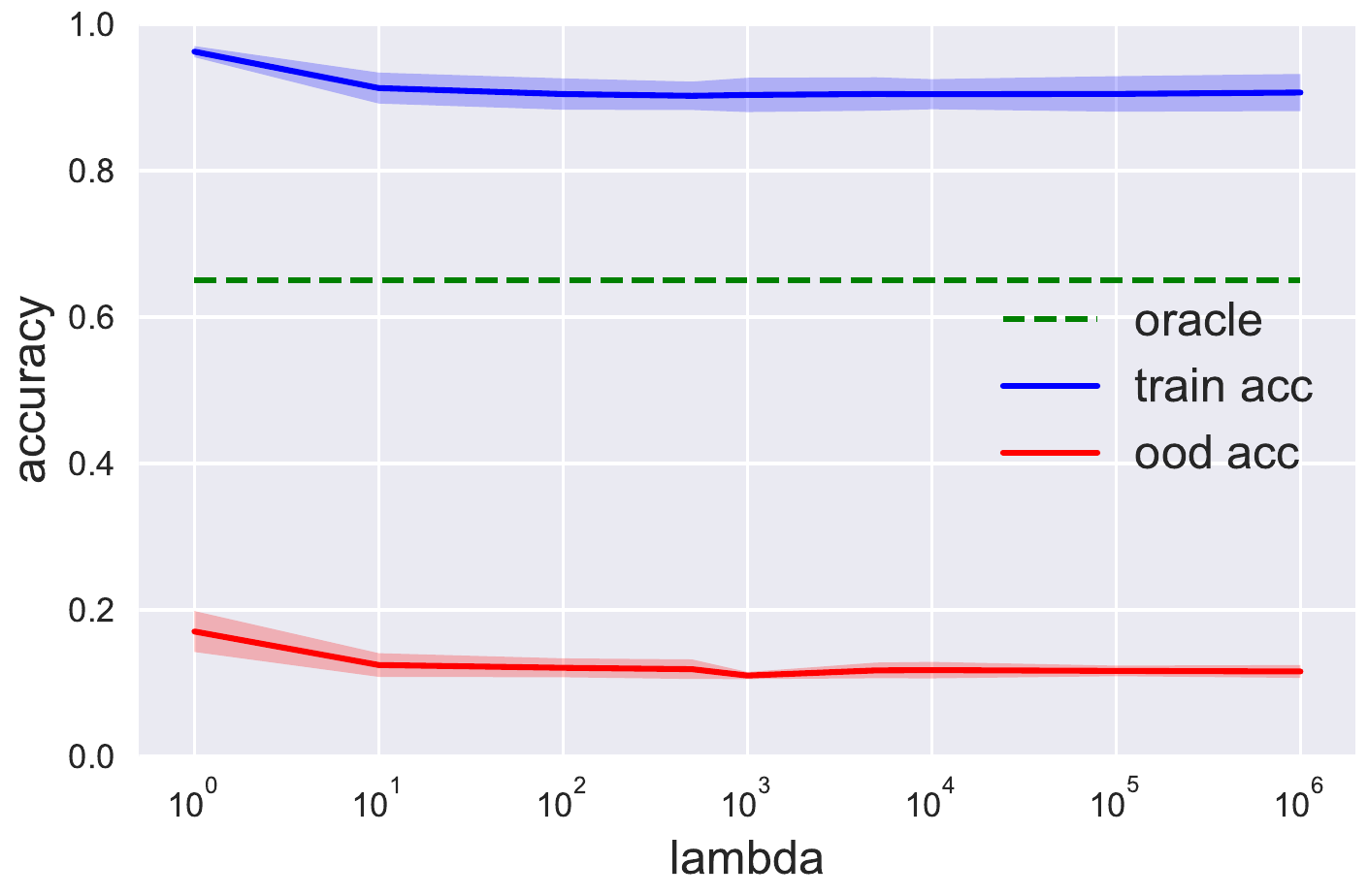}}{accuracy}
\end{center}
\end{minipage}
\begin{minipage}[t]{0.275\hsize}
\begin{center}
\hspace{-0.15cm}
\stackunder[5pt]{\includegraphics[width=\linewidth]{./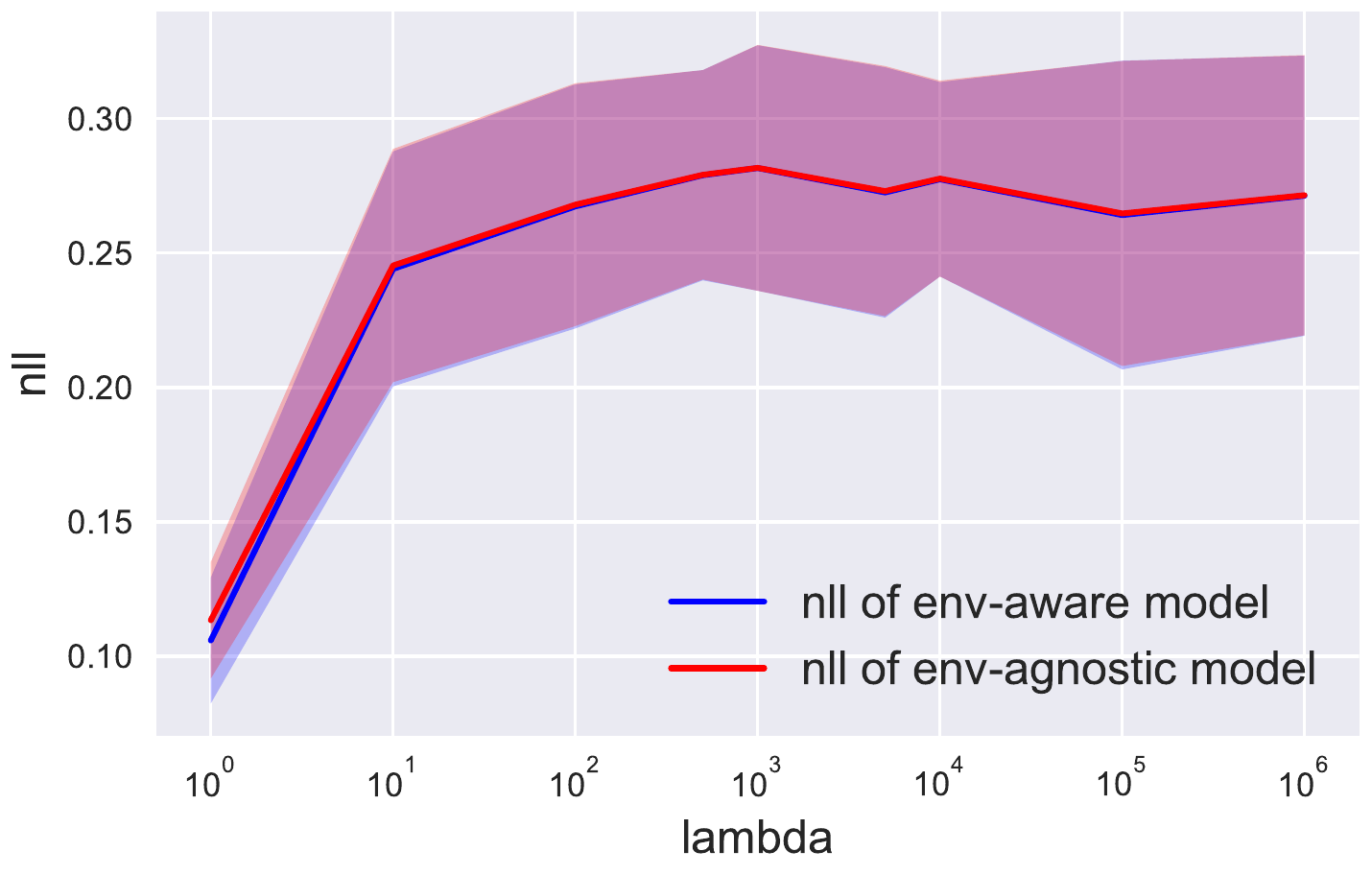}}{The final nll}
\end{center}
\end{minipage}
\caption{The two phase training results for \textit{Extended Colored MNIST}($p_0 = 0.25, p_1=0.65$) with MLP + BN encoder and MLP + BN predictor}
\label{fig:ir_ecmnist_result_app_065_bn}
\end{figure*}

\begin{figure*}[h]
\centering
\begin{minipage}[t]{0.275\hsize}
\begin{center}
\hspace{-0.3cm}
\stackunder[5pt]{\includegraphics[width=\linewidth]{./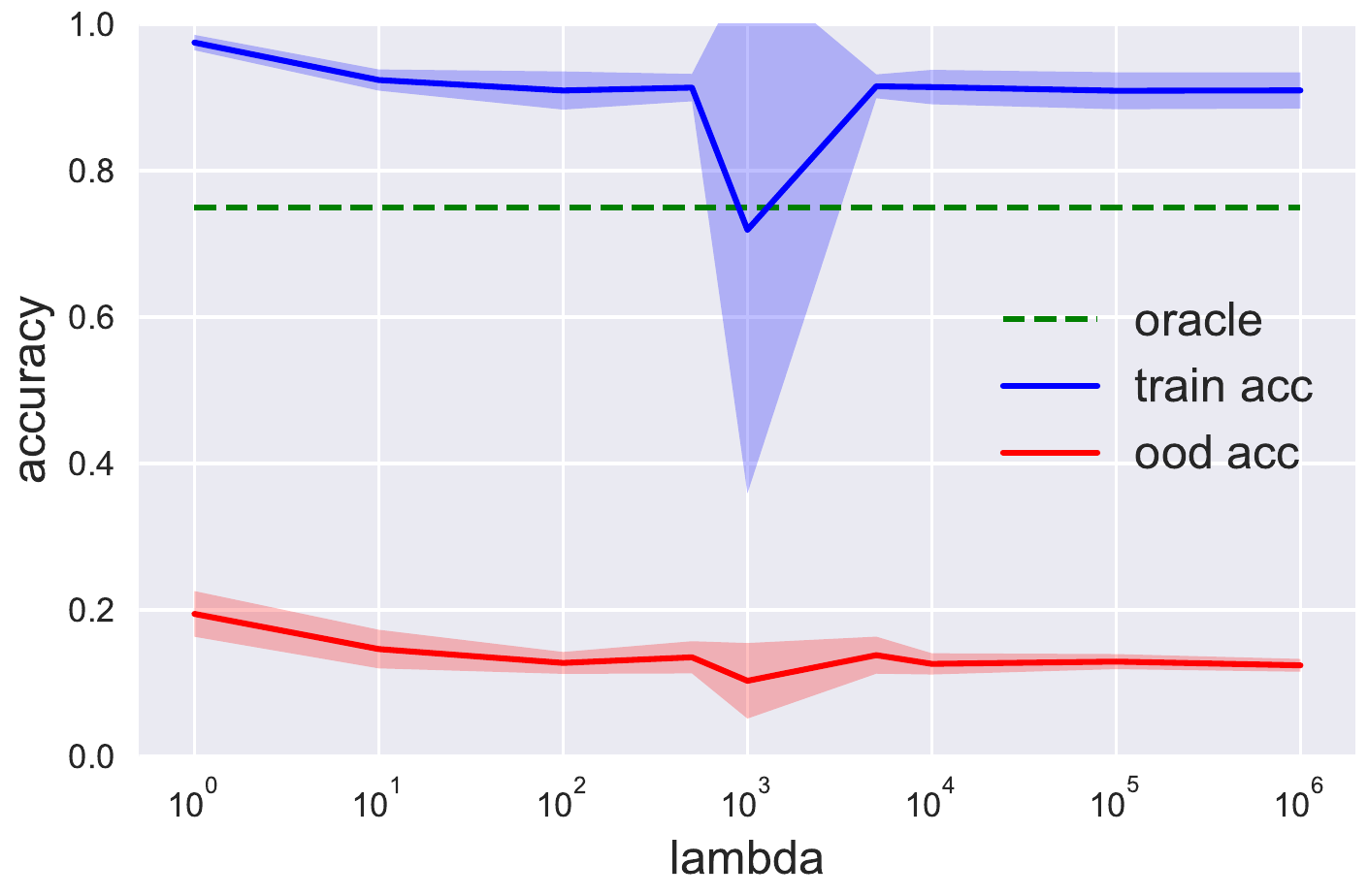}}{accuracy}
\end{center}
\end{minipage}
\begin{minipage}[t]{0.275\hsize}
\begin{center}
\hspace{-0.15cm}
\stackunder[5pt]{\includegraphics[width=\linewidth]{./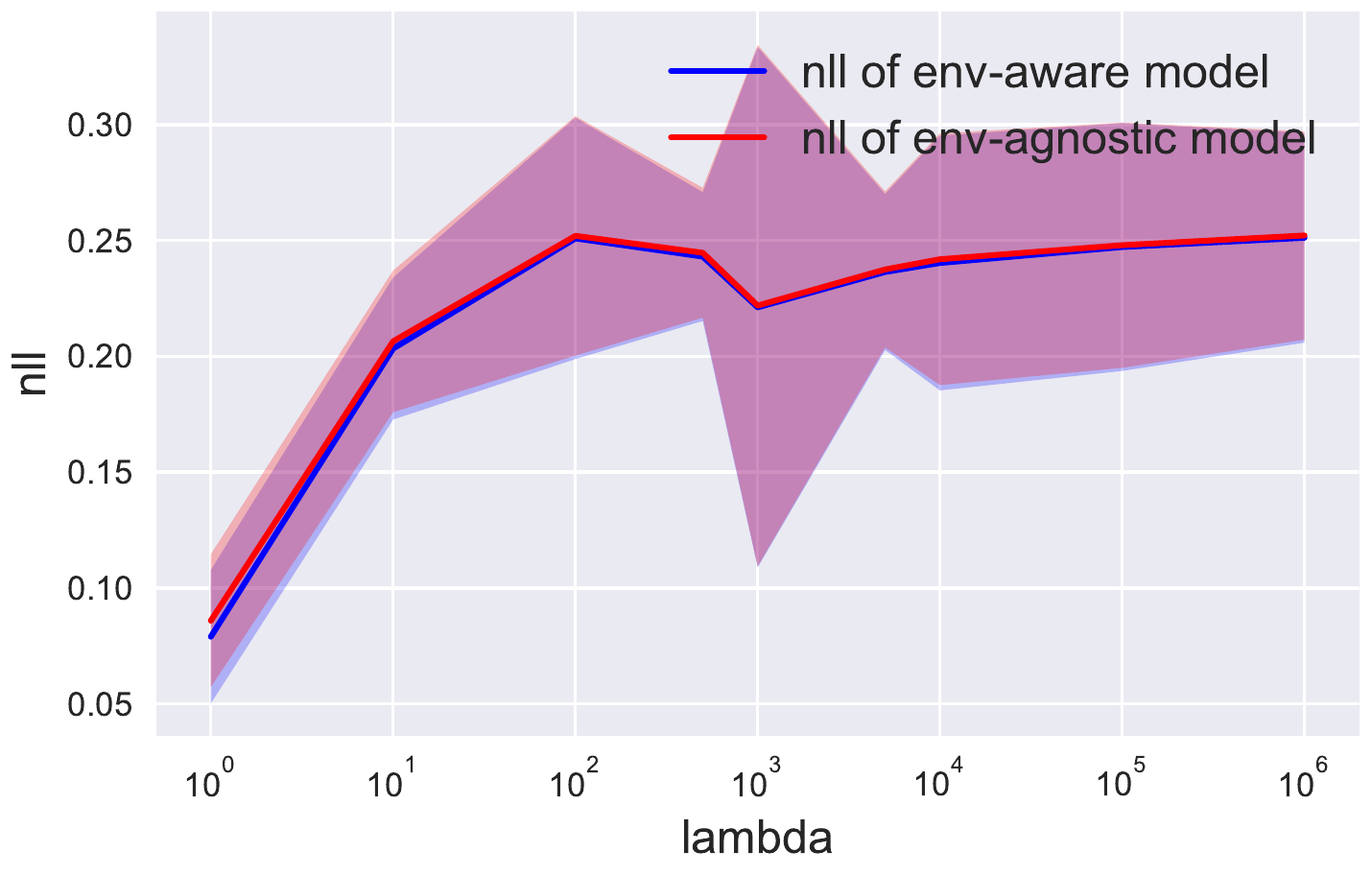}}{The final nll}
\end{center}
\end{minipage}
\caption{The two phase training results for \textit{Extended Colored MNIST}($p_0 = 0.25, p_1=0.75$) with MLP + BN encoder and MLP + BN predictor}
\label{fig:ir_ecmnist_result_app_075_bn}
\end{figure*}

\begin{figure*}[h]
\centering
\begin{minipage}[t]{0.275\hsize}
\begin{center}
\hspace{-0.3cm}
\stackunder[5pt]{\includegraphics[width=\linewidth]{./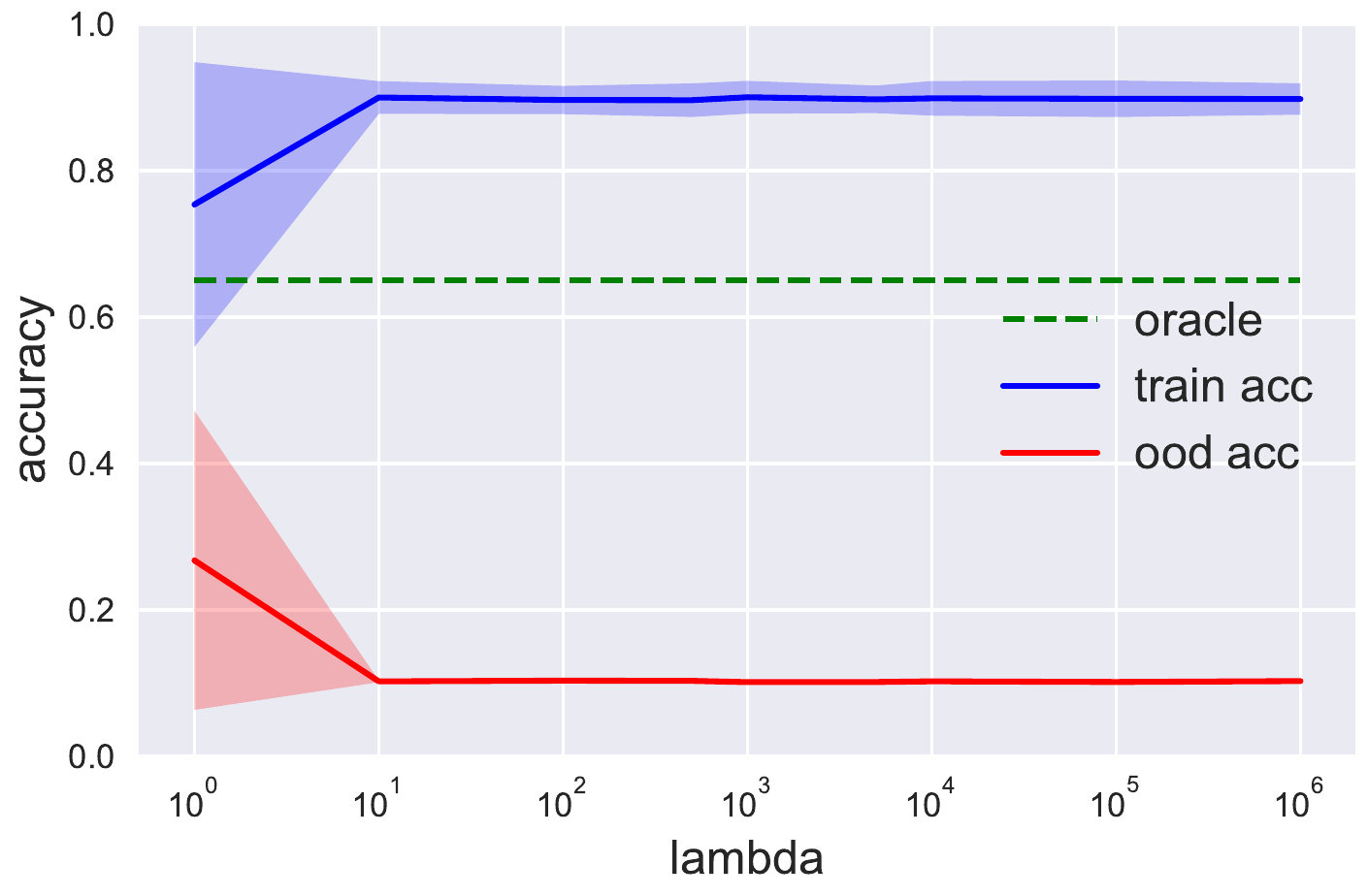}}{accuracy}
\end{center}
\end{minipage}
\begin{minipage}[t]{0.275\hsize}
\begin{center}
\hspace{-0.15cm}
\stackunder[5pt]{\includegraphics[width=\linewidth]{./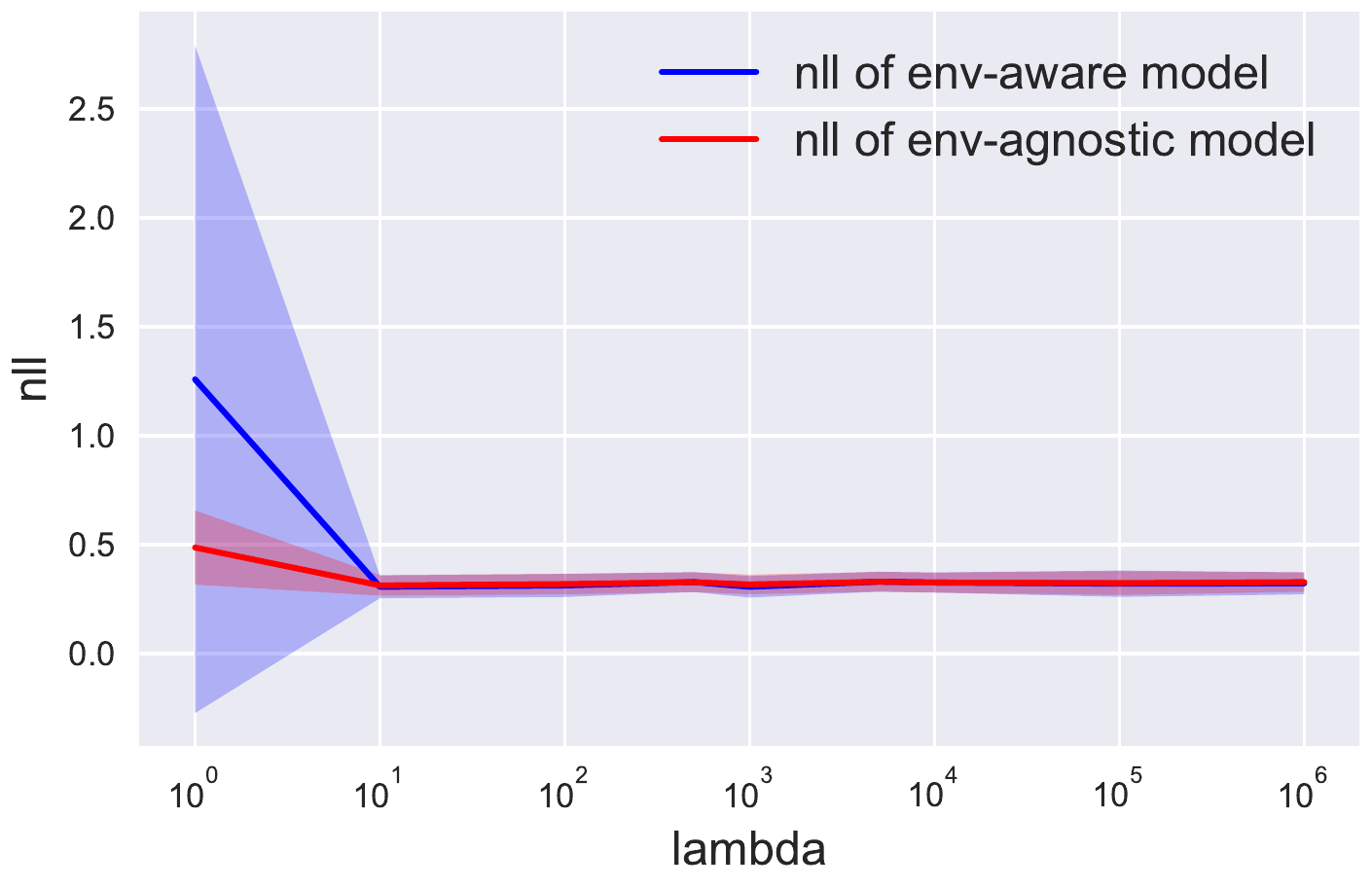}}{The final nll}
\end{center}
\end{minipage}
\caption{The two phase training results for \textit{Extended Colored MNIST}($p_0 = 0.25, p_1=0.65$) with MLP encoder and MLP predictor}
\label{fig:ir_ecmnist_result_app_065}
\end{figure*}

\begin{figure*}[h]
\centering
\begin{minipage}[t]{0.275\hsize}
\begin{center}
\hspace{-0.3cm}
\stackunder[5pt]{\includegraphics[width=\linewidth]{./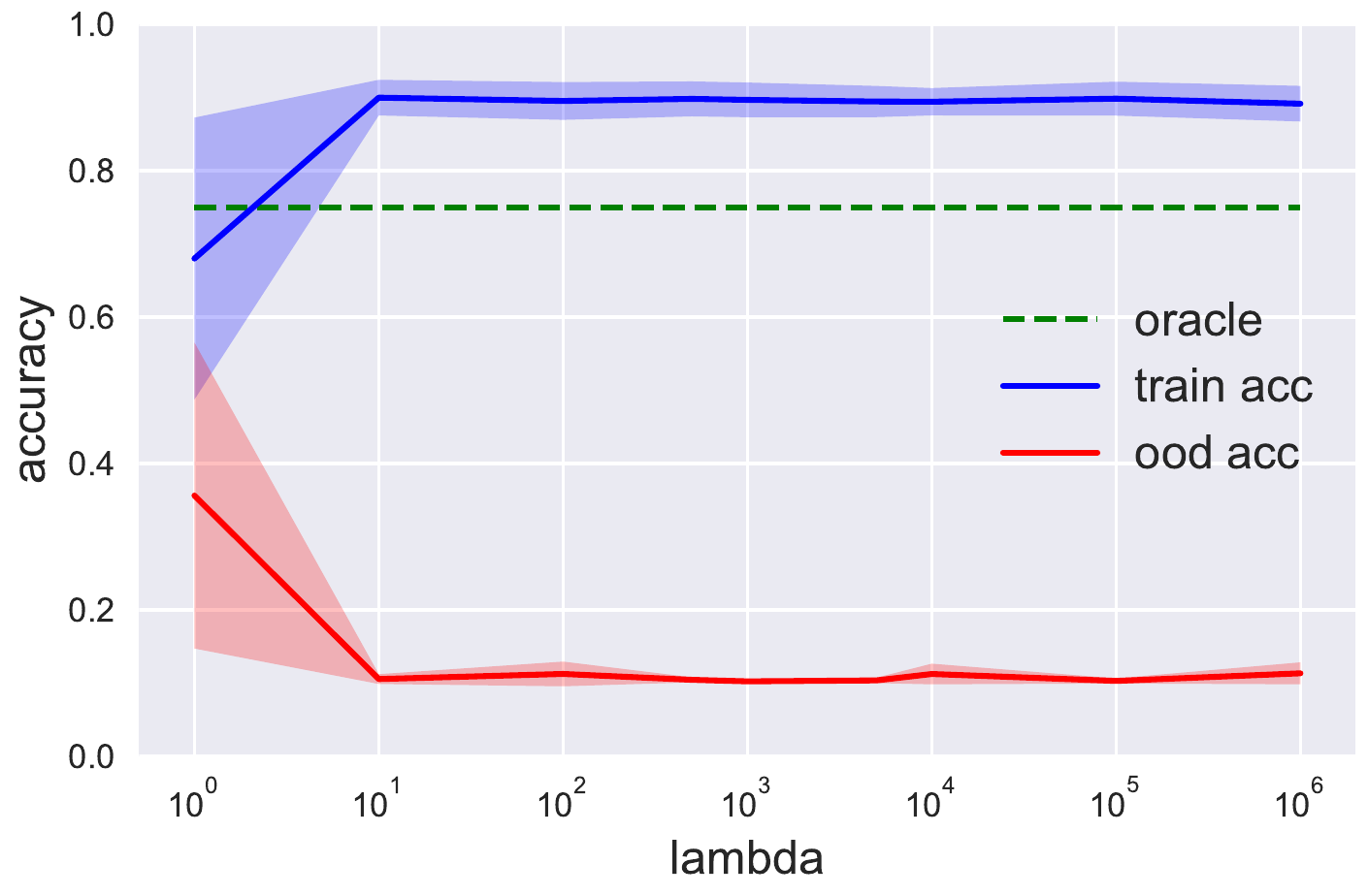}}{accuracy}
\end{center}
\end{minipage}
\begin{minipage}[t]{0.275\hsize}
\begin{center}
\hspace{-0.15cm}
\stackunder[5pt]{\includegraphics[width=\linewidth]{./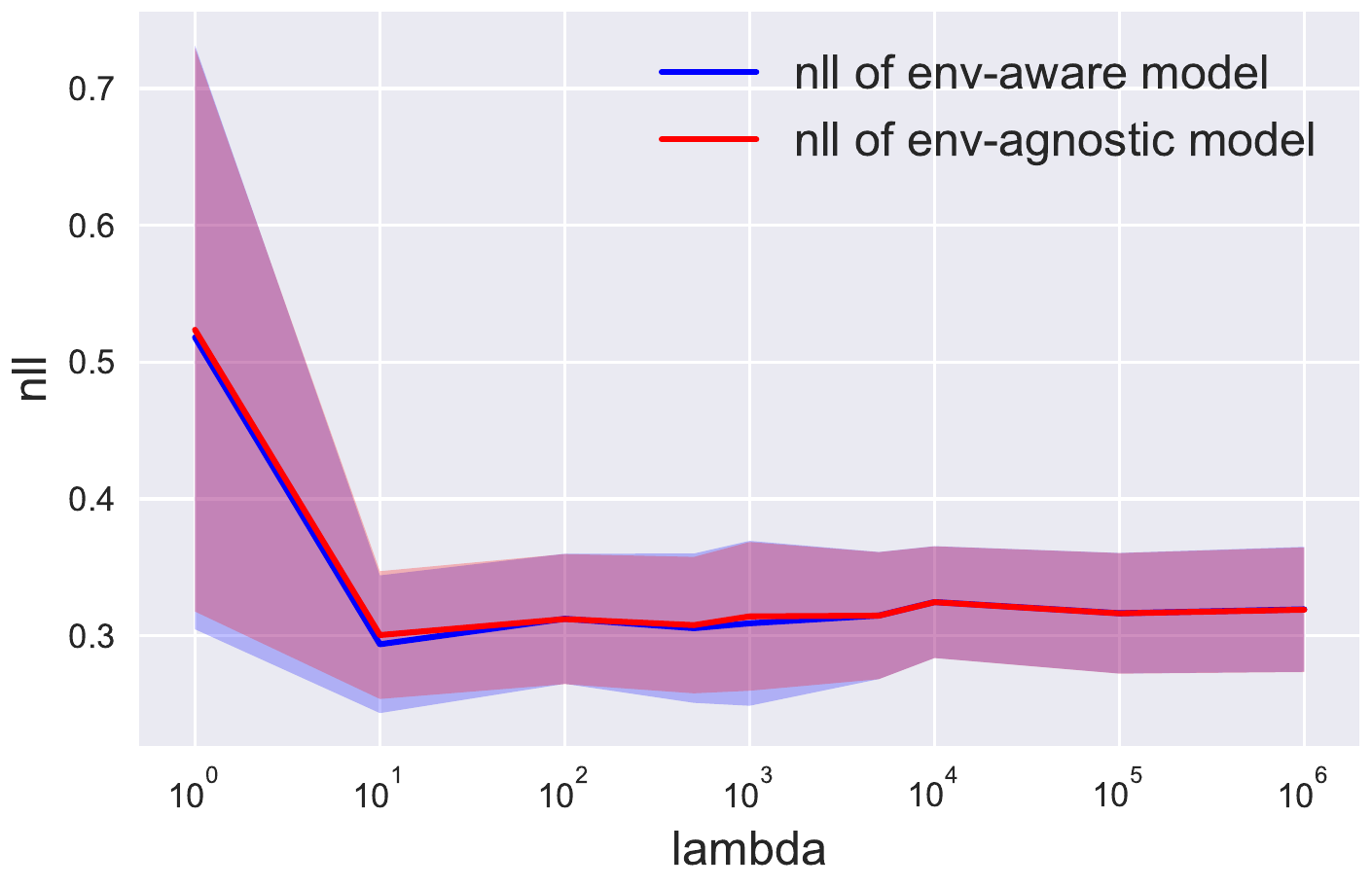}}{The final nll}
\end{center}
\end{minipage}
\caption{The two phase training results for \textit{Extended Colored MNIST}($p_0 = 0.25, p_1=0.75$) with MLP encoder and MLP predictor}
\label{fig:ir_ecmnist_result_app_075}
\end{figure*}

\end{document}